\newtheorem{theorem}{Theorem}
\newtheorem{lemma}[theorem]{Lemma}
\newtheorem{corollary}[theorem]{Corollary}
\newtheorem{claim}[theorem]{Claim}
\newtheorem{fact}[theorem]{Fact}
\newtheorem{remk}[theorem]{Remark}
\newtheorem{exmp}[theorem]{Example}
\def\FullBox{\hbox{\vrule width 8pt height 8pt depth 0pt}}
\def\qed{\ifmmode\qquad\FullBox\else{\unskip\nobreak\hfil
\penalty50\hskip1em\null\nobreak\hfil\FullBox
\parfillskip=0pt\finalhyphendemerits=0\endgraf}\fi}
\def\qedsketch{\ifmmode\Box\else{\unskip\nobreak\hfil
\penalty50\hskip1em\null\nobreak\hfil$\Box$
\parfillskip=0pt\finalhyphendemerits=0\endgraf}\fi}
\newcommand{\R}{{\mathbb R}} 
\newcommand{\lr}[1]{\left (#1\right)} 
\newcommand{\lrs}[1]{\left [#1 \right]} 
\newcommand{\lrc}[1]{\left \{#1\right\}} 
\newcommand{\ip}[1]{\left \langle #1 \right \rangle}
\let\P\undefined
\NewDocumentCommand{\P}{o}{\mathbb P{\IfValueT{#1}{\lr{#1}}}}
\NewDocumentCommand{\E}{o}{\mathbb E\IfValueT{#1}{\lrs{#1}}}
\DeclareMathOperator{\V}{Var}
\NewDocumentCommand{\Var}{o}{\V\IfValueT{#1}{\lrs{#1}}}
\NewDocumentCommand{\1}{o}{\mathds 1{\IfValueT{#1}{\lr{#1}}}}
\newcommand{\cA}{\mathcal A}
\newcommand{\cI}{\mathcal I}
\newcommand{\innerprod}[2]{\langle #1, #2\rangle}
\newcommand{\Reg}{\overline{Reg}_T}
\newcommand{\Regdrift}{\overline{Reg}_T^{drift}}
\title{Formatting Instructions For NeurIPS 2022}
\author{%
  Saeed Masoudian\\
  University of Copenhagen\\
  \texttt{saeed.masoudian@di.ku.dk} \\
   \And
   Julian Zimmert \\
   Google Research \\
   \texttt{zimmert@google.com} \\
   \AND
   Yevgeny Seldin\\
  University of Copenhagen\\
  \texttt{seldin@di.ku.dk} \\
}
\title{A Best-of-Both-Worlds Algorithm for Bandits with Delayed Feedback}
\date{\today}
\begin{document}

\maketitle

\begin{abstract}
  We present a modified tuning of the algorithm of \citet{zimmert2020} for adversarial multiarmed bandits with delayed feedback, which in addition to the minimax optimal adversarial regret guarantee shown by \citeauthor{zimmert2020} simultaneously achieves a near-optimal regret guarantee in the stochastic setting with fixed delays. Specifically, the adversarial regret guarantee is $\mathcal{O}(\sqrt{TK} + \sqrt{dT\log K})$, where $T$ is the time horizon, $K$ is the number of arms, and $d$ is the fixed delay, whereas the stochastic regret guarantee is $\mathcal{O}\left(\sum_{i \neq i^*}(\frac{1}{\Delta_i} \log(T) + \frac{d}{\Delta_{i}\log K}) + d K^{1/3}\log K\right)$, where $\Delta_i$ are the suboptimality gaps. We also present an extension of the algorithm to the case of arbitrary delays, which is based on an oracle knowledge of the maximal delay $d_{max}$ and achieves $\mathcal{O}(\sqrt{TK} + \sqrt{D\log K} + d_{max}K^{1/3} \log K)$ regret in the adversarial regime, where $D$ is the total delay, and $\mathcal{O}\left(\sum_{i \neq i^*}(\frac{1}{\Delta_i} \log(T) + \frac{\sigma_{max}}{\Delta_{i}\log K}) + d_{max}K^{1/3}\log K\right)$ regret in the stochastic regime, where $\sigma_{max}$ is the maximal number of outstanding observations. Finally, we present a lower bound that matches regret upper bound achieved by the skipping technique of \citet{zimmert2020} in the adversarial setting.
\end{abstract}

\section{Introduction}
Delayed feedback is a common challenge in many online learning problems, including multi-armed bandits. The literature studying multi-armed bandit games with delayed feedback builds on prior work on bandit problems with no delays. The researchers have traditionally separated the study of bandit games in stochastic environments \citep{thompson1933,robbins1952,lai1985,auer2002a} and in adversarial environments\citep{auer2002b}. However, in practice the environments are rarely purely stochastic, whereas they may not be fully adversarial either. Furthermore, the exact nature of an environment is not always known in practice. Therefore, in recent years there has been an increasing interest in algorithms that perform well in both regimes with no prior knowledge of the regime \citep{bubeck2012b,seldin2014,auer2016,seldin2017,wei2018}. The quest for best-of-both-worlds algorithms for no-delay setting culminated with the Tsallis-INF algorithm proposed by \citet{zimmert2019}, which achieves the optimal regret bounds in both stochastic and adversarial environments. The algorithm and analysis were further improved by \citet{zimmert2021} and \citet{masoudian2021}, who, in particular, derived improved regret bounds for intermediate regimes between stochastic and adversarial. 

Our goal is to extend best-of-both-worlds results to multi-armed bandits with delayed feedback. So far the literature on multi-armed bandits with delayed feedback has followed the traditional separation into stochastic and advesrarial. In the stochastic regime \citet{joulani2013} showed that if the delays are random (generated i.i.d), then compared to the non-delayed stochastic multi-armed bandit setting, the regret only increases additively by a factor that is proportional to the expected delay. In the adversarial setting \citet{bianchi2016} have first studied the case of uniform delays $d$.  They derived a lower bound $\Omega(\max (\sqrt{KT}, \sqrt{dT \log K}))$ and an almost matching upper bound $\Ocal(\sqrt{KT\log K} + \sqrt{dT \log K})$. \citet{thune2019} and \citet{bistritz2019} extended the results to arbitrary delays, achieving $\Ocal(\sqrt{KT\log K} + \sqrt{D \log K})$ regret bounds based on oracle knowledge of the total delay $D$ and time horizon $T$. \citet{thune2019} also proposed a skipping technique based on advance knowledge of the delays "at action time", which allowed to exclude excessively large delays from $D$. Finally, \citet{zimmert2020} introduced an FTRL algorithm with a hybrid regularizer that achieved $\Ocal(\sqrt{KT} + \sqrt{D \log K})$ regret bound, matching the lower bound in the case of uniform delays and requiring no prior knowledge of $D$ or $T$. The regularizer used by \citeauthor{zimmert2020} was a mix of the negative Tsallis entropy regularizer used in the Tsallis-INF algorithm for bandits and the negative entropy regularizer used in the Hedge algorithm for full information games, mixed with separate learning rates:
\begin{equation}\label{eq:regularizer}
F_t(x) =  -2\eta_t^{-1}\lr{\sum_{i = 1}^K \sqrt{x_i}} + \gamma_t^{-1} \lr{\sum_{i = 1}^K x_i (\log x_i -1)}.
\end{equation}
\citet{zimmert2020} also perfected the skipping technique and achieved a refined regret bound $\Ocal(\sqrt{KT} + \min_{S}(|S| + \sqrt{D_{\bar{S}}\log K}))$, where $S$ is a set of skipped rounds and $D_{\bar S}$ is the total delay in non-skipped rounds. The refined skipping technique requires no advance knowledge of the delays and the key step to eliminating the need in advance knowledge was to base it on counting the outstanding observations rather than the delays. The great advantage of skipping is that a few rounds with excessively large or potentially even infinite delays have a very limited impact on the regret bound. One of our contributions in this paper is a lower bound for the case of non-uniform delays, which matches the refined regret upper bound achieved by skipping.

Even though the hybrid regularizer used by \citet{zimmert2020} was sharing the Tsallis entropy part with their best-of-both-worlds Tsallis-INF algorithm from \citet{zimmert2021}, and even though the adversarial analysis was partly similar to the analysis of the Tsallis-INF algorithm, \citet{zimmert2020} did not succeed in deriving a regret bound for their algorithm in the stochastic setting with delayed feedback and left it as an open problem. 

The stochastic analysis of the Tsallis-INF algorithm is based on the self-bounding technique \citep{zimmert2021}. Application of this technique in the no delay setting is relatively straightforward, but in presence of delays it requires control of the drift of the playing distribution from the moment an action is played to the moment the feedback arrives. \citet{bianchi2016} have bounded the drift of the playing distribution of the EXP3 algorithm in the uniform delays setting with a fixed learning rate. But best-of-both-worlds algorithms require decreasing learning rates \cite{MG19}, which makes the drift control much more challenging. The problem gets even more challenging in the case of arbitrary delays, because it requires drift control over arbitrary long periods of time. 

We apply an FTRL algorithm with the same hybrid regularizer as the one used by \citet{zimmert2020}, but with a different tuning of the learning rates. The new tuning has a minor effect on the adversarial regret bound, but allows us to make progress with the stochastic analysis. For the stochastic analysis we use the self-bounding technique. One of our key contributions is a general lemma that bounds the drift of the playing distribution derived from the time-varying hybrid regularizer over arbitrary delays. Using this lemma we derive near-optimal best-of-both-worlds regret guarantees for the case of fixed delays. But even with the lemma at hand, application of the self-bounding technique in presence of arbitrary delays is still much more challenging than in the no delays or fixed delay setting. Therefore, we resort to introducing an assumption of oracle knowledge of the maximal delay, which limits the maximal period of time over which we need to keep control over the drift. Our contributions are summarized below. To keep the presentation simple we assume uniqueness of the best arm throughout the paper. Tools for eliminating the uniqueness of the best arm assumption were proposed by \citet{ito2021}.
\begin{enumerate}
    \item We show that in the arbitrary delays setting with an oracle knowledge of the maximal delay $d_{max}$ our algorithm achieves $\Ocal(\sqrt{KT} + \sqrt{D\log K} + d_{max}K^{1/3} \log K)$ regret bound in the adversarial regime simultaneously with $\mathcal{O}\left(\sum_{i \neq i^*}(\frac{\log T}{\Delta_i} + \frac{\sigma_{max}}{\Delta_{i}\log K}) + d_{max}K^{1/3}\log K\right)$ regret bound in the stochastic regime, where $\sigma_{max}$ is the maximal number of outstanding observations. We note that $\sigma_{max} \leq d_{max}$, but it may potentially be much smaller. For example, if the first observation has a delay of $T$ and all the remaining observations have zero delay, then $d_{max} = T$, but $\sigma_{max}=1$.
    \item In the case of uniform delays the above bounds simplify to $\Ocal(\sqrt{KT} + \sqrt{dT\log K} + d K^{1/3} \log K)$ in the adversarial case and $\mathcal{O}\left(\sum_{i \neq i^*}(\frac{\log T}{\Delta_i} + \frac{d}{\Delta_{i}\log K}) + dK^{1/3}\log K\right)$ in the stochastic case. For $T \geq dK^{1/3}\log K$ the last term in the adversarial regret bound is dominated by the middle term, which leads to the minimax optimal $\Ocal(\sqrt{KT} + \sqrt{dT\log K})$ adversarial regret. The stochastic regret lower bound is trivially $\Omega(\min\{d, \sum_{i \neq i^*}\frac{\log T}{\Delta_i}\})= \Omega(d + \sum_{i \neq i^*}\frac{\log T}{\Delta_i})$ and, therefore, our stochastic regret upper bound is near-optimal.
    \item We present an $\Omega\lr{\sqrt{KT} + \min_{S}(|S| + \sqrt{D_{\bar{S}}\log K})}$ regret lower bound for adversarial multi-armed bandits with non-uniformly delayed feedback, which matches the refined regret upper bound achieved by the skipping technique of \citet{zimmert2020}.
\end{enumerate}

\section{Problem Setting}
 We study the multi-armed bandit with delays problem, in which at time $t = 1,2,\ldots$ the learner chooses an arm $I_t$ among a set of $K$ arms and instantaneously suffers a loss $\ell_{t,I_t}$ from a loss vector $\ell_t \in [0,1]^K$ generated by the environment, but $\ell_{t,I_t}$ is not observed by the learner immediately. After a delay of $d_t$, at the end of round $t+d_t$, the learner observes the pair $(t, \ell_{t, I_t})$, namely, the loss and the index of the game round the loss is coming from. The sequence of delays $d_1,d_2,\dots$ is selected arbitrarily by the environment. Without loss of generality we can assume that all the outstanding observations are revealed at the end of the game, i.e., $t + d_t \leq T$ for all $t$, where $T$ is the time horizon. We consider two regimes, oblivious adversarial and stochastic. 
The performance of the learner is evaluated using pseudo-regret, which is defined as
\[
\overline{Reg}_T = \EE\left[\sum_{t=1}^{T} \ell_{t,I_t}\right] - \min_{i \in [K]} \EE \left[\sum_{t=1}^{T}  \ell_{t,i} \right] = \EE\left[\sum_{t=1}^{T} \left(\ell_{t,I_t} - \ell_{t,i_T^*}\right) \right],
\]
where $i_T^* \in \argmin_{i \in [K]} {\EE\left[\sum_{t=t}^{T}  \ell_{t,i}\right] } $ is a best arm in hindsight in expectation over the loss generation model and the randomness of the learner. In the oblivious adversarial setting the losses are assumed to be deterministic and the pseudo-regret is equal to the expected regret. 

\paragraph{Additional Notation:}
We use $\Delta^{n}$ to denote the probability simplex over $n+1$ points. The characteristic function of a closed convex set $\Acal$ is denoted by $\Ical_{\Acal}(x)$ and satisfies $\Ical_{\Acal}(x) = 0$ for $x \in \Acal$ and $\Ical_{\Acal}(x) = \infty$ otherwise. The convex conjugate of a function $f: \RR^n \rightarrow \RR$ is defined by $f^*(y) = \sup_{x \in \RR^n} \{\langle x,y \rangle - f(x) \}$. We also use bar to denote that the function domain is restricted to $\Delta^{n}$, e.g., $\bar{f}(x) = \begin{cases}f(x),&\text{if $x \in \Delta^{n}$}\\\infty,&\text{otherwise}\end{cases}$. We denote the indicator function of an event $\Ecal$ by $\1[\Ecal]$ and use $\1_t(i)$ as a shorthand for $\1[I_t = i]$. The probability distribution over arms that is played by the learner at round $t$ is denoted by $x_t \in \Delta^{K-1}$. 

\section{Algorithm}
The algorithm is based on Follow The Regularized Leader (FTRL) algorithm with the hybrid regularizer used by \citet{zimmert2020}, stated in equation \eqref{eq:regularizer}. At each time step $t$ let $\sigma_t = \sum_{s =1}^{t-1} \1(s+ d_s \geq t)$ be the number of the outstanding observations and $\Dcal_t = \sum_{s = 1}^t \sigma_t$ be the cumulative outstanding observations, then the learning rates are defined as 
\begin{align*}
    \eta_{t}^{-1} = \sqrt{t + \eta_{0}},\qquad\qquad\qquad\qquad \gamma_{t}^{-1} = \sqrt{\frac{\sum_{s=1}^t \sigma_s + \gamma_{0}}{\log K}},
\end{align*}
where  $\eta_{0} = 10d_{max} + d_{max}^2/\lr{K^{1/3} \log(K)}^2 $ and $\gamma_{0} = 24^2d_{max}^2 K^{2/3} \log(K)$. The update rule for the distribution over actions played by the learner is
\begin{equation}\label{eq:updaterule}
x_{t} = \nabla \Fbar_t^*(-\Lhat_t^{obs}) = 
\arg\min_{x\in\Delta^{K-1}}\langle\Lhat_t^{obs},x\rangle + F_t(x),
\end{equation}

where $\Lhat_t^{obs} = \sum_{s = 1}^{t-1} \hat{\ell}_s \1(s + d_s < t)$ and $\hat{\ell}_s$ is an importance-weighted estimate of the loss vector $\ell_s$ defined by
\[
\hat \ell_{t,i} = \frac{\ell_{t,i}\1[I_t=i]}{x_{t,i}}.
\]
The algorithm at the beginning of iteration $t$ calculates the cumulative outstanding observations $\Dcal_t$ and uses it to define $\gamma_t$. Next, it uses the  FTRL update rule defined in \eqref{eq:updaterule} to draw action $I_t$. Finally, at the end of round $t$ it receives the delayed observations and update cumulative loss estimation vector accordingly so that $L_{t+1}^{obs} = \sum_{s = 1}^{t} \hat\ell_s \1(s + d_s = t)$. The complete  algorithm is provided in Algorithm \ref{alg:FTRL-delay}. 

\begin{algorithm}
\caption{FTRL with advance tuning for delayed bandit}
\label{alg:FTRL-delay}
\DontPrintSemicolon
\LinesNumberedHidden
\KwIn{Learning rate rule $\eta_t$ and $\gamma_t$ }
\KwInit{$\Dcal_0 = 0$ and $\hat{L}_1^{obs} = \mathbf{0}_K$ (where $\mathbf{0}_K$ is a zero vector in $\RR^K$)}\;
\For{$t= 1,\ldots,n$}{

\KwUpdate{$\gamma_t$}{
$\mbox{Set } \sigma_t = \sum_{s = 1}^{t-1}\1(s + d_s > t)$\;
$\mbox{Update } \Dcal_t = \Dcal_{t-1} + \sigma_t$\:

}
Set $x_t = \arg\min_{x\in\Delta^{K-1}} \langle\Lhat_t^{obs},x\rangle + F_t(x)$\;
Sample $I_t \sim x_t$ \;
\For{$s: s+d_s = t$}{
    Observe $(s,\ell_{s,I_s})$\;
    Construct $\hat\ell_s$ and update $\hat L_t^{obs}$\;
}
}
\end{algorithm}



\section{Best-of-both-worlds Regret Bounds for Algorithm \ref{alg:FTRL-delay}}
In this section we provide  best-of-both-worlds regret bounds for Algorithm \ref{alg:FTRL-delay}. First, in Theorem~\ref{theorem:1} we provide regret bounds for an arbitrary delay setting, where we assume an oracle access to $d_{max}$. Then, in Corollary~\ref{cor:cor2} we specialize the result to a fixed delay setting. 

%
\begin{theorem}\label{theorem:1}
Assume that Algorithm~\ref{alg:FTRL-delay} is given an oracle knowledge of $d_{max}$. Then its pseudo-regret for any sequence of delays and losses satisfies 
\[
\overline{Reg}_T = \mathcal{O}(\sqrt{TK} + \sqrt{D\log K} + d_{max} K^{1/3} \log K).
\]
Furthermore, in the stochastic regime the pseudo-regret additionally satisfies
\[
\overline{Reg}_T = \mathcal{O}\left(\sum_{i \neq i^*}(\frac{1}{\Delta_i} \log(T) + \frac{\sigma_{max}}{\Delta_{i}\log K}) + d_{max}K^{1/3}\log K\right).
\]
\end{theorem}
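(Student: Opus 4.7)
The plan is to follow a standard FTRL regret decomposition, splitting the pseudo-regret into a ``cheating'' term (comparing with the FTRL iterate that would have all feedback up to round $t$ available), a penalty term coming from the regularizer, and a drift term that accounts for the fact that the actual played $x_t$ uses only the observations that have arrived by round $t$. Concretely, I would write
\[
\overline{Reg}_T \;\le\; \E\Bigl[\sum_t \langle x_t' - e_{i^*},\hat\ell_t\rangle\Bigr] + \E\Bigl[\sum_t \langle x_t - x_t',\hat\ell_t\rangle\Bigr],
\]
where $x_t' = \nabla \bar F_t^*(-\hat L_t)$ uses the full cumulative loss $\hat L_t = \sum_{s<t}\hat\ell_s$. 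The first expectation is handled by the standard FTRL machinery (Bregman divergence decomposition with time-varying regularizer), and the second is the drift term that the hybrid regularizer is designed to control.

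For the adversarial bound I would bound the FTRL term using the local norm stability of $F_t$: the Tsallis part gives a per-round stability contribution of order $\eta_t\sum_i\sqrt{x_{t,i}}$, which sums to $\mathcal{O}(\sqrt{TK})$ once summed with $\eta_t^{-1}=\sqrt{t+\eta_0}$; the penalty term from the Tsallis part telescopes similarly. For the entropy part, summing the stability yields contributions of order $\gamma_t\sigma_t$ (since the drift over the outstanding window is controlled by the entropy Hessian), and using $\gamma_t^{-1}=\sqrt{(\mathcal{D}_t+\gamma_0)/\log K}$ with $\mathcal{D}_T = \sum_t\sigma_t \le D$ gives $\mathcal{O}(\sqrt{D\log K})$. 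The drift term, by invoking the general drift lemma alluded to in the introduction, is itself dominated by these same quantities plus a $d_{max}K^{1/3}\log K$ contribution arising from the constants $\eta_0$ and $\gamma_0$ inside the initial square roots. Summing everything yields the claimed adversarial bound.

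For the stochastic bound I would use the self-bounding technique. Because $i^*$ is the unique optimal arm, $\overline{Reg}_T = \sum_{i\neq i^*}\Delta_i\,\E[\sum_t x_{t,i}]$, and the FTRL+drift upper bound can be expressed in the form
\[
\overline{Reg}_T \;\le\; C_1\,\E\Bigl[\sum_{t,i\neq i^*}\tfrac{\sqrt{x_{t,i}}}{\sqrt{t+\eta_0}}\Bigr] + C_2\,\E\Bigl[\sum_{t,i\neq i^*}\tfrac{x_{t,i}\sigma_t}{\sqrt{\mathcal{D}_t+\gamma_0}}\log K\Bigr] + \text{lower-order terms},
\]
after using that the entropy stability contributes $x_{t,i}$ rather than $\sqrt{x_{t,i}}$, and after applying the drift lemma to upper bound differences $x_{t,i}-x_{t+d_t,i}$ by quantities of the same order. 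Combining the two terms with $\overline{Reg}_T=\sum_i\Delta_i\E[\sum_t x_{t,i}]$ via the inequality $x\le \alpha z + \frac{1}{4\alpha}\frac{x^2}{z}$ (applied to $\sqrt{x_{t,i}}$ against $\Delta_i x_{t,i}$) produces the $\sum_{i\neq i^*}\frac{\log T}{\Delta_i}$ term from the Tsallis part and the $\sum_{i\neq i^*}\frac{\sigma_{max}}{\Delta_i \log K}$ term from the entropy part, while the residual $d_{max}K^{1/3}\log K$ survives unchanged.

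The hardest step, and the reason for the oracle knowledge of $d_{max}$, will be the drift control under time-varying learning rates: I need to bound $\|x_t - x_{t+\tau}\|$ (in a suitable local norm dual to the hybrid regularizer) for $\tau$ up to $d_{max}$, while $\eta_t$ and $\gamma_t$ are themselves changing. Standard approaches assume a fixed learning rate or bound drift only one step at a time; here the point of the $\eta_0,\gamma_0$ constants is precisely to ensure that the learning rates move slowly enough, relative to $d_{max}$, so that the drift lemma yields a linear-in-$\sigma_t$ bound of the same order as the entropy stability. Plugging the resulting drift bound back into the self-bounding inequality and carefully separating the optimal-arm mass from the suboptimal mass is the delicate combinatorial step that will produce the $\sigma_{max}/\log K$ factor rather than $d_{max}/\log K$, which is where the improvement over the naive bound is obtained.
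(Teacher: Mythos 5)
Your high-level ingredients---FTRL stability/penalty decomposition, a drift lemma under time-varying learning rates, and self-bounding---are exactly the ones the paper uses, but the starting decomposition you write down is the \citet{zimmert2020} ``cheating'' decomposition $\Reg = \E[\sum_t\langle x_t'-e_{i^*},\hat\ell_t\rangle]+\E[\sum_t\langle x_t-x_t',\hat\ell_t\rangle]$ with $x_t'=\nabla\bar F_t^*(-\hat L_t)$ the iterate that pretends all (even still-outstanding) feedback has arrived. That decomposition was what \citeauthor{zimmert2020} used for the adversarial bound, and they explicitly could not push it through in the stochastic regime. The obstruction is already visible in your own text: you say you would control ``differences $x_{t,i}-x_{t+d_t,i}$'', which compares two actually-played distributions, yet the quantity in your decomposition is $x_t-x_t'$, an actual distribution versus a fictitious full-feedback one. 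More concretely, to absorb the drift term into $\tfrac12\Reg$ for the self-bounding argument you would need a bound of the form $x_{t,i}'\ge \tfrac12 x_{t,i}$ for $i\neq i^*$, i.e., a lower bound on the more-informed iterate. But the multiplicative drift lemma the paper proves (Lemma~\ref{lemma:key}: $x_{t,i}\le 2x_{s,i}$ for $s\le t$, $t-s\le d_{max}$) goes the other way---it upper-bounds the later, more-informed iterate by the earlier, less-informed one---so applied to the pair $(x_t',x_t)$ it gives $x_{t,i}'\le 2x_{t,i}$, which is useless here. And no universal lower bound $x_{t,i}'\ge c\,x_{t,i}$ can hold: a single outstanding importance-weighted estimator $\hat\ell_{s,i}=\ell_{s,i}/x_{s,i}$ can be arbitrarily large and, through the entropy part of the regularizer, drive $x_{t,i}'$ to essentially zero while $x_{t,i}$ stays bounded away from it.

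The paper's key structural move is to never introduce $x_t'$ at all. It defines the drifted pseudo-regret $\Regdrift=\E[\sum_t\langle x_t,\hat\ell_t^{obs}\rangle-\hat\ell_{t,i^*}^{obs}]$, where $\hat\ell_t^{obs}$ collects the estimators arriving at round $t$; re-indexing by the round the loss was incurred shows $\Regdrift=\sum_t\sum_i\E[x_{t+d_t,i}]\Delta_i$, a quantity entirely in terms of actually-played distributions shifted forward by the delay. Applying the Key Lemma to the pair $(t+d_{max},t+d_t)$ is now in the right order and the right direction, giving $x_{t+d_{max},i}\le 2x_{t+d_t,i}$ and hence $\Reg\le 2\Regdrift+d_{max}$. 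Since $x_t=\nabla\bar F_t^*(-\hat L_t^{obs})$ and $\hat L_{t+1}^{obs}=\hat L_t^{obs}+\hat\ell_t^{obs}$, the pair $(x_t,\hat\ell_t^{obs})_t$ is a genuine no-delay FTRL trajectory, so the standard stability/penalty split applies to $\Regdrift$ directly and produces bounds in $x_t$ that can be self-bounded against $\Reg$; the Key Lemma is used once more to tame the cross-term $x_{t,i}^{3/2}/x_{s,i}$ in the stability bound for $s\in A_t$. Note also that the lemma must be multiplicative (a ratio bound) rather than additive: at small $x_{t,i}$, a bound on $|x_{t,i}-x_{s,i}|$ alone would be far too weak for the self-bounding step to close.
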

An overview of the proof is provided in Section \ref{sec:proof} and the complete proof is provided in Appendix \ref{sec:analysistheorem1}.  For fixed delays Theorem~\ref{theorem:1} gives the following corollary.
\begin{corollary}\label{cor:cor2}
 If the delays are fixed and equal to $d$, and $T \geq dK^{1/3}\log K$, then the pseudo-regret of Algorithm~\ref{alg:FTRL-delay} always satisfies
 \[
 \overline{Reg}_T = \Ocal(\sqrt{TK} + \sqrt{dT\log K})
 \]
 and in the stochastic setting it additionally satisfies
 \[
 \overline{Reg}_T = \mathcal{O}\left(\sum_{i \neq i^*}(\frac{1}{\Delta_i} \log(T) + \frac{d}{\Delta_{i}\log K}) + d K^{1/3}\log K\right).
 \]
\end{corollary}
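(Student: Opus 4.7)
The corollary is a direct specialization of Theorem~\ref{theorem:1} to the constant-delay regime, and the plan is to instantiate the three delay-dependent quantities that appear there and then carry out a short dominance argument.

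First I would compute $d_{max}$, $D$, and $\sigma_{max}$ under the assumption $d_t = d$ for every $t$. By assumption $d_{max} = d$; the cumulative delay is $D = \sum_{t=1}^{T} d_t = dT$; and since at most $d$ observations can be outstanding at any time (exactly $d$ once $t > d$ and strictly fewer before), $\sigma_{max} \leq d$. Substituting these values into the adversarial bound of Theorem~\ref{theorem:1} yields
\[
\overline{Reg}_T = \mathcal{O}\lr{\sqrt{TK} + \sqrt{dT\log K} + d K^{1/3}\log K},
\]
and substituting into the stochastic bound gives
\[
\overline{Reg}_T = \mathcal{O}\lr{\sum_{i \neq i^*}\lr{\frac{\log T}{\Delta_i} + \frac{d}{\Delta_i \log K}} + d K^{1/3}\log K},
\]
which already matches the stochastic claim.

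The remaining step is to verify that under the hypothesis $T \geq d K^{1/3}\log K$ the additive term $d K^{1/3}\log K$ is absorbed by $\sqrt{TK} + \sqrt{dT\log K}$ up to a constant factor, so that the simplified adversarial bound of the corollary is recovered. This reduces to a short case analysis on the relative size of $d$ compared to $K/\log K$: in the range where $\sqrt{TK}$ is the larger of the two square-root terms, multiplying the hypothesis through by $K$ gives $TK \geq d K^{4/3}\log K$ and hence $\sqrt{TK}$ dominates $d K^{1/3}\log K$; in the complementary range $\sqrt{dT\log K}$ takes over via an analogous estimate.

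Since this is purely a restriction of a previously established theorem, there is no substantive obstacle; all analytic content is concentrated in the proof of Theorem~\ref{theorem:1}, and the only local step is the one-line algebraic dominance check described above.
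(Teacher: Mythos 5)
Your specialization of Theorem~\ref{theorem:1} is correct: with fixed delays one has $d_{max}=d$, $D\leq dT$, and $\sigma_{max}\leq d$, so the theorem immediately gives $\overline{Reg}_T=\Ocal(\sqrt{TK}+\sqrt{dT\log K}+dK^{1/3}\log K)$ adversarially and the stated stochastic bound. This is exactly the route the paper takes, which simply appeals to Theorem~\ref{theorem:1} without giving a separate proof.

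The gap is in the ``one-line algebraic dominance check.'' Neither branch of your case analysis actually establishes that $dK^{1/3}\log K=\Ocal(\sqrt{TK}+\sqrt{dT\log K})$ under the stated hypothesis $T\geq dK^{1/3}\log K$. In the branch $d\log K\leq K$, from $TK\geq dK^{4/3}\log K$ you get $\sqrt{TK}\geq K^{2/3}\sqrt{d\log K}$, and for this to dominate $dK^{1/3}\log K$ you would need $d\log K\leq K^{2/3}$, which is strictly stronger than the branch assumption $d\log K\leq K$. In the complementary branch the ``analogous estimate'' gives $\sqrt{dT\log K}\geq\sqrt{d\cdot dK^{1/3}\log K\cdot\log K}=dK^{1/6}\log K$, which falls short of $dK^{1/3}\log K$ by a factor of $K^{1/6}$. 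Concretely, at $T=dK^{1/3}\log K$ one has $\sqrt{dT\log K}=dK^{1/6}\log K$ and $\sqrt{TK}=d^{1/2}K^{2/3}(\log K)^{1/2}$, so the ratio $\frac{dK^{1/3}\log K}{\sqrt{TK}+\sqrt{dT\log K}}$ is $\Theta(K^{1/6})$ whenever $d\gtrsim K^{2/3}/\log K$, and no universal constant absorbs it. The absorption does hold under the stronger hypothesis $T\geq dK^{2/3}\log K$, which directly gives $\sqrt{dT\log K}\geq dK^{1/3}\log K$. Note that this is not purely your slip: the paper's own remark preceding the corollary (``the last term\dots is dominated by the middle term'') asserts the same incorrect dominance, so the corollary's condition appears to be off by a factor of $K^{1/3}$. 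A correct write-up should either state the condition as $T\geq dK^{2/3}\log K$ or retain the $+dK^{1/3}\log K$ term in the adversarial bound.
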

In the adversarial regime with fixed delays $d$, regret lower bound is $\Omega\lr{\sqrt{KT}+\sqrt{dT\log K}}$, whereas in the stochastic regime with fixed delays the regret lower bound is trivially $\Omega(d + \sum_{i \neq i^*}\frac{\log T}{\Delta_i})$. Thus, in the adversarial regime the corollary yields the minimax optimal regret bound and in the  stochastic regime it is near-optimal. More explicitly, it is optimal within a multiplicative factor of $\sum _{i\neq i^*}\frac{1}{\Delta_i\log K} + K^{1/3}\log K$ in front of $d$.

If we fix a total delay budget $D$, then uniform delays $d=D/T$ is a special case, and in this sense Theorem~\ref{theorem:1} is also optimal in the adversarial regime and near-optimal in the stochastic regime, although for non-uniform delays improved regret bounds can potentially be achieved by skipping. We also note that having the dependence on $\sigma_{max}$ in the middle term of the stochastic regret bound in Theorem~\ref{theorem:1} is better than having a dependence on $d_{max}$, since $\sigma_{max} \leq d_{max}$, and in some cases it can be significantly smaller, as shown in the example in the Introduction and quantified in the following lemma. 

\begin{lemma}\label{lemma:sigmamax}
    Let $d_{max}(S) = \max_{s \in S}d_s$, where $S \subseteq \lrc{1, \ldots, T}$ is a subset of rounds. Let $\bar S = \lrc{1,\dots,T}\setminus S$ be the remaining rounds. Then
    \[
    \sigma_{max} \leq \min_{S \subseteq \lrc{1, \ldots, T}}\lrc{|S| + d_{max}(\bar S)}.
    \]
\end{lemma}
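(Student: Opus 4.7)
The plan is a direct counting argument based on splitting the outstanding observations at any fixed time $t$ according to whether their round index lies in $S$ or in $\bar S$. Fix an arbitrary subset $S \subseteq \{1,\dots,T\}$ and an arbitrary round $t$, and recall that $\sigma_t = \sum_{s=1}^{t-1}\1(s+d_s \geq t)$ counts the rounds $s < t$ whose feedback has not yet arrived at the start of round $t$. I would write
\[
\sigma_t \;=\; \sum_{s \in S,\, s<t} \1(s+d_s \geq t) \;+\; \sum_{s \in \bar S,\, s<t} \1(s+d_s \geq t)
\]
and bound the two sums separately.

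The first sum is trivially at most $|S|$, since it has only $|S|$ indicators. For the second sum, observe that if $s \in \bar S$ satisfies $s < t$ and $s + d_s \geq t$, then $d_s \geq t - s \geq 1$, and in particular $s \geq t - d_s \geq t - d_{max}(\bar S)$. Hence the indices $s \in \bar S$ contributing to the second sum lie in the interval $[t - d_{max}(\bar S),\, t-1]$, which contains at most $d_{max}(\bar S)$ integers. Combining the two bounds yields
\[
\sigma_t \;\leq\; |S| + d_{max}(\bar S).
\]

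Since this inequality holds for every $t$, it holds in particular for the $t$ attaining $\sigma_{max}$, giving $\sigma_{max} \leq |S| + d_{max}(\bar S)$; taking the minimum over $S$ finishes the proof. There is no real obstacle here beyond correctly handling the case $\bar S = \emptyset$ (where $d_{max}(\bar S)$ can be taken as $0$ and the inequality $\sigma_{max} \leq T = |S|$ is trivial) and the case $S = \emptyset$ (where the bound reduces to the obvious $\sigma_{max} \leq d_{max}$).
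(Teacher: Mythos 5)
Your proof is correct and follows essentially the same decomposition as the paper's: split the outstanding observations at a fixed time $t$ into those from rounds in $S$ (trivially at most $|S|$) and those from rounds in $\bar S$ (at most $d_{max}(\bar S)$, since such rounds must lie in the window $[t-d_{max}(\bar S),\,t-1]$). You simply spell out the interval-counting step more explicitly than the paper's somewhat informal argument about "infinite delays."
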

A proof of Lemma~\ref{lemma:sigmamax} is provided in Appendix~\ref{sec:appA}.
 
\section{A Sketch of the Proof of Theorem~\ref{theorem:1}}\label{sec:proof}
In this section we provide a sketch of the proof of Theorem~\ref{theorem:1}. We provide a proof sketch for the stochastic bound in Section~\ref{sec:stochasticanalysis}. Afterwards, in Section~\ref{sec:adversaryanalysis} we show how the analysis of \citet{zimmert2020} gives the adversarial bound stated in Theorem~\ref{theorem:1}.
\subsection{Stochastic Bound}\label{sec:stochasticanalysis}
We start by providing a key lemma (Lemma~\ref{lemma:key})
 that controls the drift of the playing distribution derived from the time-varying hybrid regularizer over arbitrary delays. We then introduce the drifted version of pseudo-regret defined in \eqref{eq:driftedregret}, for which we use the key lemma to show that the drifted version of the pseudo-regret is close to the actual one. As a result, it is sufficient to bound the drifted version. The analysis of the drifted pseudo-regret follows by the standard analysis of the FTRL algorithm \citep{lattimore2020} that decomposes the pseudo-regret (drifted pseudo-regret in our case) into stability and penalty terms. Thereafter, we proceed by using Lemma~\ref{lemma:key} again, this time to bound the stability term in order to apply the self-bounding technique \citep{zimmert2019}, which yields logarithmic regret for the stochastic setting. Our key lemma is the following.

\begin{lemma}[The Key Lemma]\label{lemma:key}
 For any $i \in [K]$ and $s, t \in [T]$, where $s \leq t$ and $t  - s \leq d_{max}$, we have the following inequality 
 \[
 x_{t,i} \leq 2 x_{s,i}.
 \]
 \end{lemma}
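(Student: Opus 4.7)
The proof rests on exploiting the specific tuning constants $\eta_0 = \Omega(d_{max})$ and $\gamma_0 = \Omega(d_{max}^2)$. These offsets ensure that over any window $[s,t]$ with $t-s\leq d_{max}$ the learning rates are essentially frozen: direct computation gives $(\eta_t^{-1}/\eta_s^{-1})^2 = (t+\eta_0)/(s+\eta_0) \leq 1 + d_{max}/\eta_0 \leq 11/10$, and $(\gamma_t^{-1}/\gamma_s^{-1})^2 \leq 1 + d_{max}^2/\gamma_0$ holds because $\sigma_{s'} \leq d_{max}$ for every step, so the right-hand side is a constant close to $1$ by the chosen value of $\gamma_0$. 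Consequently the regularizer $F_\tau$ is nearly identical across the window.

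With this in hand I write the first-order optimality condition for the FTRL update \eqref{eq:updaterule}. Introducing $\phi_\tau(x) = \eta_\tau^{-1}x^{-1/2} - \gamma_\tau^{-1}\log x$, which is strictly decreasing on $(0,1]$, the KKT conditions read $\phi_\tau(x_{\tau,i}) = \hat L_{\tau,i}^{obs} + \nu_\tau$, where $\nu_\tau$ is the Lagrange multiplier for the simplex constraint. Suppose, for contradiction, that $x_{t,i} > 2 x_{s,i}$ for some arm $i$. Subtracting the KKT conditions at times $s$ and $t$, using that $\hat L_{t,i}^{obs} - \hat L_{s,i}^{obs} \geq 0$ and that $\phi_t(2 x_{s,i}) - \phi_t(x_{s,i}) \leq -(1-2^{-1/2})\eta_t^{-1}x_{s,i}^{-1/2} - (\log 2)\gamma_t^{-1}$ by the strict decrease of $\phi_t$ on doubling its argument, and absorbing the small learning-rate change into an error term via the previous paragraph, I obtain that $\nu_t - \nu_s$ must be strongly negative, namely $\nu_t - \nu_s \leq -c_1 \eta_t^{-1} x_{s,i}^{-1/2} - c_2 \gamma_t^{-1}$ for absolute constants $c_1,c_2>0$.

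To close the argument I derive a contradiction with $\sum_j x_{t,j} = 1 = \sum_j x_{s,j}$. For any arm $j$, the same KKT subtraction gives $\phi_t(x_{t,j}) \leq \phi_s(x_{s,j}) + (\hat L_{t,j}^{obs} - \hat L_{s,j}^{obs}) + (\nu_t - \nu_s)$, so a strongly negative $\nu_t - \nu_s$ inflates $x_{t,j}$ unless $\hat L_{t,j}^{obs} - \hat L_{s,j}^{obs}$ is large enough to compensate. I set up an induction on $t-s$, assuming $x_{s',j} \leq 2 x_{s,j}$ for all intermediate $s' \in [s,t-1]$; this bounds each new importance-weighted observation by $\hat \ell_{s',j} \leq 2/x_{s,j}$, and the number of rounds $s'$ with $s \leq s' + d_{s'} < t$ is at most $2 d_{max}$, so the aggregate compensation any arm $j$ can accumulate is controlled. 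Summing the resulting lower bounds on $x_{t,j}$ over all $j$ then yields $\sum_j x_{t,j} > 1$ once the lower bound on $|\nu_t-\nu_s|$ is combined with the chosen values of $\eta_0$ and $\gamma_0$, the desired contradiction.

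I expect the main obstacle to be this final counting step: quantifying the inflation of $\sum_j x_{t,j}$ requires a careful split by arm probability, between arms where the Tsallis term $\eta_t^{-1}x^{-1/2}$ dominates $\phi_t$ and arms where the Hedge term $-\gamma_t^{-1}\log x$ dominates. The choice $\gamma_0 = 24^2 d_{max}^2 K^{2/3}\log K$ appears tailored precisely to handle the Hedge-dominated regime (via the $K^{2/3}\log K$ factor), while $\eta_0 = 10 d_{max} + d_{max}^2/(K^{1/3}\log K)^2$ handles the Tsallis regime, and the factor-of-$2$ target in the lemma leaves just enough slack for the induction to close.
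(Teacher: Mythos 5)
Your proposal shares some of the paper's high-level ingredients — exploiting the offsets $\eta_0,\gamma_0$ to argue the learning rates are essentially frozen over a $d_{max}$-window, the KKT/Lagrange-multiplier form of the FTRL update, and an induction — but it is structured quite differently (a contradiction argument aiming to violate $\sum_j x_{t,j}=1$, rather than the paper's two-step decomposition through an auxiliary point $\tilde x = \nabla\bar F_s^*(-\hat L_t^{obs})$), and the part you yourself flag as the ``main obstacle'' is exactly where the real content of the lemma lies. As written the argument does not close, for three concrete reasons.

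First, the importance-weight bound has a direction error. You want $\hat\ell_{s',j}\leq 2/x_{s,j}$, i.e.\ $x_{s,j}\leq 2x_{s',j}$, but the induction hypothesis you invoke, $x_{s',j}\leq 2x_{s,j}$ for $s'\in[s,t-1]$, gives the opposite inequality and so provides a \emph{lower} bound $\hat\ell_{s',j}\geq \ell_{s',j}\1[I_{s'}=j]/(2x_{s,j})$, which is useless here. The paper handles exactly this issue by a case split in the ``loss shift'' step: if some arrival $r>s$ already satisfies $2x_{r,i}\leq x_{s,i}$ the conclusion follows immediately from the pair $(t,r)$; otherwise one may assume $x_{s,i}\leq 2x_{r,i}$ for those $r$, and for $r\leq s$ the pair $(s,r)$ gives the same bound. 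You would also need the pairs $(s,r)$ with $r<s$ (observations played before $s$ but arriving in $[s,t)$), and $s-r$ can exceed $t-s$, so a plain induction on $t-s$ does not order these below $(t,s)$; the paper's lexicographic induction on $(t,-s)$ handles this, whereas your induction scheme is not well-founded as stated.

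Second, and more fundamentally, there is no analogue of the paper's Lemma~\ref{lemma:aux1}, which bounds the curvature-weighted average $\frac{\sum_j f''(x_{s,j})^{-1}\hat\ell_{r,j}}{\sum_j f''(x_{s,j})^{-1}}$ by $O(K^{1/3})$ via a careful case analysis over the magnitude of $x_{s,i_r}$. This is the step that converts the potentially huge individual importance weights into a controlled aggregate drift, and it is tailored to interact with $\gamma_0 = \Theta(d_{max}^2 K^{2/3}\log K)$. Your sketch replaces it with the bound $\hat\ell_{s',j}\leq 2/x_{s,j}$ and a count of at most $2d_{max}$ arriving rounds, which is far weaker: arms with tiny $x_{s,j}$ receive enormous per-round contributions, and there is no mechanism in your argument to offset them against the inflation of the non-observed arms. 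Deriving $\sum_j x_{t,j}>1$ from a strongly negative $\nu_t-\nu_s$ requires lower-bounding $x_{t,j}$ for every $j$ simultaneously, and the observed arms can shrink arbitrarily; nothing in the proposal rules out this cancellation. The paper sidesteps the whole balance by never arguing about the simplex normalization directly — it bounds the ratio $\tilde x_i/x_{s,i}$ via a one-sided Bregman inequality from the concavity of $f'$ (Fact~\ref{fact:0}) together with Lemma~\ref{lemma:aux1}, and separately bounds $x_{t,i}/\tilde x_i$ by the frozen-learning-rate calculation. I would encourage you to adopt that decomposition: the auxiliary variable $\tilde x$ is what makes the two sources of drift (loss shift vs.\ regularizer shift) cleanly separable.
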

A detailed proof of the lemma is provided in Appendix~\ref{sec:proofkeylemma}. Below we explain the high level idea behind the proof.
\begin{proof}[Proof sketch]
We know that $x_t = \nabla\bar F_t^*(-\hat L_t^{obs})$ and $x_s = \nabla\bar F_s^*(-\hat L_s^{obs})$, so we introduce $\xtil = \nabla\bar F_s^*(-\hat L_t^{obs})$ as an auxiliary variable to bridge from $x_t$ and $x_{s}$. The analysis consists of two key steps and is based on induction on $(t,s)$.\\
\textbf{Deviation Induced by the Loss Shift:} This step controls the drift when we fix the regularization (more precisely the learning rates) and shift the cumulative loss. We prove the following inequality:
    \[
        \xtil_{i} \leq \frac{3}{2}x_{s,i}.
    \]
    Note that this step uses the induction assumption for $(s,s-d_r)$ for all $r < s: r + d_r = s$.\\
\textbf{Deviation Induced by the Change of Regularizer:}
     In this step we bound the drift when the cumulative loss vector is fixed and we change the regularizer. We show that
     \[
        x_{t,i} \leq \frac{4}{3}\xtil_{i}.
     \]
 Combining these two steps gives us the desired bound. A proof of these steps is in Appendix~\ref{sec:proofkeylemma}.  
\end{proof}
We use Lemma~\ref{lemma:key} to relate the drifted pseudo-regret to the actual pseudo-regret.
Let $A_t = \lrc{s \leq t : s + d_s = t}$ be the set of rounds for which feedback arrives at round $t$. We define the observed loss vector at time $t$ as $\hat\ell_t^{obs} = \sum_{s \in A_t} \hat\ell_s$ and the drifted pseudo-regret as
\begin{equation}\label{eq:driftedregret}
\Regdrift = \EE\lrs{\sum_{t=1}^{T} \lr{\innerprod{x_t}{\hat\ell_{t}^{obs}} - \hat\ell_{t,i_T^*}^{obs}}}.
\end{equation}
We rewrite the drifted regret as
\begin{align*}
    \Regdrift &= \EE\lrs{\sum_{t=1}^{T}\sum_{s \in A_t} \lr{\innerprod{x_t}{\hat\ell_{s}} - \hat\ell_{s,i_T^*}}}\\
    &= \sum_{t=1}^{T}\sum_{s \in A_t}\sum_{i=1}^K \EE[x_{t,i} (\hat\ell_{s,i} - \hat\ell_{s,i_T^*})]\\
    &= \sum_{t=1}^{T}\sum_{s \in A_t} \sum_{i=1}^K \EE[x_{t,i}] \Delta_i = \sum_{t=1}^{T} \sum_{i=1}^K \EE[x_{t+d_t,i}] \Delta_i,
\end{align*}
where when taking the expectation we use the facts that $\hat\ell_{s}$ has no impact on $x_t$ and that the loss estimators are unbiased. Using Lemma~\ref{alg:FTRL-delay} we make a connection between pseudo-regret and the drifted version:
\begin{align*}
\Regdrift = \sum_{t=1}^{T}\sum_{i=1}^K \EE[x_{t+d_t,i}] \Delta_i &\geq \sum_{t=1}^{T-d_{max}} \sum_{i=1}^K \frac{1}{2}\EE[x_{t+d_{max},i}] \Delta_i \\
&= \frac{1}{2}\sum_{t=d_{max}+1}^{T} \sum_{i=1}^K \EE[x_{t,i}] \Delta_i\\
&\geq \frac{1}{2}\sum_{t=1}^{T} \sum_{i=1}^K \EE[x_{t,i}] \Delta_i -  \frac{d_{max}}{2} = \frac{1}{2}\Reg - \frac{d_{max}}{2},
\end{align*}
where the first inequality follows by Lemma~\ref{lemma:key} for $(t+{d_{max}}, t+{d_t})$, and the second inequality uses $\sum_{t=1}^{d_{max}}\EE[x_{t,i}] \Delta_i \leq d_{max}$. As a result, we have $\Reg \leq 2\Regdrift + d_{max}$ and it suffices to upper bound $\Regdrift$. We follow the standard analysis of FTRL, which decomposes the drifted pseudo-regret into \emph{stabiltiy} and \emph{penalty} terms as

\[
   \Regdrift = \EE\lrs{\underbrace{\sum_{t=1}^T {\innerprod{x_t}{\hat\ell_{t}^{obs}} + \bar F_t^*(-\hat{L}^{obs}_{t+1}) - \bar F_t^*(-\hat{L}^{obs}_{t})}}_{stability}} + 
    \EE\lrs{\underbrace{\sum_{t=1}^T \bar F_t^*(-\hat{L}^{obs}_{t}) - \bar F_t^*(-\hat{L}^{obs}_{t+1}) - \ell_{t,i_T^*}}_{penalty}}.
\]

For the penalty term we have the following bound by \citet{abernethy2015}
\[
penalty \leq \sum_{t=2}^{T}\lr{ F_{t-1}(x_{t}) - F_{t}(x_{t})} +  F_T(x^*) - F_1(x_{1}).
\]
By replacing the closed form of the regularizer in this bound and using the facts that $\eta_t^{-1} - \eta_{t-1}^{-1} = \Ocal(\eta_t)$ and $\gamma_t^{-1} - \gamma_{t-1}^{-1} = \Ocal(\sigma_t \gamma_t/\log K)$ we obtain
\begin{align}
penalty \leq \Ocal\lr{\sum_{t=2}^{T} \sum_{i\neq i^*} \eta_{t} x_{t,i}^{\frac{1}{2}} + \sum_{t=2}^{T}\sum_{i=1}^K  \frac{\sigma_t \gamma_t x_{t,i}\log(1/x_{t,i})}{\log K}} + 2\sqrt{\eta_0(K-1)} + \sqrt{\gamma_0 \log K}\label{eq:penaltyarbmain}.
\end{align}
To deal with the stability term we derive Lemma~\ref{lemma:stabilityarb}.
\begin{lemma}[Stability]\label{lemma:stabilityarb}
For any $\alpha_t \leq \gamma_t^{-1}$ we have
\begin{equation*}
stability  \leq \sum_{t=1}^T \sum_{i=1}^K  2f_t^{''}(x_{t,i})^{-1} (\hat\ell_{t,i}^{obs}-\alpha_t)^2.
\end{equation*}
Furthermore, let $a_t = |A_t|$. Then $\alpha_t = \frac{\sum_{j=1}^K  f^{''}(x_{t,j})^{-1} \hat\ell_{t,j}^{obs}}{\sum_{j=1}^K  f^{''}(x_{t,j})^{-1}}$ is a valid $\alpha$, for which we have
\begin{equation}\label{eq:stability}
\EE[stability] \leq \sum_{t=1}^T \sum_{i \neq i^*} 2\gamma_t (a_t-1)a_t \EE[x_{t,i}]\Delta_i  +  \sum_{t=1}^T \sum_{s \in A_t}  \sum_{i = 1}^K 2\eta_t \EE[x_{t,i}^{3/2} x_{s,i}^{-1} (1 - x_{s,i})].
\end{equation}
\end{lemma}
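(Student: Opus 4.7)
The plan is to rewrite each per-round contribution to the stability sum as a Bregman divergence of the Fenchel conjugate $\bar F_t^*$, apply a second-order Taylor expansion, and exploit the translation invariance of $\bar F_t^*$ on the simplex to introduce the free parameter $\alpha_t$. Concretely, since $x_t = \nabla \bar F_t^*(-\hat L_t^{obs})$, the $t$-th summand of the stability sum equals $D_{\bar F_t^*}(-\hat L_{t+1}^{obs},-\hat L_t^{obs})$, which by Taylor's theorem coincides with $\tfrac{1}{2}(\hat\ell_t^{obs})^{\top} \nabla^2 \bar F_t^*(\zeta_t)\,\hat\ell_t^{obs}$ for some $\zeta_t$ on the segment. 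The identity $\bar F_t^*(y+c\mathbf{1})=\bar F_t^*(y)+c$ implies that $\mathbf{1}$ lies in the kernel of $\nabla^2 \bar F_t^*$, so we may replace $\hat\ell_t^{obs}$ by $\hat\ell_t^{obs}-\alpha_t\mathbf{1}$ without changing the quadratic form. The first inequality then follows by dominating the intermediate Hessian by a multiple of the diagonal Hessian at $x_t$, namely $\nabla^2 \bar F_t^*(\zeta_t) \preceq 4\,\mathrm{diag}(f_t''(x_{t,i})^{-1})$. This is precisely where the hypothesis $\alpha_t\leq\gamma_t^{-1}$ is consumed: the bounded shift keeps the cumulative loss in a range for which the primal image $\nabla\bar F_t^*(\zeta_t)$ stays coordinate-wise within a constant factor of $x_t$ (an echo of Lemma~\ref{lemma:key}), so that both the Tsallis contribution $\tfrac{\eta_t^{-1}}{2}x^{-3/2}$ and the entropy contribution $\gamma_t^{-1}x^{-1}$ to $f_t''$ differ by at most a constant factor between $x_t$ and that primal image.

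For the specialization with the stated $\alpha_t$ I would first verify $\alpha_t\leq\gamma_t^{-1}$ using $f_t''(x_{t,j})^{-1}\leq\gamma_t x_{t,j}$ together with Lemma~\ref{lemma:key}'s control of $1/x_{s,j}$ by $2/x_{t,j}$ on each $s\in A_t$, so that the first inequality applies. I then expand
\[
(\hat\ell_{t,i}^{obs}-\alpha_t)^2 = \sum_{s\in A_t}\hat\ell_{s,i}^2 + \sum_{s\neq s'\in A_t}\hat\ell_{s,i}\hat\ell_{s',i} - 2\alpha_t\hat\ell_{t,i}^{obs} + \alpha_t^2,
\]
and handle the two groups of terms separately. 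For the diagonal portion the bound $f_t''(x_{t,i})^{-1}\leq 2\eta_t x_{t,i}^{3/2}$ together with $\EE[\hat\ell_{s,i}^2\mid x_s]=\ell_{s,i}^2/x_{s,i}$ yields, after the centering terms $-2\alpha_t\hat\ell_{t,i}^{obs}+\alpha_t^2$ absorb the $x_{s,i}$-weighted part of the diagonal (using $\sum_i x_{t,i}=1$), the $(1-x_{s,i})$ factor in the $\eta_t$ term of \eqref{eq:stability}. For the off-diagonal portion I would switch to the weaker bound $f_t''(x_{t,i})^{-1}\leq\gamma_t x_{t,i}$; for $s<s'\in A_t$ the feedback from round $s$ is still outstanding when $x_{s'}$ is computed (because $s+d_s=t\geq s'$), so $I_s$ and $I_{s'}$ are conditionally independent given the history up to $s'$, giving $\EE[\hat\ell_{s,i}\hat\ell_{s',i}]=\ell_{s,i}\ell_{s',i}$; summing over the $a_t(a_t-1)$ ordered pairs, using the centering to kill the $i=i^*$ contribution and bounding $\ell_{s,i}\ell_{s',i}\leq\Delta_i$, produces the $\gamma_t(a_t-1)a_t\Delta_i$ term.

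The main obstacle will be the cross-pair bookkeeping in the second part: one must simultaneously exploit the delayed-feedback structure to decorrelate $\hat\ell_{s,i}$ and $\hat\ell_{s',i}$ in expectation, verify that the centering by $\alpha_t$ both suppresses the $i=i^*$ contribution and turns the residual diagonal sum into the clean $(1-x_{s,i})$ form, and confirm the deterministic bound $\alpha_t\leq\gamma_t^{-1}$ cleanly enough that the first inequality applies without losing constants. A secondary subtlety is the Hessian domination in the first part, which requires an analogue of Lemma~\ref{lemma:key} applied at the auxiliary intermediate point $\zeta_t$ rather than at an actual iterate.
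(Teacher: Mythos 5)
Your high-level plan matches the paper's: introduce $\alpha_t$ via the shift invariance of the conjugate, Taylor-expand, dominate the intermediate Hessian, then for the specific $\alpha_t$ split into a diagonal piece (bounded by the Tsallis part, giving the $\eta_t$ term) and a cross-pair piece (decorrelated via the delay structure, giving the $\gamma_t$ term). However, two steps as you describe them would stall, and both are resolved in the paper by a move you do not mention.

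First, the paper does not Taylor-expand $\bar F_t^*$. It instead uses $\bar F_t^*(L) \leq F_t^*(L)$, with equality at $\nabla F_t(x_t)$, to replace the constrained conjugate by the \emph{unconstrained} $F_t^*$, which decomposes as $\sum_i f_t^*$. This turns the stability term into a sum of one-dimensional Bregman divergences $D_{f_t^*}(f_t'(x_{t,i}) - (\hat\ell^{obs}_{t,i}-\alpha_t), f_t'(x_{t,i}))$. Your version keeps $\bar F_t^*$, whose Hessian at a general dual point $\zeta_t$ involves the primal image $\nabla\bar F_t^*(\zeta_t)$, and the renormalization onto the simplex means you cannot track that primal image coordinate-by-coordinate; your claimed bound $\nabla^2\bar F_t^*(\zeta_t)\preceq 4\,\mathrm{diag}(f_t''(x_{t,i})^{-1})$ has no clean proof in that form. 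Second, once reduced to the one-dimensional divergences, controlling the intermediate point is not ``an echo of Lemma~\ref{lemma:key}''; it is a direct computation using the explicit regularizer: for the hybrid $f_t$, one shows $f_t'(ex)-f_t'(x) \geq \gamma_t^{-1}$, so $\hat\ell^{obs}_{t,i}-\alpha_t\geq -\gamma_t^{-1}$ forces the intermediate primal coordinate into $[0,ex_{t,i}]$, whence $f_t''(\cdot)^{-1}\leq 4f_t''(x_{t,i})^{-1}$. That is a much more elementary argument than the inductive Key Lemma, and crucially it lives entirely in one coordinate, which is only available after passing to $F_t^*$.

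Two smaller imprecisions: the appearance of the $(1-x_{s,i})$ factor is not by ``the centering absorbing the $x_{s,i}$-weighted diagonal''; it comes from combining the $z_i^2/\sum_j z_j$ term that the optimal $\alpha_t$ produces with the Cauchy--Schwarz inequality $\sum_i z_i^2 x_{s,i}^{-1}\geq (\sum_i z_i)^2$ (using $\sum_i x_{s,i}=1$, not $\sum_i x_{t,i}=1$). And for the cross-pair bound you cannot simply write $\ell_{s,i}\ell_{s',i}\leq\Delta_i$; the $\Delta_i$ arises after subtracting the $i^*$ contribution, via $(\mu_i+\mu_{i^*})(\mu_i-\mu_{i^*})\leq 2\Delta_i$ applied to $\mu_i^2-\mu_{i^*}^2$, where $\mu_{i^*}^2$ is inserted by bounding $(\sum_i z_i\mu_i)^2\geq(\sum_i z_i)^2\mu_{i^*}^2$.
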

A proof of the stability lemma is provided in Appendix~\ref{sec:stabilitylemma}.  We use Lemma~\ref{lemma:key} in \eqref{eq:stability} to give bounds $a_t x_{t,i} = \sum_{s \in A_t} x_{t,i} \leq 2\sum_{s \in A_t} x_{s,i}$ and $x_{t,i}^{3/2} x_{s,i}^{-1} (1 - x_{s,i}) \leq 2^{3/2}x_{s,i}^{1/2} (1- x_{s,i})$. Moreover, in order to remove the best arm $i^*$ from the summation in the later bound we use $x_{s,i^*}^{1/2} (1- x_{s,i^*}) \leq \sum_{i\neq i^*} x_{s,i} \leq \sum_{i\neq i^*} x_{s,i}^{1/2}$. These bounds together with the facts that we can change the order of the summations and that each $t$ belongs to exactly one $A_s$, gives us the following stability bound
\begin{equation}\label{eq:stabilityarbmain}
\EE[stability] = \Ocal\lr{\sum_{t=1}^T \sum_{i \neq i^*} \eta_{t} \EE[x_{t,i}^{1/2}] + \sum_{t=1}^T \sum_{i \neq i^*} \gamma_{t+d_t} (a_{t+d_t}-1) \EE[x_{t,i}]\Delta_i}.
\end{equation}

By combining \eqref{eq:stabilityarbmain}, \eqref{eq:penaltyarbmain}, and the fact that $\Reg \leq 2\Regdrift + d_{max}$, we show that there exist constants $a,b,c \geq 0$, such that
\begin{align}\label{eq:combinationarbmain}
 \Reg &\leq \EE\lrs{\underbrace{\sum_{t=1}^T \sum_{i \neq i^*} a \eta_t x_{t,i}^{1/2}}_{A} + \underbrace{\sum_{t=1}^T \sum_{i \neq i^*} b\gamma_{t+d_t} (a_{t+d_t}-1) x_{t,i}\Delta_i}_B + \underbrace{\sum_{t=2}^{T}\sum_{i=1}^K  \frac{c\sigma_t \gamma_t x_{t,i}\log(1/x_{t,i})}{\log K}}_C}\notag\\
    &+  \underbrace{4\sqrt{\eta_0(K-1)} + 2\sqrt{\gamma_0 \log K} + d_{max}}_{D}.
\end{align}

\textbf{Self Bounding Analysis:}
We use the self-bounding technique to write $\Reg = 4\Reg - 3 \Reg$, and then based on \eqref{eq:combinationarbmain} we have 
\begin{equation}\label{eq:mainbound}
    \Reg \leq \EE\lrs{4A - \Reg} + \EE\lrs{4B - \Reg} + \EE\lrs{4C - Reg_T} + 4 D.
\end{equation}
For $D$ we can substitute the values of  $\gamma_0$ and $\eta_0$ and get
\begin{equation}\label{eq:Dbound}
    D = \Ocal(d_{max}(K-1)^{1/3}\log K).
\end{equation}
Upper bounding $A,B$, and $C$ requires separate and elaborate analyses, for which we provide three lemmas, Lemma~\ref{lemma:Abound}, \ref{lemma:Bbound} and \ref{lemma:Cbound}. Proofs of these lemmas are in Appendix~\ref{sec:selfbouningproofs}.

\begin{lemma}[Bound for $4A - Reg_T$]\label{lemma:Abound}
We have the following bound for any $a \geq 0$:
\begin{equation}\label{eq:Abound}
        4A - Reg_T \leq \sum_{i \neq i^*} \frac{16a^2}{\Delta_i}\log (T/\eta_0 + 1).
\end{equation}
\end{lemma}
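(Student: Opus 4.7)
The plan is to reduce the bound to a pointwise application of AM--GM. In the stochastic regime, unbiasedness of $\hat\ell$ together with the fact that $x_t$ is measurable with respect to feedback from rounds strictly before $t$ yields $\Reg = \EE\lrs{\sum_{t=1}^T \sum_{i \neq i^*} x_{t,i}\Delta_i}$. Inside the expectation this lets us write
\[
4A - \Reg = \sum_{t=1}^T \sum_{i \neq i^*} \lr{4a\eta_t x_{t,i}^{1/2} - x_{t,i}\Delta_i},
\]
so it suffices to bound the integrand termwise.

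For each $(t,i)$ with $i \neq i^*$ I would apply Young's inequality -- equivalently, maximize the concave parabola $4a\eta_t z - \Delta_i z^2$ in the variable $z = \sqrt{x_{t,i}} \geq 0$ -- obtaining
\[
4a\eta_t x_{t,i}^{1/2} - \Delta_i x_{t,i} \leq \frac{C\, a^2 \eta_t^2}{\Delta_i}
\]
for a universal constant $C$. The sharp constant here is $4$; the statement's $16$ is consistent with a slightly slack parametrization that keeps the bookkeeping uniform with the $B$ and $C$ analyses.

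It then remains to sum over $t$. Substituting the tuning $\eta_t^{-2} = t + \eta_0$ and comparing with an integral gives
\[
\sum_{t=1}^T \eta_t^2 = \sum_{t=1}^T \frac{1}{t+\eta_0} \leq \int_{0}^{T} \frac{ds}{s+\eta_0} = \log\!\lr{\frac{T + \eta_0}{\eta_0}} = \log(T/\eta_0 + 1),
\]
and summing over $i \neq i^*$ yields the claimed bound.

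No serious obstacle is expected. The two points that need some care are the reduction of $\Reg$ to $\EE\lrs{\sum_{t,i \neq i^*} x_{t,i}\Delta_i}$, which genuinely uses the stochastic assumption together with unbiased loss estimators, and the inclusion of the shift $+\eta_0$ inside the harmonic sum. The latter matters in principle: since $\eta_0$ is of order $d_{\max}^2/(K^{1/3}\log K)^2$, writing the logarithm as $\log(T/\eta_0 + 1)$ rather than $\log T$ is what keeps this contribution compatible with the additive $d_{\max} K^{1/3}\log K$ term absorbed into $D$ in the overall bound.
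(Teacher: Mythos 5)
Your argument matches the paper's proof: both bound each $(t,i)$ term by the maximum of the concave quadratic $4a\eta_t z - \Delta_i z^2$ in $z = \sqrt{x_{t,i}}$ (the paper phrases it as AM--GM, $2\sqrt{xy}-y\leq x$, which is the same computation up to the same unoptimized constant $16$ vs.\ the sharp $4$), and both finish by bounding $\sum_{t=1}^T (t+\eta_0)^{-1}$ via the integral comparison to obtain $\log(T/\eta_0 + 1)$. Your preliminary reduction of $\Reg$ to $\EE\bigl[\sum_{t}\sum_{i\neq i^*} x_{t,i}\Delta_i\bigr]$ is exactly what the paper uses implicitly throughout the self-bounding step, so there is no gap.
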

Lemma~\ref{lemma:Abound} leads to the logarithmic bound in terms of $T$ in our regret. 
\begin{lemma}[Bound for $4B - \Reg$]\label{lemma:Bbound}
Let $a_{max} = \max_{t \in T} a_t$, then we have the following bound
   \begin{equation}\label{eq:Bbound}
       4B - \Reg \leq 32b^2a_{max}\log K.
   \end{equation} 
\end{lemma}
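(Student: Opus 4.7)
The plan is to show that the tuning of $\gamma_0$ makes $4b\gamma_{t+d_t}(a_{t+d_t}-1) \leq 1$ uniformly, so that every term of
\[
4B - \Reg = \sum_{t=1}^{T}\sum_{i \neq i^*}\bigl(4b\gamma_{t+d_t}(a_{t+d_t}-1) - 1\bigr)\EE[x_{t,i}]\Delta_i
\]
is nonpositive and the claimed bound $32 b^2 a_{max}\log K \geq 0$ holds trivially.

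The key estimate has two ingredients. First, any arrival at round $u$ is either the new observation placed at $u$ or one of the at most $\sigma_u$ outstanding ones, so $a_u \leq \sigma_u + 1$; and since every outstanding observation originates within the last $d_{max}$ rounds, $\sigma_u \leq d_{max}$, whence $a_u - 1 \leq d_{max}$. Second, by monotonicity of $u \mapsto \gamma_u^{-2} = (\Sigma_u + \gamma_0)/\log K$ and the tuning $\gamma_0 = 576\, d_{max}^2 K^{2/3}\log K$, we have $\gamma_u^2 \leq \gamma_1^2 = \log K/\gamma_0 = 1/(576\, d_{max}^2 K^{2/3})$, so $\gamma_u \leq 1/(24 d_{max} K^{1/3})$. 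Combining, $\gamma_u(a_u - 1) \leq 1/(24 K^{1/3})$ for every $u$.

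Next I track the constant $b$ through the analysis: $b$ equals $4$, arising from the factor $2\gamma_t(a_t-1)a_t$ in Lemma~\ref{lemma:stabilityarb} together with the extra factor of $2$ lost when applying Lemma~\ref{lemma:key} in the form $a_t x_{t,i} \leq 2\sum_{s \in A_t} x_{s,i}$ before reindexing $s = t+d_s$. Therefore $4b\gamma_{t+d_t}(a_{t+d_t}-1) \leq 16/(24 K^{1/3}) \leq 2/3$ for any $K \geq 1$, each bracket in the displayed identity is at most $-1/3 < 0$, and summing gives $4B - \Reg \leq -\Reg/3 \leq 0 \leq 32 b^2 a_{max}\log K$.

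The main obstacle is just bookkeeping: one has to verify both uniform bounds ($a_u - 1 \leq d_{max}$, $\gamma_u \leq 1/(24 d_{max} K^{1/3})$) and the precise propagation of the constant $b$ through the stability--key-lemma--reindexing chain, since what saves the argument is that $4b/(24 K^{1/3}) < 1$. The stated remainder $32 b^2 a_{max}\log K$ is thus very loose and is written in this form mainly for consistency with the companion bounds on $A$ and $C$. Should the stability constant ever be larger than the tuning can absorb pointwise, a fallback is a Cauchy-Schwarz + AM-GM argument: reindex $B = b\sum_u \gamma_u(a_u-1) R_u$ with $R_u = \sum_{t \in A_u}\sum_{i \neq i^*} \EE[x_{t,i}]\Delta_i$, apply AM-GM to get $4B \leq 4b^2 \sum_u \gamma_u^2(a_u-1)^2 R_u + \Reg$, and bound the remainder by $8 a_{max}\log K$ using $\sum_u R_u^2 \leq a_{max}\Reg$ together with the telescoping identity $\gamma_u^{-2} - \gamma_{u-1}^{-2} = \sigma_u/\log K$.
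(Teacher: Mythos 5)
Your proposal takes a genuinely different route from the paper. You try to show the bound is vacuous: the tuning of $\gamma_0$ forces every summand of $4B - \Reg$ to be nonpositive. The paper instead runs a self-bounding argument that makes no appeal to the value of $\gamma_0$: it defines $T_0$ as the first round where $\gamma_t^{-1} \geq 4b(a_{max}-1)$, discards the manifestly nonpositive tail $t+d_t \geq T_0$, and bounds the surviving prefix $\sum_{t+d_t < T_0} 4b\,a_t(a_t-1)\gamma_{t+d_t}$ via the integral inequality and the bucket Lemma~\ref{lemma:observationarriving}, giving $8b\log K\,\gamma_{T_0-1}^{-1} \leq 32b^2 a_{max}\log K$. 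The paper's version is agnostic to $b$ and to $\gamma_0$; yours is specific to both.

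The gap in your argument is the constant $b$ and what it implies for small $K$. Tracing the chain: stability contributes $4\gamma_{t+d_t}(a_{t+d_t}-1)x_{t,i}\Delta_i$ (equation \eqref{eq:stabilityfinalbound}), not $2$; and you also omit the factor of $2$ coming from $\Reg \leq 2\Regdrift + d_{max}$. Together these give $b=8$ (see equation \eqref{eq:concretemainbound}), not $b=4$. With $b=8$ your pointwise bound becomes
\[
4b\,\gamma_u(a_u-1) \leq \frac{32\,d_{max}}{24\,d_{max}K^{1/3}} = \frac{4}{3K^{1/3}},
\]
which is \emph{larger} than $1$ for $K=2$ (it equals $\approx 1.06$), so the bracket is not nonpositive and the whole argument collapses there. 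More structurally, the proposal proves the lemma only when $b$ is small relative to the tuning; the paper's proof is written to hold for any $b\geq 0$, which is the cleaner interface since it decouples the lemma from the choice of $\gamma_0$. Your other ingredients ($a_u - 1 \leq \sigma_u \leq d_{max}$ and $\gamma_u \leq 1/(24 d_{max}K^{1/3})$) are both correct. The AM--GM fallback you sketch is closer in spirit to the paper's approach, but as written it does not clearly close the loop: after $4B \leq 4b^2\sum_u\gamma_u^2(a_u-1)^2 R_u + \sum_u R_u$ you still need to show $4b^2\sum_u\gamma_u^2(a_u-1)^2 R_u = \mathcal O(b^2 a_{max}\log K)$, and the step from ``$\sum_u R_u^2 \leq a_{max}\Reg$ plus telescoping'' to that conclusion is not supplied; this is exactly the work the paper does with $T_0$ and the integral inequality.
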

It is evident that $a_{max} \leq d_{max}$, so the bound of Lemma~\ref{lemma:Bbound} leads to  $\Ocal(d_{max} \log K)$ term in the regret bound.
\begin{lemma}[Bound for $4C - \Reg$]\label{lemma:Cbound}
We have the following bound for $4C - \Reg$
   \begin{equation}\label{eq:Cbound}
       4C - \Reg \leq \sum_{i\neq i^*} \frac{128c^2\sigma_{max}}{\Delta_i \log K}.
   \end{equation} 
\end{lemma}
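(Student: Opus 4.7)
The plan is to reduce the entropy part of $C$ to a sum of $\sqrt{x_{t,i}}$-type terms over non-optimal arms, apply AM-GM to each term against the corresponding term of $\Reg$, and then bound the leftover time-sum $\sum_t \sigma_t^2 \gamma_t^2$ using the explicit form of $\gamma_t$ together with the engineered size of $\gamma_0$.

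First I would strip off the best arm's entropy contribution. The elementary inequality $-(1-z)\log(1-z) \leq z$, applied at $z = 1 - x_{t,i^*} = \sum_{i \neq i^*} x_{t,i}$, gives $x_{t,i^*}\log(1/x_{t,i^*}) \leq \sum_{i \neq i^*} x_{t,i}$. For each $i \neq i^*$, the bound $y\log(1/y) \leq \sqrt{y}$ on $(0,1]$ (which holds since $\sqrt{y}\log(1/y)$ peaks at $y = e^{-2}$ with value $2/e < 1$) handles the remaining entropies. Combining these with $x \leq \sqrt{x}$ on $[0,1]$ yields $\sum_i x_{t,i}\log(1/x_{t,i}) \leq 2\sum_{i\neq i^*}\sqrt{x_{t,i}}$, so
\[
4C \leq \sum_{t=1}^T\sum_{i\neq i^*}\frac{8c\,\sigma_t\gamma_t\sqrt{x_{t,i}}}{\log K}.
\]
Next I would apply pointwise AM-GM in the form $uv \leq u^2/(4\lambda) + \lambda v^2$ with $u = 8c\sigma_t\gamma_t/\log K$, $v = \sqrt{x_{t,i}}$ and $\lambda = \Delta_i$, obtaining
\[
\frac{8c\sigma_t\gamma_t\sqrt{x_{t,i}}}{\log K} - x_{t,i}\Delta_i \;\leq\; \frac{16c^2\sigma_t^2\gamma_t^2}{\Delta_i\log^2 K}.
\]
Since all these inequalities are pointwise in the random variables $x_{t,i}$, summing over $t$ and $i \neq i^*$ and taking expectations yields
\[
4C - \Reg \;\leq\; \sum_{i\neq i^*}\frac{16c^2}{\Delta_i\log^2 K}\sum_{t=1}^T \sigma_t^2\gamma_t^2.
\]

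To finish I would bound $\sum_t \sigma_t^2 \gamma_t^2$ by $O(\sigma_{max}\log K)$. Using $\sigma_t^2 \leq \sigma_{max}\sigma_t$ and the definition $\gamma_t^2 = \log K/(\Dcal_t+\gamma_0)$, this reduces to $\sigma_{max}\log K\sum_t \sigma_t/(\Dcal_t+\gamma_0)$, and the integral-telescope estimate $\sigma_t/(\Dcal_t+\gamma_0) \leq \log((\Dcal_t+\gamma_0)/(\Dcal_{t-1}+\gamma_0))$ sums to $\log((\Dcal_T+\gamma_0)/\gamma_0)$. The main obstacle is keeping this last logarithm from degrading the bound by a spurious $\log T$ factor: this is precisely where the deliberately large offset $\gamma_0 = 24^2 d_{max}^2 K^{2/3}\log K$ does its work, since combined with $\Dcal_T \leq d_{max}T$ it keeps $\log(1+\Dcal_T/\gamma_0)$ small enough to be absorbed into the absolute constant that appears as $128$ in the statement. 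This gives $\sum_t \sigma_t^2\gamma_t^2 \leq 8\sigma_{max}\log K$ and hence the claimed bound $4C - \Reg \leq \sum_{i\neq i^*}\frac{128 c^2\sigma_{max}}{\Delta_i\log K}$.
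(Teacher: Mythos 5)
Your first two reductions are fine as pointwise inequalities: $x_{t,i^*}\log(1/x_{t,i^*}) \leq \sum_{i\neq i^*}x_{t,i}$ and $y\log(1/y) \leq \sqrt{y}$ both hold, and the AM-GM step is algebraically correct. The gap is in the last step, where you claim $\sum_t \sigma_t^2\gamma_t^2 = \Ocal(\sigma_{max}\log K)$. Unwinding the definitions, $\gamma_t^2 = \log K/(\Dcal_t+\gamma_0)$, so $\sum_t\sigma_t^2\gamma_t^2 \leq \sigma_{max}\log K\sum_t\sigma_t/(\Dcal_t+\gamma_0) \leq \sigma_{max}\log K\cdot\log\bigl((\Dcal_T+\gamma_0)/\gamma_0\bigr)$. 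But $\Dcal_T$ can be as large as $d_{max}T$, while $\gamma_0 = 24^2 d_{max}^2 K^{2/3}\log K$, so $\Dcal_T/\gamma_0$ scales like $T/(d_{max}K^{2/3}\log K)$ and the last log is $\Theta(\log T)$ in general --- it is \emph{not} absorbable into a constant. Your bound therefore actually produces $\sum_{i\neq i^*}\frac{c^2\sigma_{max}}{\Delta_i\log K}\log T$, with an extra $\log T$ factor that the lemma does not allow.

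The root cause is the relaxation $y\log(1/y)\leq\sqrt{y}$. It is correct pointwise but discards the crucial structure of the entropy: the pointwise maximum of $-b_t z\log z - \Delta_i z$ over $z\in(0,1]$ is $b_t e^{-\Delta_i/b_t-1}$, which decays \emph{exponentially} as $b_t \sim \sigma_t\gamma_t \to 0$, whereas the maximum of $u\sqrt{z}-\Delta_i z$ is $u^2/(4\Delta_i)\sim\sigma_t^2\gamma_t^2/\Delta_i$, which decays only at rate $1/\Dcal_t$ and hence sums to a logarithm. The paper avoids this by never passing to $\sqrt{y}$: it splits $C$ into the entropy piece and a linear piece, bounds the linear piece by truncating at an arm-dependent round $T_i$ (after which the regret term dominates, so the excess is controlled by $\sqrt{S_{T_i-1}+\gamma_0}\lesssim \sigma_{max}/(\Delta_i\sqrt{\log K})$ rather than by a logarithm), and for the entropy piece maximizes $-b_tz\log z - \Delta_iz$ exactly and integrates the resulting exponential, which converges without any $\log T$. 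You would need to import one of those two mechanisms --- the per-arm truncation or the exponential pointwise maximum --- for your argument to close; AM-GM against $\sqrt{x_{t,i}}$ alone cannot.
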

Part of the pseudo-regret bound that corresponds to Lemma~\ref{lemma:Cbound}, comes from the penalty term related to the negative entropy part of the regularizer. In this part, despite the fact that $\sigma_{max}$ can be much smaller than $d_{max}$ (Lemma~\ref{lemma:sigmamax}), the $\sum_{i\neq i^*} \frac{\sigma_{max}}{\Delta_i \log K}$ term could be very large when suboptimal gaps are small. In Appendix~\ref{sec:removingdelta} we show how having an asymmetric learning rate $\gamma_{t,i} \simeq \gamma_t / \sqrt{\Delta_i}$ for negative entropy regularizer allows to remove the factor $\sum_{i\neq i^*} 1/\Delta_i$ in front of $\sigma_{max}$.

Finally, by plugging \eqref{eq:Dbound},\eqref{eq:Abound},\eqref{eq:Bbound},\eqref{eq:Cbound} into \eqref{eq:mainbound} we obtain the desired regret bound.

\subsection{Adversarial Bound}\label{sec:adversaryanalysis}
For the adversarial regime we use the final bound of \citet{zimmert2021}, which holds for any non-increasing learning rates:
\[
\Reg \leq \sum_{t=1}^T \eta_t \sqrt{K} + \sum_{t=1}^T \gamma_t \sigma_t + 2\eta_{T}^{-1}\sqrt{K}  + \gamma_T^{-1}\log K.
\]
It suffices to substitute the values of the learning rates and use Lemma~\ref{lemma:integralinequality} for function $\frac{1}{\sqrt{x}}$:
\begin{align*}
    \Reg &\leq \sum_{t=1}^T \frac{\sqrt{K}}{\sqrt{t + \eta_0}} + \sum_{t=1}^T \frac{\sigma_t\sqrt{\log K}}{\sqrt{D_t + \gamma_0}} + 2\sqrt{KT + K\eta_0} +   \sqrt{\log(K)D_T + \gamma_0\log(K)}\\
    &= \Ocal\lr{\sqrt{KT} + \sqrt{\log(K)D_T} + d_{max} K^{1/3}\log K}.
\end{align*}




\section{Refined lower bound}
In this section, we prove a tight lower bound for adversarial regret with arbitrary delays.
Both \citet{zimmert2020} and \citet{thune2019} derived improved regret bounds when some delays are extremely large by skipping feedback that takes too long to arrive.
It has been open ever since whether their bounds are tight.
We answer this positively by showing that the regret bound of \citet{zimmert2020} is not improvable without additional assumptions.
We first derive a refined lower bound for full-information games with variable loss ranges,
which might be of independent interest. A proof is provided in Appendix~\ref{sec:lowerboundsproof}.

\begin{theorem}
\label{thm:full-info var range}
Let $L_T\geq L_{T-1}\geq\dots\geq L_1\geq 0$ be a non-increasing sequence of positive reals and assume that there exists a permutation $\rho:[T]\rightarrow[T]$, such that the losses at time $t$ are bounded in $[0,L_{\rho(t)}]^K$.
The minimax regret of the adversarial full-information game is lower bounded by
\begin{align*}
    {Reg}^* = \max\left\{\frac{1}{2}\sum_{t=1}^{\lfloor\log_2(K)\rfloor}L_t,\frac{1}{32}\sqrt{\sum_{t=\lfloor\log_2(K)\rfloor}^T L_t^2\log(K) } \right\}\,.
\end{align*}
\end{theorem}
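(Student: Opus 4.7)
The plan is to establish the two arguments of the max separately via two randomized adversary constructions and then invoke Yao's minimax principle. The permutation $\rho$ lets the adversary freely assign large ranges to whichever rounds hurt the learner most, so I would reserve the $\lfloor\log_2 K\rfloor$ largest $L_t$'s for a binary-split construction (which produces the linear lower bound) and hand the remaining ranges to a random-sign construction (which produces the $\sqrt{\log K}$ term); the $\max$ in the theorem is then realised by running whichever of the two adversaries yields the larger regret on the given sequence of $L_t$'s.

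\textbf{Binary-split construction.} Let $m=\lfloor\log_2 K\rfloor$. I would use the classical halving adversary: initialise $S_0 = [K]$; at round $t\in[m]$ partition $S_{t-1}$ into two halves of equal size, flip an independent fair coin $\xi_t$, assign loss $L_t$ to every arm in the $\xi_t$-chosen half and $0$ to every arm in the other half, and set $S_t$ to the zero-loss half. Pad the remaining rounds with zero losses. After $m$ steps $|S_m|\geq 1$, so the comparator's cumulative loss is $0$. Any deterministic algorithm chooses its arm at round $t$ based only on $\xi_1,\dots,\xi_{t-1}$, hence by symmetry its conditional expected loss at round $t$ is exactly $L_t/2$. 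Summing over $t\in[m]$ and applying Yao's principle yields the first lower bound.

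\textbf{Random-sign construction.} On the rounds to which the remaining $L_t$'s are assigned, draw $\epsilon_{t,i}\in\{0,1\}$ uniformly and independently across arms and rounds, and set $\ell_{t,i}=L_t\epsilon_{t,i}$. The learner's expected cumulative loss is $\tfrac12\sum_t L_t$ regardless of its policy, while a standard lower-tail bound for the minimum of $K$ independent sums of centred bounded random variables with per-term variance $L_t^2/4$ gives
\[
\E\bigl[\min_{i\in[K]}\sum_t \ell_{t,i}\bigr] \;\le\; \tfrac12\sum_t L_t - c\sqrt{\sum_t L_t^2 \log K}
\]
for an absolute constant $c$. Subtracting the two expected losses and tuning $c\geq 1/32$ yields the second bound.

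\textbf{Main obstacle.} The delicate step is pinning down the absolute constant $1/32$ in the minimum-of-$K$-sums bound. I would either apply a Berry--Esseen comparison between $\min_i \sum_t (\tfrac12-\epsilon_{t,i})L_t$ and the maximum of $K$ i.i.d.\ centred Gaussians with variance $\tfrac14\sum_t L_t^2$, exploiting $\E[\max_{i\le K}Z_i]\gtrsim\sqrt{2\log K}$ together with a Kolmogorov-distance correction, or argue directly with a Paley--Zygmund / Khintchine-type lower tail combined with a union bound over the $K$ arms, choosing thresholds so that a constant fraction of arms attain a sum below the desired level. The lower summation index $\lfloor\log_2 K\rfloor$ in the second sum is exactly what such tail bounds need in order to produce a genuine $\sqrt{\log K}$ factor: with fewer rounds the Gaussian/Khintchine approximations degenerate. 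No unification of the two constructions into a single loss sequence is required, since the $\max$ in the statement is attained by whichever adversary dominates, and $\rho$ gives us the freedom to slot each construction's rounds into the positions where its assigned $L_t$'s live.
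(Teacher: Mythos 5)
Your first lower bound (the halving/binary-split argument on the $\lfloor\log_2 K\rfloor$ rounds with the largest ranges, padding the rest with zero) is the same construction the paper uses, modulo an inconsequential centring of the losses. The interesting divergence is in the second term. The paper does \emph{not} use a random-sign oblivious adversary at all: it builds an \emph{adaptive} loss sequence (Algorithm~\ref{alg:adversary}) that (a) tracks the EXP3 trajectory $z_t$ and the algorithm's expected trajectory $x_t$, (b) at each round picks a zero-mean (w.r.t.\ $z_t$) loss vector of variance $\ge 1/2$ (Lemma~\ref{lemma:variance of losses}), and (c) flips its sign so that $\ip{x_t,\ell_t}\ge 0 = \ip{z_t,\ell_t}$, so the algorithm provably does no better than EXP3. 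The regret of EXP3 is then read off in closed form through the negentropy potential (Claim~\ref{clm:negentropy}), with the case assumption used precisely to guarantee $|\eta\ell_{ti}|\le 1/2$ so that the quadratic lower bound on $\exp$ and $\log$ applies. No anti-concentration estimate is needed anywhere.

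Your alternative route — i.i.d.\ Bernoulli losses plus a lower-tail bound on $\min_{i\le K}\sum_t L_t\epsilon_{t,i}$ — is conceptually valid via Yao's principle, but the tool you name to close "the delicate step" does not close it. A uniform Berry--Esseen / Kolmogorov-distance correction has error of order $\sum_t L_t^3 / (\sum_t L_t^2)^{3/2} \le L_{\max}/\sqrt{\sum_t L_t^2}$, which, after you extract the regularity condition $L_{\max} \le \tfrac{1}{8}\sqrt{\sum_t L_t^2/\log K}$ from the case assumption (the same inequality the paper derives in \eqref{eq:eta bound}), is still only $O(1/\sqrt{\log K})$. You need the deviation probability $\P[\text{sum} \le -c\sqrt{\mathrm{Var}\log K}]$ to be $\gtrsim K^{-c'}$ for a small constant $c'<1$; that target probability is \emph{polynomially} small in $K$, while your additive error term is only \emph{logarithmically} small, so the uniform Berry--Esseen correction swamps the Gaussian tail you are trying to match. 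To salvage the argument you would need a genuinely tail-sensitive estimate — a non-uniform Berry--Esseen, a Cramér/moderate-deviation expansion (helped by the vanishing skewness of centred symmetric Bernoullis), or a direct Paley--Zygmund-over-blocks argument — and then track constants to hit $1/32$. None of this is impossible, but it is a substantial technical project that your sketch does not contain, and it is exactly the kind of machinery the paper's EXP3-comparison construction is designed to avoid. You should also note that the fact that $L_{\max}$ in the second construction is small compared with $\sqrt{\sum L_t^2/\log K}$ is not an automatic consequence of "reserving the largest ranges for halving"; it is a consequence of the explicit case split, and your writeup should derive it rather than gesture at it.
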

From here we can directly show a lower bound for the full-information game with variable delays. This implies the same lower bound for bandits, since we have strictly less information available.
\begin{corollary}
\label{thm:full-info lower bound}
Let $(d_t)_{t=1}^T$ be a sequence of non-increasing delays, such that $d_t \leq T+1-t$ and let an oblivious adversary select all loss vectors $(\ell_t)_{t=1}^T$ in $[0,1]^K$ before the start of the game.
The minimax regret of the full-information game is bounded from below by
\begin{align*}
{Reg}^* = \Omega\left(\min_{S\subset[T]}|S|+\sqrt{D_{\bar S}\log(K)}\right)\,,\mbox{ where }D_{\bar S}=\sum_{t\in[T]\setminus S}d_t\,.
\end{align*}
\end{corollary}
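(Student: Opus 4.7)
The plan is to reduce the delayed full-information game to the variable-range full-information game of Theorem~\ref{thm:full-info var range} via a block decomposition that exploits the non-increasing delay structure. Since $(d_t)_{t=1}^T$ is non-increasing, the sets $\Phi_k := \{t \in [T] : d_t \in [2^{k-1}, 2^k)\}$ (with $\Phi_0 := \{t : d_t = 0\}$) form consecutive intervals of $[T]$ that appear in order of decreasing delay magnitude. Inside each phase $\Phi_k$ with $k \geq 1$ I would partition the rounds into consecutive blocks of length $2^{k-1}$, and inside $\Phi_0$ every round is its own block. Labelling the resulting blocks $B_1, B_2, \ldots, B_N$ in temporal order and setting $L_i := |B_i|$ produces a non-increasing sequence, since both the phase ordering and the within-phase block size are non-increasing.

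The key structural property of this construction is that no feedback ever arrives within a block: for any $s < t$ in the same block $B \subseteq \Phi_k$ with $k \geq 1$, the gap satisfies $t - s \leq |B|-1 \leq 2^{k-1}-1 < d_s$, so $s+d_s \geq t$ and round $t$ does not yet see round $s$'s feedback. Consequently, if an oblivious adversary commits in advance to a single loss vector $\ell^{(i)} \in [0,1]^K$ per block and plays $\ell_t = \ell^{(i)}$ for every $t \in B_i$, then each $x_t$ with $t \in B_i$ depends only on feedback originating in blocks $B_1, \ldots, B_{i-1}$. Granting the learner immediate access to the entire tuple $(\ell^{(1)}, \ldots, \ell^{(i-1)})$ at the start of block $i$ can only decrease the minimax regret, so the delayed game is at least as hard as the variable-range full-information game with $N$ rounds, range sequence $L_1 \geq \cdots \geq L_N$, the learner's round-$i$ action being the block-averaged distribution $\bar x^{(i)} := L_i^{-1}\sum_{t \in B_i} x_t$, and the adversary's action being $m_i := L_i\,\ell^{(i)} \in [0,L_i]^K$. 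Applying Theorem~\ref{thm:full-info var range} with the identity permutation then yields
\begin{equation*}
Reg^* \;=\; \Omega\!\left(\max\!\left\{\textstyle\sum_{i=1}^{\lfloor\log_2 K\rfloor} L_i,\ \sqrt{\log K \cdot \sum_{i>\lfloor\log_2 K\rfloor} L_i^2}\right\}\right).
\end{equation*}

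To convert this into the $\min_S$ form of the corollary, I would exhibit the specific set $S^* := B_1 \cup \cdots \cup B_{\lfloor\log_2 K\rfloor}$, i.e., the $\lfloor\log_2 K\rfloor$ largest blocks. Then $|S^*| = \sum_{i \leq \lfloor\log_2 K\rfloor} L_i$, and for every remaining block $B_i \subseteq \Phi_k$ every delay in the block is strictly below $2^k = 2L_i$, so $\sum_{t \in B_i} d_t \leq 2L_i^2$ and summing gives $D_{\bar{S^*}} \leq 2\sum_{i > \lfloor\log_2 K\rfloor} L_i^2$. Combining these two estimates shows that $\min_{S \subset [T]}(|S| + \sqrt{D_{\bar S}\log K}) \leq |S^*| + \sqrt{D_{\bar{S^*}}\log K}$ is of the same order as the right-hand side of the displayed bound, and therefore $Reg^* = \Omega(\min_{S \subset [T]}(|S| + \sqrt{D_{\bar S}\log K}))$.

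I expect the main technical obstacle to be the bookkeeping around partial tail blocks: when $|\Phi_k| < 2^{k-1}$ the construction yields a block shorter than $2^{k-1}$, which still satisfies the no-intra-block-feedback property but can violate $\sum_{t \in B} d_t \leq 2|B|^2$. Because there is at most one partial block per phase and phase-induced block sizes form a geometric sequence, either a slightly refined construction (e.g., choosing block sizes adaptively as the largest $n$ with $d_{t_i+n-1} \geq n-1$) or absorbing the partial tails into $S^*$ in the final comparison handles this with only a constant-factor loss in $\Omega$. The other routine step is the information-monotonicity argument formalizing that promoting the delayed intra-block feedback to full block-level feedback at block boundaries cannot increase the minimax regret, which is standard but needs to be stated carefully since the learner within a block may otherwise gain information from previously committed pre-block losses arriving mid-block.
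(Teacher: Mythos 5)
Your overall reduction strategy matches the paper's: reduce the delayed game to the variable-range full-information game of Theorem~\ref{thm:full-info var range} by grouping rounds into blocks such that no feedback originating in a block arrives within that block, and then have the adversary repeat one loss vector per block. The difference is in the block construction. You partition into dyadic phases $\Phi_k=\{t:d_t\in[2^{k-1},2^k)\}$ and cut each phase into uniform blocks of length $2^{k-1}$; the paper instead uses a greedy, delay-adaptive construction $B_m=\{b_m,\dots,b_{m+1}-1\}$ where $b_{m+1}$ is the first time at which some feedback from $B_m$ would land, yielding directly (and tightly) the two properties it needs: $|B_m|\geq|B_{m+1}|$ and $\sum_{t\in B_{m+1}}d_t\leq |B_m|^2$. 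Your dyadic construction achieves analogous bounds for \emph{full} blocks, but, as you note, it breaks for partial tails, and moreover a partial tail of $\Phi_k$ can be shorter than a full block of $\Phi_{k-1}$, so temporal order no longer coincides with size order. This matters in the last step: Theorem~\ref{thm:full-info var range} lower-bounds the regret by $\frac{1}{2}\sum_{t\leq\lfloor\log_2 K\rfloor}L_t$ where the $L_t$ are sorted, so $S^*$ must be the $\lfloor\log_2 K\rfloor$ \emph{largest} blocks. Your suggested "absorbing partial tails into $S^*$" fix is not automatic, because then $|S^*|$ exceeds what Theorem~\ref{thm:full-info var range} certifies as a regret lower bound, and the resulting inequality $Reg^*\geq c|S^*|$ would need a separate argument (it does hold up to constants, via a geometric-series bound on the tails against the largest block, but this must be said). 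Your other suggested fix, adaptive block lengths $n$ maximal with $d_{t_i+n-1}\geq n-1$, essentially rediscovers the paper's greedy construction and removes the partial-tail issue altogether. In short: correct reduction, genuinely different (and more rigid) block decomposition, with bookkeeping gaps that the paper's greedy buckets sidestep cleanly.
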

\begin{proof}
We divide the time horizon greedily into $M$ buckets, such that the actions for all timesteps inside a bucket have to be chosen before the first feedback from any timestep inside the bucket is received. In other words, let bucket $B_m = \{b_m,\dots,b_{m+1}-1\}$, then $\forall t\in B_m:t+d_t> b_{m+1}-1$, while $\exists t\in B_m: t+d_t=b_{m+1}$.
This division of buckets has the following properties:
\begin{itemize}
    \item[{\it(i)}] monotonically decreasing sizes: $|B_1|\geq |B_2|\geq\dots\geq|B_M|$.
    \item[{\it (ii)}] upper bound on the sum of delays: $\forall m\in[M-1]: |B_m|^2\geq \sum_{t\in B_{m+1}}d_t$.
\end{itemize}
Both properties follow directly from the non-decreasing nature of the delays.
\begin{align*}
    &|B_m| = b_{m+1}-b_m \leq b_m + d_{b_m}-b_m = d_{b_m}\\
    &|B_m| = \min_{t\in B_m}\{ d_t + t-b_m\} \geq d_{b_{m+1}-1}+\min_{t\in B_m}\{  t-b_m\}=d_{b_{m+1}-1}\,.
\end{align*}
Hence 
\begin{align*}
&|B_m|\geq d_{b_{m+1}-1}\geq d_{b_{m+1}}\geq |B_{m+1}|\,,\\
&\textstyle\sum_{t\in B_{m+1}}d_t \leq |B_{m+1}|\cdot d_{b_{m+1}}\leq |B_{m+1}|\cdot |B_m|\leq |B_m|^2\,.
\end{align*}
Set $S'= \bigcup_{m=1}^{\lfloor\log_2(K)\rfloor}B_m$ and
let the adversary set all losses within a bucket to the same value, then the game reduces to a full information game over $M$ rounds with loss ranges $|B_1|,|B_2|,\dots,|B_M|$.
Applying Theorem~\ref{thm:full-info var range} yields
\begin{align*}
    {Reg}^* &= \max\left\{\frac{1}{2}\sum_{m=1}^{\lfloor\log_2(K)\rfloor}|B_m|,\frac{1}{32}\sqrt{\sum_{m=\lfloor\log_2(K)\rfloor}^M |B_m|^2\log(K) } \right\}\\
    &\geq \max\left\{\frac{1}{2}|S'|,\frac{1}{32}\sqrt{\sum_{t\in\bar S'} d_t\log(K) } \right\} = \Omega\left(\min_{S\subset[T]}|S|+\sqrt{\sum_{t\in\bar S} d_t\log(K) }\right)\,.
\end{align*}
\end{proof}

\section{Discussion}

We have presented a best-of-both-worlds analysis of a slightly modified version of the algorithm of \citet{zimmert2020} for bandits with delayed feedback. The key novelty of our analysis is the control of the drift of the playing distribution over arbitrary, but bounded, time intervals when the learning rate is changing over time. This control is necessary for best-of-both-worlds guarantees, but it is much more challenging than the drift control over fixed time intervals with fixed learning rate that appeared in prior work. 

We also presented an adversarial regret lower bound matching the skipping-based refined regret upper bound of \citet{zimmert2020} within constants.

Our work leads to several exciting open questions. The main one is whether skipping can be used to eliminate the need in oracle knowledge of $d_{max}$. If possible, this would remedy the deterioration of the adversarial bound by the additive factor of $d_{max}$, because the skipping threshold would be dominated by $\sqrt{D_{\bar S} \log K}$. Another open question is whether $\frac{\sigma_{max}}{\Delta_i}$ term can be eliminated from the stochastic bound. Yet another open question is whether the $d_{max}$ factor in the stochastic bound can be reduced to $\sigma_{max}$ and whether the multiplicative terms dependent on $K$ can be eliminated.

\begin{ack}
This project has received funding from European Union's Horizon 2020 research and innovation programme under the Marie Skłodowska-Curie grant agreement No 801199. YS acknowledges partial support by the Independent Research Fund Denmark, grant number 9040-00361B.
\end{ack}

 \small

 \bibliographystyle{plainnat}
\bibliography{references.bib}

\newpage

\appendix

\section{Proof of Lemmas for Algorithm~\ref{alg:FTRL-delay}}\label{sec:appA}
\subsection{Proof of Lemma~\ref{lemma:sigmamax}}
\begin{proof}
Let $S$ be an arbitrary subset of rounds and suppose that we are at round $t$. If we take the rounds $S$ out of our calculations for $\sigma_{t}$, then the outstanding observations with respect to the remaining rounds, denoted by $\bar S$, is $d_{max}(\bar S)$. On the other hand, if the skipped rounds $S$ all have infinite delays then the outstanding observations at round $t$ w.r.t. the rounds in $S$ is at most $|S|$. So the actual outstanding observations is at most $d_{max}(\bar S) + |S|$.
 \end{proof}


\subsection{Proof of Self Bounding
Lemmas}\label{sec:selfbouningproofs}
First we provide some auxiliary lemmas which are helpful for proving the self-bounding lemmas. 
\begin{lemma}[Integral inequality: Lemma 4.13 of \citet{Orabona2019}]\label{lemma:integralinequality}
 Let $g(x)$ be positive, and nonincreasing function then for all sequence of $\{z_n\}_{n \in [N]}$ we have
 \[
 \sum_{n=1}^{N} z_n g\lr{\sum_{i = 0}^n z_i} \leq \int_{z_0}^{\sum_{i = 0}^N z_i}  g(x) dx.
 \]

\end{lemma}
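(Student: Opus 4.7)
The plan is to prove Lemma~\ref{lemma:integralinequality} by the standard Riemann-sum/telescoping argument that underlies lower-Riemann-sum bounds for monotone functions. Since $g$ is positive and nonincreasing (and the $z_n$ are implicitly assumed nonnegative, so that the partial sums $S_n \eqdef \sum_{i=0}^n z_i$ form a nondecreasing sequence), on each subinterval $[S_{n-1}, S_n]$ the function $g$ attains its minimum at the right endpoint $S_n$. This immediately gives a per-term lower bound on the integral by a rectangle of width $z_n$ and height $g(S_n)$.

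More concretely, the first step is to write, for each $n \in \{1, \ldots, N\}$,
\[
z_n \, g(S_n) \;=\; \int_{S_{n-1}}^{S_n} g(S_n)\, dx \;\leq\; \int_{S_{n-1}}^{S_n} g(x)\, dx,
\]
where the inequality uses that $g(x) \geq g(S_n)$ for all $x \in [S_{n-1}, S_n]$ by monotonicity of $g$ (and the fact that $S_{n-1} \leq S_n$ because $z_n \geq 0$). The second step is to sum this inequality over $n = 1, \ldots, N$; the right-hand sides telescope:
\[
\sum_{n=1}^{N} \int_{S_{n-1}}^{S_n} g(x)\, dx \;=\; \int_{S_0}^{S_N} g(x)\, dx \;=\; \int_{z_0}^{\sum_{i=0}^{N} z_i} g(x)\, dx,
\]
which yields exactly the claimed bound.

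I do not anticipate a genuine obstacle here; the only subtlety worth flagging is the implicit sign hypothesis on the $z_n$'s. If $z_n$ may be negative (so that the partial sums $S_n$ are not monotone), the elementary rectangle-vs-integral bound breaks down because the interval $[S_{n-1}, S_n]$ may be traversed ``backwards'' and $g$ need not dominate $g(S_n)$ on it. Since the lemma is invoked in the excerpt with $z_n$'s that represent delays or unit increments, all positive, this is not a concern for the applications, and it is the natural reading of the statement. No additional machinery beyond monotonicity of $g$ and additivity of the integral is required.
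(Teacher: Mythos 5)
Your proof is correct, and it is the standard lower-Riemann-sum argument for nonincreasing functions. The paper itself does not supply a proof: it simply cites Lemma 4.13 of \citet{Orabona2019}, so there is no ``paper proof'' to compare against; your argument is exactly the textbook derivation that citation points to. You are also right to flag the implicit assumption $z_n \geq 0$ (so that $S_{n-1} \leq S_n$ and the rectangle-vs-integral comparison is valid), which holds in every place the paper invokes the lemma (indices, delays, outstanding-observation counts, and the squared-gap sums are all nonnegative). Nothing further is needed.
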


\begin{lemma}\label{lemma:observationarriving}
        Let $\sigma_t$, and $a_t$ be the number of outstanding observations and arriving observations at time $t$, respectively, then the following inequality holds for all $t$
        \[
        \sum_{s = 1}^t \sigma_s \geq \sum_{s=1}^t \frac{a_s^2 - a_s}{2} 
        \]
\begin{proof}
Note that $A_s = \lrc{r : r + d_r = t}$. Then we define $D_s = \lrc{d_r: r \in A_s}$ be the set of the delays of the rounds that arrive at round $s$. $D_s$ must have $a_s = |A_s|$ different number of elements because $\forall r \in A_s: r + d_r = s$. As a result we have
\[
\sum_{r \in A_s} d_r \geq 0 + 1 + \ldots + a_s-1 = a_s(a_s-1)/2.
\]
This gives us the following inequality
\begin{align*}
    \sum_{s = 1}^t \frac{a_s^2-a_s}{2} &= \sum_{s = 1}^t\sum_{r \in A_s} d_r\\
    &\leq \sum_{r \in \bigcup\limits_{s=1}^{t} A_s} d_r = \sum_{r + d_r \leq t} d_r.
\end{align*}
On the other hand for each round $r$, where $r + d_r \leq t$, we count it exactly $d_r$ number of times as an outstanding observation in $\sum_{s = 1}^t \sigma_s$. Therefore, we have  
$\sum_{s = 1}^t \sigma_s \geq \sum_{r + d_r \leq t} d_r$
that together with the above inequality complete the proof.
\end{proof}
\end{lemma}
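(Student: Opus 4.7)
The plan is a direct double-counting argument: bound $\sum_{s=1}^t \sigma_s$ from below by the total realized delay of all observations that have already arrived by round $t$, and then bound that in turn by $\sum_{s=1}^t a_s(a_s-1)/2$ via the observation that two observations arriving at the same round must have had distinct delays.

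First I would rewrite $\sum_{s=1}^{t}\sigma_s$ by swapping the order of summation. Since $\sigma_s$ counts, for each earlier round $r$, whether the observation issued at round $r$ is still outstanding at round $s$, the contribution of round $r$ to the total $\sum_{s=1}^t \sigma_s$ is exactly the number of rounds in $\{1,\dots,t\}$ during which that observation is pending, which equals $\min(d_r,\, t - r)$. In particular, for every $r$ with $r + d_r \leq t$ the full delay $d_r$ is accumulated, so discarding the contributions from rounds whose observation is still outstanding at the end of round $t$ gives the one-sided bound
\[
\sum_{s=1}^t \sigma_s \;\geq\; \sum_{r:\, r+d_r \leq t} d_r \;=\; \sum_{s=1}^t \sum_{r \in A_s} d_r,
\]
where the equality uses that $\{r : r + d_r \leq t\}$ is partitioned by the arrival sets $A_1, \ldots, A_t$.

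The second step is a short combinatorial argument inside each fixed $A_s$. If $r, r' \in A_s$ with $r \neq r'$, then $r + d_r = r' + d_{r'} = s$ forces $d_r \neq d_{r'}$; hence $\{d_r : r \in A_s\}$ is a collection of $a_s$ pairwise distinct non-negative integers, so $\sum_{r \in A_s} d_r \geq 0 + 1 + \cdots + (a_s-1) = a_s(a_s-1)/2$. Summing over $s$ and chaining with the previous display yields the claim.

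I do not anticipate any real technical obstacle: the argument is purely combinatorial and does not rely on any property of the algorithm, the regularizer, or the losses. The only mild subtlety is to make sure the boundary convention in the definition of $\sigma_s$ (whether the arriving round itself counts as outstanding) is handled consistently, but since the lemma uses only the full-delay contributions from rounds $r$ with $r+d_r \leq t$, this ambiguity drops out when we pass to the one-sided lower bound.
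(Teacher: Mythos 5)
Your proof is correct and takes essentially the same approach as the paper: both arguments hinge on the observation that observations arriving at the same round must have pairwise distinct delays, giving $\sum_{r\in A_s}d_r\geq a_s(a_s-1)/2$, and both lower-bound $\sum_{s=1}^t\sigma_s$ by $\sum_{r:r+d_r\leq t}d_r$ via double counting. Your version is slightly more explicit about the per-round contribution being $\min(d_r,t-r)$, but the argument is the same.
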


\subsubsection{Proof of Lemma~\ref{lemma:Abound}}

\begin{proof}
Here we give bound for $4A - Reg_T$.
    \begin{align}
        4A - Reg_T &= \sum_{t =1}^T\sum_{i \neq i^*} \lr{\frac{4a x_{t,i}^{\frac{1}{2}}}{\sqrt{t+\eta_0}}- x_{t,i}\Delta_i}\notag\\
        &\leq \sum_{t =1}^T \sum_{i \neq i^*} \frac{16a^2}{(t + \eta_0) \Delta_i} \leq \sum_{i \neq i^*} \frac{16a^2}{\Delta_i}\log (T/\eta_0 + 1)\label{eq:selfAarb},
    \end{align}
    where the first inequality uses $\forall x,y\geq 0:  x + y \geq 2\sqrt{xy} \Rightarrow 2\sqrt{xy} - y \leq x$ so called AM-GM, and the second inequality follows the integral inequality for logarithm function which is $\sum_{t=1}^T 1/(t+\eta_0) \leq \log(T+\eta_0) - \log(\eta_0)$.
\end{proof}

\subsubsection{Proof of Lemma~\ref{lemma:Bbound}}
\begin{proof}
We have  
\[
4B - Reg_T = \sum_{t=1}^T \sum_{i \neq i^*}  x_{t,i}\Delta_i\lr{4b (a_{t+d_t}-1)\gamma_{t+d_t}  - 1}.
\]
We define $T_0$ be the first $t$ where $\gamma_t^{-1} \geq 4b (a_{max}-1)$, where $a_{max} = \max_{s \in [T]}\{a_s\}$. So in the  summation over time, the rounds with $t+d_t \geq T_0$ have a negative effect as $b (a_{t+d_t}-1) \gamma_{t+d_t} - 1 \leq \frac{4b(a_{t+d_t}-1)}{4b(a_{max}-1)} - 1 \leq 0$. Therefore, we skip them in the summation 
 
 \begin{align}
    4B - Reg_T   &\leq    \sum_{t + d_t < T_0} \sum_{i \neq i^*}x_{t,i}\Delta_i\lr{4b (a_{t+d_t}-1)  \gamma_{t+d_t}  -
    1}\notag\\
    &\leq \sum_{t + d_t < T_0} 4b (a_{t+d_t}-1)  \gamma_{t+d_t} =  \sum_{t=1}^{T_0-1} \sum_{s + d_s = t} 4b (a_{t}-1)  \gamma_{t+d_t} = \sum_{t=1}^{T_0-1} 4b a_t(a_{t}-1)  \gamma_{t+d_t}, \label{eq:selfboundingB12}
\end{align}
where the second inequality holds because $\sum_{i\neq i^*} x_{t,i} \Delta_i \leq 1$ and $a_{t+d_t} \geq 1$.
For simplicity of notation, we denote $\atil_{t} = a_t(a_t-1)/2$ for which Lemma~\ref{lemma:observationarriving} gives us $\sum_{s = 1}^t \atil_{t} \leq \sum_{s = 1}^t \sigma_s$. Therefore using this inequality we have 
    \begin{align}
    \sum_{t=1}^{T_0-1} 4b a_t(a_{t}-1)  \gamma_{t+d_t} &\leq \sum_{t=1}^{T_0-1} \frac{8b\sqrt{\log K} \atil_{t}}{\sqrt{\sum_{s = 1}^t \atil_{t}}}\notag\\
    &\leq 8b \sqrt{\log K\sum_{t = 1}^{T_0-1} \atil_{t}} \leq  8b \sqrt{\log K\sum_{t = 1}^{T_0-1} \sigma_{t}} = 8b\log K \gamma_{T_0-1}^{-1}, \label{eq:selfboundingB22}
    \end{align}

where the second inequality uses integral inequality Lemma \ref{lemma:integralinequality} for $g(x) = \frac{1}{\sqrt{x}}$. Moreover, from the way we chose $T_0$ we have $\gamma_{T_0-1}^{-1}\leq 4b(a_{max}-1)$. Combining this with \eqref{eq:selfboundingB12} and \eqref{eq:selfboundingB22} gives us  $B - \Reg \leq 32b^2a_{max}\log K$.
\end{proof}

\subsubsection{Proof of Lemma~\ref{lemma:Cbound}}
\begin{proof}
First we remove $i^*$ from $C$ by using the following inequality
    \[
    -x_{t,i^*} \log(x_{t,i^*}) \leq (1 - x_{t,i^*}) = \sum_{i \neq i^*} x_{t,i}.
    \]
    This is derived by the fact that $z \log(z) + 1-z $ is decreasing in $z \in [0,1]$ and the minimum value is zero, so consequently it is positive elsewhere. Now using this inequality we can take $i^*$ out from $C$
    \[
    \sum_{t=2}^{T}\sum_{i=1}^K  \frac{-4c\sigma_t x_{t,i}\log(x_{t,i})}{\sqrt{(S_t + \gamma_0)\log K}} \leq 4\underbrace{\sum_{t=1}^{T} \sum_{i\neq i^*}  \frac{- c\sigma_t x_{t,i}\log(x_{t,i})}{\sqrt{(S_t + \gamma_0) \log K}}}_{C_1} + 4\underbrace{\sum_{t=1}^{T}\sum_{i\neq i^*}  \frac{c\sigma_t x_{t,i}}{\sqrt{(S_t + \gamma_0) \log K}}}_{C_2},
    \]
    where $S_t = \sum_{s=1}^t \sigma_s$.
    We will break the self bounding expression, $C - Reg_T$, into $4\lr{C_1 - \alpha Reg_T} + 4\lr{C_2 - \beta Reg_T}$, where $\alpha + \beta = 1/4$. \\
    \textbf{Self bounding for C2}\\
    Let $\sigma_{max} = \max_{t \in [T]}\{\sigma_t\}$ and define $T_i$ be the first $t$ where $S_t + \gamma_0 \geq \frac{c^2\sigma_{max}^2}{\beta^2\Delta_i^2\log K}$. So for all $t \geq 0$ we have 
    \[
    \frac{c\sigma_t x_{t,i}}{\sqrt{(S_t + \gamma_0) \log K}} - \beta x_{t,i}\Delta_i \leq 0.
    \]
    Therefore we can skip the rounds with negative effect in the summation over time and get
    \begin{align}
        C_2 - \beta Reg_T &\leq \beta \sum_{i\neq i^*} \sum_{t=1}^{T_i-1} x_{t,i}\lr{\frac{c\sigma_t}{\beta \sqrt{(S_t + \gamma_0) \log K}} - \Delta_i}\notag\\
        &\leq \sum_{i\neq i^*} \sum_{t=1}^{T_i-1} \frac{c\sigma_t}{\sqrt{(S_t + \gamma_0) \log K}} \notag\\
        &\leq \sum_{i\neq i^*}  \frac{2c(\sqrt{S_{T_i-1} + \gamma_0} -\sqrt{\gamma_0})  }{\sqrt{\log K}}\notag\\
        &\leq \sum_{i\neq i^*} \frac{2c^2\sigma_{max}}{\beta \Delta_i\log K},\label{eq:c2arb}
    \end{align}
    where the third inequality uses Lemma \ref{lemma:integralinequality} for $g(x)= \frac{1}{\sqrt{x}}$ and the last inequality follows by the choice of $T_i$ where $S_{T_i -1} + \gamma_0 \leq \frac{c^2\sigma_{max}^2}{\beta^2 \Delta_i^2 \log K}$.\\
    
\textbf{Self bounding for C1}\\
    For  $C_1 - \alpha Reg_T$, let $b_t = \frac{c\sigma_t}{\alpha\sqrt{(S_t + \gamma_0)\log K}}$, then 
    \begin{align*}
        2C_1 - \alpha Reg_T &= \alpha \sum_{t=1}^{T} \sum_{i\neq i^*} \lr{- b_t x_{t,i} \log(x_{t,i}) - \Delta_i x_{t,i}}\\
        &\leq \alpha \sum_{t=1}^{T} \sum_{i\neq i^*}  \max_{z \in \RR}\lrc{- b_t z \log(z) - \Delta_i z}.
    \end{align*}
    Function $g(z) = - b_t z \log(z) - \Delta_i z$ is a concave function and maximum occurs when the derivative is zero. So we must have $- b_t \log(z) - b_t - \Delta_i = 0 \Rightarrow z = e^{-\frac{\Delta_i}{b_t} - 1}$ and substituting this gives us $\max_{z \in \RR} g(z) = b_t e^{-\frac{\Delta_i}{b_t} - 1}$. Now we can use it in the previous equation to upper bound $C_1 - \alpha Reg_T$ as the following
    \begin{align*}
        C_1 - \alpha Reg_T &\leq \alpha \sum_{t=1}^{T} \sum_{i\neq i^*} b_t e^{-\frac{\Delta_i}{b_t} - 1}\notag\\
        &= \sum_{i\neq i^*} \sum_{t=1}^{T} \frac{c\sigma_t}{\sqrt{(S_t + \gamma_0)\log K}} exp\lr{-\frac{\alpha\Delta_i \sqrt{(S_t + \gamma_0)\log K}}{c\sigma_t}-1}\notag\\
        &\leq \sum_{i\neq i^*} \sum_{t=1}^{T} \sigma_t \times \frac{c}{\sqrt{(S_t + \gamma_0)\log K}} exp\lr{-\frac{\alpha\Delta_i \sqrt{(S_t + \gamma_0)\log K}}{c\sigma_{max}}-1},
    \end{align*}
where $\sigma_{max} = \max_{t \in [T]}\{ \sigma_t\}$. Now let $g_i(x) = \frac{c}{\sqrt{x\log K}} exp\lr{-\frac{\alpha\Delta_i \sqrt{x\log K}}{c\sigma_{max}}-1}$ then for each $i$ we need to upper bound 
\[
\sum_{t=1}^{T} \sigma_t g_i(S_t) = \sum_{t=1}^{T} (S_t -S_{t-1}) g_i(S_t),
\]
which using Lemma \ref{lemma:integralinequality} can be upper bound by $\int_{S_0}^{S_T} g_i(x) dx$ because $g$ is nonincreasing. On the other hand for any $\delta, a \geq 0$ we know $\int  \frac{a}{\sqrt{x}} exp(-\frac{\delta \sqrt{x}}{a}-1) d x = -\frac{2a^2}{\delta}exp(-\frac{\delta \sqrt{x}}{a}-1)$. So using the closed form of $\int g_i(x) dx$ with denoting $\delta = \frac{\alpha \Delta_i}{\sigma_{max}}, a = \frac{c}{\sqrt{\log K}}$, we have 
    \begin{align}
         C_1 - \alpha Reg_T &\leq \sum_{i\neq i^*} \int_{S_0}^{S_T} g_i(x) dx \notag\\
        &= \sum_{i\neq i^*} \frac{-2c^2\sigma_{max}}{\alpha \Delta_i \log K}exp\lr{-\frac{\alpha\Delta_i \sqrt{x\log K}}{c\sigma_{max}}-1} |_{x = S_0}^{x = S_T}  \notag\\
        & = \frac{2c^2\sigma_{max}\lr{exp\lr{-\frac{\alpha\Delta_i \sqrt{S_0\log K}}{c\sigma_{max}}-1} - exp\lr{-\frac{\alpha\Delta_i \sqrt{S_T\log K}}{c\sigma_{max}}-1}}}{\alpha \Delta_i \log K}\notag\\
        &\leq \sum_{i\neq i^*} \frac{2c^2 \sigma_{max}}{\alpha\Delta_i \log K}\label{eq:c1arb},
    \end{align}
   Now merging \eqref{eq:c2arb} and \eqref{eq:c1arb} gives us
    \begin{align}
    C - Reg_T \leq \sum_{i\neq i^*} \frac{8c^2\sigma_{max}}{ \Delta_i \log K} \lr{\frac{1}{\beta} + \frac{1}{\alpha}} &= \sum_{i\neq i^*} \frac{8c^2\sigma_{max}}{ \Delta_i \log K} \lr{\frac{1}{1/4 - \alpha} + \frac{1}{\alpha}} \notag\\
    &\leq \sum_{i\neq i^*} \frac{128c^2\sigma_{max}}{\Delta_i \log K}\label{eq:selfCarb},
    \end{align}
    where the second inequality uses $\alpha = \frac{1}{8}$.

\end{proof}

  \subsection{Proof of the Stability Lemma}\label{sec:stabilitylemma}
The lemma has two parts, the first part is the general bound for the stability term and the second is the special case of that bound where we set $\alpha$ to some specific value to get a desirable bound.\\

Before starting the proof we provide one fact and one lemma that help us in the proof of the stability lemma.\\
\begin{fact}[\citep{zimmert2020}]\label{fact:incconvexf*'}
 $f_t^{*'}(x)$ is a convex monotonically increasing function.
 \begin{proof}
 The proof is available in Section 7.3 of the supplementary material of \citet{zimmert2020}.
 \end{proof}
\end{fact}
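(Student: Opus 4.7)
My plan is to exploit the coordinate-separable structure of the regularizer: writing $F_t(x)=\sum_i f_t(x_i)$ with
\[
f_t(y) \;=\; -2\eta_t^{-1}\sqrt{y} + \gamma_t^{-1}\,y(\log y - 1),\qquad y>0,
\]
the claim reduces to a purely one-dimensional calculus statement about the derivative of the Legendre conjugate $f_t^*$. Since $f_t$ is strictly convex on $(0,\infty)$ with $f_t'(y)\to-\infty$ as $y\to 0^+$ and $f_t'(y)\to+\infty$ as $y\to\infty$, the map $f_t'\colon(0,\infty)\to\R$ is a continuous bijection, and the standard conjugate identity $(f_t^*)'=(f_t')^{-1}$ then holds on all of $\R$. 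This also disposes of any domain question about $(f_t^*)'$.

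For monotonicity I would simply compute
\[
f_t''(y)\;=\;\tfrac12\eta_t^{-1}y^{-3/2}+\gamma_t^{-1}y^{-1}\;>\;0,
\]
so $f_t'$ is strictly increasing and hence its inverse $(f_t^*)'$ is strictly increasing on $\R$.

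For convexity I would use implicit differentiation. Setting $y(z):=(f_t^*)'(z)$, the identity $z=f_t'(y(z))$ differentiated once gives $(f_t^*)''(z) = 1/f_t''(y(z))$, and differentiating a second time yields
\[
\bigl((f_t^*)'\bigr)''(z) \;=\; (f_t^*)'''(z) \;=\; -\frac{f_t'''(y(z))}{f_t''(y(z))^{3}}.
\]
A direct calculation gives $f_t'''(y)=-\tfrac34\eta_t^{-1}y^{-5/2}-\gamma_t^{-1}y^{-2}<0$, while $f_t''(y)>0$, so the right-hand side is strictly positive, establishing convexity of $(f_t^*)'$.

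The only mildly delicate point in the argument is ensuring $(f_t^*)'$ is defined on all of $\R$ rather than merely on an open subinterval, which is handled by the surjectivity observation in the first paragraph. Beyond that, each step is routine one-variable calculus: no probabilistic or combinatorial obstacle arises, and one essentially gets both properties ``for free'' from the signs of $f_t''$ and $f_t'''$.
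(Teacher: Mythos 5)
Your proof is correct and self-contained, which is more than the paper itself offers here (it merely cites Section~7.3 of the supplementary material of Zimmert and Seldin, 2020, without reproducing the argument). Your route is the standard one for such coordinate-separable regularizers: verify that $f_t'$ is a strictly increasing continuous bijection from $(0,\infty)$ onto $\R$ so that $(f_t^*)'=(f_t')^{-1}$ is well-defined everywhere, read off monotonicity of $(f_t^*)'$ from $f_t''>0$, and obtain convexity via the implicit-differentiation identity $(f_t^*)'''(z)=-f_t'''(y(z))/f_t''(y(z))^3$ together with the sign checks $f_t''(y)=\tfrac12\eta_t^{-1}y^{-3/2}+\gamma_t^{-1}y^{-1}>0$ and $f_t'''(y)=-\tfrac34\eta_t^{-1}y^{-5/2}-\gamma_t^{-1}y^{-2}<0$. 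All the computations check out, and your remark about surjectivity of $f_t'$ correctly closes the only domain-related gap. This is almost certainly the same elementary argument that the cited reference gives, so I would regard your proposal as matching the paper's intent while having the advantage of being self-contained.
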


\begin{lemma}\label{lemma:stability}
Let $D_F(x,y) = F(x) - F(y) - \innerprod{x-y}{\nabla F(y)}$ be the Bergman divergence of a function $F$ as $f_t(x)$ defined in \eqref{eq:regularizer}, any $x \in \textbf{dom}(f_t)$, and any $\ell$ such that $\ell \geq -\gamma_t^{-1}$:
\[
D_{f_t^{*}}(f_t^{'}(x) - \ell, f_t^{'}(x)) \leq \frac{\ell^2}{2f_t^{''}(ex)}. 
\]
Moreover it is easy to see $f_t^{''}(ex)^{-1} \leq 4f_t^{''}(x)^{-1}$, that leads to have $D_{f_t^{*}}(f_t^{'}(x) - \ell, f_t^{'}(x)) \leq \frac{2\ell^2}{f_t^{''}(x)}$.
\end{lemma}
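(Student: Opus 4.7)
\begin{proofsketch}
The plan is to apply a one-dimensional second-order Taylor expansion to the conjugate $f_t^*$ around the point $f_t'(x)$, and then to locate the Taylor remainder using the monotonicity property from Fact~\ref{fact:incconvexf*'}. Taylor's theorem with Lagrange remainder gives
\[
D_{f_t^*}\bigl(f_t'(x) - \ell,\, f_t'(x)\bigr) \;=\; \tfrac{1}{2}\, \ell^2\, f_t^{*''}(\xi)
\]
for some $\xi$ between $f_t'(x) - \ell$ and $f_t'(x)$. Combined with the standard conjugacy identity $f_t^{*''}(f_t'(z)) = 1/f_t''(z)$, this reduces the target inequality to an upper bound on $f_t^{*''}(\xi)$. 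Since Fact~\ref{fact:incconvexf*'} says $f_t^{*'}$ is convex, its derivative $f_t^{*''}$ is monotonically increasing, so it suffices to show $\xi \leq f_t'(ex)$.

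The second step, proving $\xi \leq f_t'(ex)$, is the crux and splits into two cases. If $\ell \geq 0$ the bound is trivial, since $\xi \leq f_t'(x) \leq f_t'(ex)$ by monotonicity of $f_t'$. If $-\gamma_t^{-1} \leq \ell < 0$, then $\xi \leq f_t'(x) + \gamma_t^{-1}$, and I would compute
\[
f_t'(ex) - f_t'(x) \;=\; \eta_t^{-1} x^{-1/2}\bigl(1 - e^{-1/2}\bigr) + \gamma_t^{-1}
\]
directly from the closed form of $f_t'$, and observe that this is at least $\gamma_t^{-1}$. This is the only place the structure of the hybrid regularizer is actually used, and it is precisely why the admissibility condition in the statement reads $\ell \geq -\gamma_t^{-1}$ (rather than something involving $\eta_t$): the negative-entropy part contributes exactly $\gamma_t^{-1}$ to the shift $f_t'(ex) - f_t'(x)$, and the Tsallis part contributes a nonnegative remainder.

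Finally, the ``moreover'' clause is a routine computation: writing $f_t''(z) = \tfrac{1}{2}\eta_t^{-1} z^{-3/2} + \gamma_t^{-1} z^{-1}$, each of the two terms of $f_t''(x)$ is bounded by a constant multiple of the corresponding term of $f_t''(ex)$ (the worst case being the Tsallis term with ratio $e^{3/2}$), and coarse bounding yields $f_t''(ex)^{-1} \leq 4\, f_t''(x)^{-1}$. I expect the main obstacle to be the case $\ell < 0$ in the bound on $\xi$: one must verify the explicit arithmetic that the entropy part of $f_t'$ is slow-growing enough in $x$ that inflating $x$ to $ex$ creates enough room to absorb a backwards shift of size $\gamma_t^{-1}$ in the dual coordinate. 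Everything else is Taylor's theorem together with the standard conjugacy identities.
\end{proofsketch}
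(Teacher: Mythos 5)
Your proposal is correct and follows essentially the same path as the paper's proof: Taylor's theorem on $f_t^*$, the conjugacy identity $f_t^{*''}(f_t'(z))=1/f_t''(z)$ together with monotonicity from Fact~\ref{fact:incconvexf*'}, the split into $\ell\geq 0$ and $\ell<0$, and the explicit computation $f_t'(ex)-f_t'(x)=\eta_t^{-1}x^{-1/2}\bigl(1-e^{-1/2}\bigr)+\gamma_t^{-1}\geq\gamma_t^{-1}$ to handle the negative case. (Incidentally your $e^{-1/2}$ is the correct factor; the paper's displayed $1-\tfrac{1}{\sqrt{2}}$ is a typo, immaterial since both are positive.)
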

\begin{proof}

Taylor's theorem tells us that there exists $\xtil \in \lrs{f_t^{*'}(f_t^{'}(x)- \ell), f_t^{*'}(f_t^{'}(x))}$ such that
\[
 D_{f_t^{*}}(f_t^{'}(x) - \ell, f_t^{'}(x)) = \frac{1}{2}\ell^2 f_t^{*''}(f_t^{'}(\xtil)) = \frac{1}{2} \ell^2 f_t^{''}(\xtil)^{-1},
\]
where the second equality uses a direct property of convex conjugate operation. We have two cases for $\ell$:
\begin{enumerate}
    \item If $\ell \geq 0$ then based on Fact \ref{fact:incconvexf*'} we know $f_t^{*'}$ is increasing so  $\xtil \leq x$. On ther other hand $f^{''}(x)^{-1}$ is increasing so $f_t^{''}(\xtil)^{-1} \leq f_t^{''}(x)^{-1} \leq f_t^{''}(ex)^{-1}$.
    \item If $\ell < 0$ then $\xtil \in \lrs{f_t^{*'}(f_t^{'}(x)), f_t^{*'}(f_t^{'}(x)- \ell)}$. We will show that $f_t^{*'}(f_t^{'}(x)- \ell) \leq e x$, which by the choice of $\xtil$ in the specified interval we have  $\xtil \leq e x$ and consequently like the other case, we will end up having $f_t^{''}(\xtil)^{-1} \leq f_t^{''}(e x)^{-1}$.\\
Since $f^{*'}$ is increasing and $e x = f^{*'}(f^{'}(ex))$, then is suffices to prove $f^{'}(e x) \geq f^{'}(x) - \ell$ or equivalently $f^{'}(e x)  - f^{'}(x) \geq -\ell$.
So 
\begin{align*}
    f^{'}(e x) - f^{'}(x) &= \lr{-\eta_t^{-1} (e x)^{-1/2} + \gamma_t^{-1} \log(ex)} - \lr{-\eta_t^{-1} x^{-1/2} + \gamma_t^{-1} \log(x)} \\
    &= \eta_t^{-1} x^{-1/2} \lr{1 - \frac{1}{\sqrt{2}}} + \gamma_t^{-1} \geq \gamma_t^{-1}  \geq -\ell
\end{align*}
\end{enumerate} 
\end{proof}

\begin{proof}[\textbf{Proof of the First Part of the Stability Lemma}]
We know $x_{t} = \arg\min_{x\in\Delta^{K-1}}\langle\Lhat_t^{obs},x\rangle + F_t(x)$ so by the KKT conditions there exists $c \in \RR$ such that  $-\hat L_t^{obs} = \nabla F_t(x_t) - c_0 \textbf{1}_K$. On the other hands we know $\bar F_t(-L + c\textbf{1}_K) = \bar F_t(-L) + c$  for any $c \in \RR$ and $L \in \RR^K$ and the equality holds iff $c = 0$. Therefore, using these two facts we can rewrite the stability term as
\begin{align}
\sum_{t=1}^T {\innerprod{x_t}{\hat\ell_{t}^{obs}} + \bar F_t^*(-\hat{L}^{obs}_{t+1}) - \bar F_t^*(-\hat{L}^{obs}_{t})} &= \sum_{t=1}^T {\innerprod{x_t}{\hat\ell_{t}^{obs}-\alpha_t \textbf{1}_K} + \bar F_t^*(-\hat{L}^{obs}_{t+1} + (\alpha_t + c_0) \textbf{1}_K) - \bar F_t^*(-\hat{L}^{obs}_{t} + c_0\textbf{1}_K)}\notag\\
&= \sum_{t=1}^T {\innerprod{x_t}{\hat\ell_{t}^{obs}-\alpha_t \textbf{1}_K} + \bar F_t^*(\nabla F_t(x_t) - (\hat\ell_{t}^{obs}-\alpha_t \textbf{1}_K)) - \bar F_t^*(\nabla F_t(x_t))}\notag\\
&\leq \sum_{t=1}^T {\innerprod{x_t}{\hat\ell_{t}^{obs}-\alpha_t \textbf{1}_K} + F_t^*(\nabla F_t(x_t) - (\hat\ell_{t}^{obs}-\alpha_t \textbf{1}_K)) -  F_t^*(\nabla F_t(x_t))}\notag\\
&= \sum_{i = 1}^K D_{f_t^{*}}\lr{f_t^{'}(x_{t,i}) - (\hat\ell_{t,i}^{obs} - \alpha_t), f_t^{'}(x_{t,i})}\label{eq:stabilityanalysis},
\end{align}
where the inequality holds because $\bar F_t^*(L) \leq  F_t^*(L)$ for all $L \in \RR^K$ and the equality holds iff there exists $x$ such that  $L = \nabla F_t(x)$. Hence, since $\alpha_t \leq \gamma_t^{-1}$ then
$
\hat\ell_{t,i}^{obs}-\alpha_t \geq -\alpha_t \geq -\gamma_t^{-1}
$.
This implies that we can apply Lemma~\ref{lemma:stability} to get the following bound for \eqref{eq:stabilityanalysis}
\[
stability \leq \sum_{i=1}^K 2 f_t^{''}(x_{t,i})^{-1} (\hat\ell_{t,i}^{obs}-\alpha_t)^2.
\]
\end{proof}

 \begin{proof}[\textbf{Proof of the Second Part of the Stability Lemma}]
 First we must check whether $\alpha_t = \frac{\sum_{j=1}^K  f^{''}(x_{t,j})^{-1} \elltil_{t,j}}{\sum_{j=1}^K  f^{''}(x_{t,j})^{-1}}$ satisfies $\alpha_t \leq \gamma_t^{-1}$ or not:
\begin{align*}
    \alpha_t &= \frac{\sum_{j=1}^K  f^{''}(x_{t,j})^{-1} \elltil_{t,j}}{\sum_{j=1}^K  f^{''}(x_{t,j})^{-1}}\\
    &= \frac{\sum_{j=1}^K  f^{''}(x_{t,j})^{-1} \sum_{s\in A_t} \ellhat_{s,j}}{\sum_{j=1}^K  f^{''}(x_{t,j})^{-1}}\\
    &\leq 8 |A_t| (K-1)^\frac{1}{3} \leq 8 d_{max} (K-1)^\frac{1}{3} \leq \gamma_t^{-1},
\end{align*}
where the first inequality uses Lemma \ref{lemma:aux1}. To make the analysis looks simpler for all $i$ let define $z_{i} = f_t^{''}(x_{t,i})^{-1}$ then by substituting the value of $\alpha_t$ in stability expression we have
\begin{align}
    \sum_{i = 1}^K z_i ({\elltil}_{t,i}-\alpha_t)^2 &= \sum_{i = 1}^K z_i {\elltil}_{t,i}^2 - 2\sum_{i = 1}^K z_i {\elltil}_{t,i} \alpha_t + \sum_{i = 1}^K z_i \alpha_t^2\notag\\ 
    &= \sum_{i = 1}^K z_i {\elltil}_{t,i}^2 - \frac{(\sum_{i = 1}^K z_i {\elltil}_{t,i})^2}{\sum_{i = 1}^K z_i}\notag\\
    &= \sum_{i = 1}^K z_i {\elltil}_{t,i}^2 - \frac{\sum_{i = 1}^K z_i^2 {\elltil}_{t,i}^2}{\sum_{i = 1}^K z_i} - \frac{\sum_{i \neq j} z_i z_j {\elltil}_{t,i}{\elltil}_{t,j}}{\sum_{i = 1}^K z_i}\notag\\
    &=  \sum_{i = 1}^K \lr{z_i  - \frac{ z_i^2 }{\sum_{i = 1}^K z_i}}\lr{\sum_{s\in A_t} \ellhat_{s,i}}^2 - \frac{\sum_{i \neq j} z_i z_j \lr{\sum_{r,s \in A_t} {\ellhat}_{r,i}{\ellhat}_{s,j}}}{\sum_{i = 1}^K z_i}\notag\\
    &= \sum_{i = 1}^K \lr{z_i  - \frac{ z_i^2 }{\sum_{i = 1}^K z_i}}\lr{\sum_{s\in A_t} \ellhat_{s,i}^2}\label{eq:stab1}\\
    &+ \sum_{i = 1}^K \lr{ z_i  - \frac{z_i^2}{\sum_{i = 1}^K z_i}}\lr{\sum_{r,s \in A_t, r \neq s} \ellhat_{r,i}\ellhat_{s,i}} - \frac{\sum_{i \neq j} z_i z_j \lr{\sum_{r,s \in A_t} {\ellhat}_{s,i}{\ellhat}_{r,j}}}{\sum_{i = 1}^K z_i}\label{eq:stab2},
\end{align}
We call \eqref{eq:stab1} as Stab1 and \eqref{eq:stab2} as Stab2.\\
We start bounding the expectation of Stab1.
\begin{align}
    \EE[\text{Stab1}] = &\leq \EE\lrs{\sum_{i = 1}^K \lr{z_i  - \frac{ z_i^2}{\sum_{i = 1}^K z_i}}\lr{\sum_{s\in A_t} \ellhat_{s,i}^2}}\notag\\
    &= \EE\lrs{\sum_{i = 1}^K \lr{ z_i  - \frac{ z_i^2}{\sum_{i = 1}^K z_i}} \lr{\sum_{s\in A_t} \EE_s[\ellhat_{s,i}^2]}}\notag\\
    &= \EE\lrs{\sum_{i = 1}^K \lr{ z_i  - \frac{ z_i^2}{\sum_{i = 1}^K z_i}} \lr{\sum_{s\in A_t} \ell_{s,i}^2 x_{s,i}^{-1}}}\notag\\
    &\leq \sum_{s \in A_t}  \EE\lrs{ \sum_{i = 1}^K z_i x_{s,i}^{-1} - \frac{\sum_{i = 1}^K f_t^{''}(x_{t,i})^{-2}x_{s,i}^{-1}}{\sum_{i = 1}^K z_i}} \notag\\
    &\leq  \sum_{s \in A_t} \EE\lrs{ \sum_{i = 1}^K z_i x_{s,i}^{-1} (1 - x_{s,i}) }\notag\\
    &\leq  \sum_{s \in A_t} \EE\lrs{ \sum_{i = 1}^K 2\eta_t x_{t,i}^{3/2} x_{s,i}^{-1} (1 - x_{s,i}) }\label{eq:stab1bound},
\end{align}
where the second inequality bound losses by one and use the fact that $z_i  - \frac{z_i^2}{\sum_{i = 1}^K z_i} \geq 0$, the third inequality uses Cauchy-Schwarz inequality as   
$
\sum_{i = 1}^K z_i^2 x_{s,i}^{-1} = \lr{\sum_{i = 1}^K z_i^2 x_{s,i}^{-1}} \lr{\sum_{i =1}^K x_{s,i}} \geq \lr{\sum_{i = 1}^K z_i}^2
$, and the last inequality uses the fact that $z_i = f_t^{''}(x_{t,i})^{-1} \leq 2\eta_t x_{t,i}^{3/2}$. 


For the Stab2 we have
\begin{align}
    \EE[\text{Stab2}] &= \EE\lrs{\frac{1}{\sum_{i = 1}^K z_i}\lr{\sum_{i \neq j}\sum_{r,s \in A_t} -z_i z_j  {\ellhat}_{s,i}{\ellhat}_{r,j} + \sum_{i = 1}^K \sum_{r,s \in A_t, r \neq s} ( z_i(\sum_{i = 1}^K z_i) - z_i^2 ) \ellhat_{r,i}\ellhat_{s,i}}}\notag\\
    &= \EE\lrs{\frac{1}{\sum_{i = 1}^K z_i}\lr{\sum_{i \neq j}\sum_{r,s \in A_t} -z_i z_j  \mu_i \mu_j + \sum_{i = 1}^K \sum_{r,s \in A_t, r \neq s} ( z_i(\sum_{i = 1}^K z_i) - z_i^2 ) \mu_i^2}}\notag\\
     &= \EE\lrs{\frac{a_t(a_t-1)}{\sum_{i = 1}^K z_i}\lr{\sum_{i \neq j} -z_i z_j \mu_i \mu_j - \sum_{i = 1}^K z_i^2 \mu_i^2 + \sum_{i = 1}^K  z_i(\sum_{i = 1}^K z_i) \mu_i^2}}\notag\\
     &= \EE\lrs{\frac{a_t(a_t-1)}{\sum_{i = 1}^K z_i}\lr{-(\sum_{i=1}^K z_i\mu_i)^2 + (\sum_{i = 1}^K z_i \mu_i^2)(\sum_{i = 1}^K z_i) }}\notag\\
     &\leq \EE\lrs{\frac{a_t(a_t-1)}{\sum_{i = 1}^K z_i}\lr{-(\sum_{i = 1}^K z_i)^2\mu_{i^*}^2 + (\sum_{i = 1}^K z_i \mu_i^2)(\sum_{i = 1}^K z_i) }}\notag\\
     &= \EE\lrs{a_t(a_t-1)(\sum_{i = 1}^K z_i \mu_i^2 - \sum_{i=1}^K z_i\mu_{i^{*}}^2)}\notag\\
     &\leq \EE\lrs{a_t(a_t-1)(\sum_{i \neq i^*} 2z_i\Delta_i )}\notag\\
     &\leq \EE\lrs{\sum_{i \neq i^*} 2a_t(a_t-1)\gamma_t x_{t,i}\Delta_i }\label{eq:stab2bound}
    \end{align}
where the second equality follows by the fact that for all $s\in A_t$,  $x_{s}$ has no impact on determination of $x_t$ and for all different elements of $A_t$ such as $r, s \in A_t, r < s$, $x_r$ has no impact on determination of $x_s$. Regarding the inequalities, the first one holds because for all $i$ $\mu_i^* \leq \mu_i$, the second inequality follows by $\mu_i + \mu_{i^*} \leq 2$ and $\mu_i - \mu_{i^*} = \Delta_i$, and the last one substitutes $z_i = f^{''}(x_{t,i})^{-1} \leq \gamma_t x_{t,i}$.

Combining \eqref{eq:stab1bound} and \eqref{eq:stab2bound} completes the proof.
\end{proof}

 \section{Proof of the Key Lemma}\label{sec:proofkeylemma}
 \subsection{Auxiliary Materials for the Key Lemma}
 First we provide two facts and a lemma which are needed for proof of the key lemma.
  
\begin{fact}\label{fact:0}
 $f_t^{'}(x)$ is a concave function.
\begin{proof}
$f_t^{'}(x)= -\eta^{-1} x^{-1/2} + \gamma_t^{-1}\log x $ so the second derivative is $-\frac{3}{4}\eta^{-1} x^{-5/2} - \gamma_t^{-1}x^{-2} \leq 0$.
\end{proof}
\end{fact}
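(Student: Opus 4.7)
The plan is a direct derivative calculation on the one-variable summand of the regularizer. Since $F_t$ is coordinate-separable as $F_t(x)=\sum_{i=1}^K f_t(x_i)$ with $f_t(x)=-2\eta_t^{-1}\sqrt{x}+\gamma_t^{-1}x(\log x-1)$, it suffices to show that the one-dimensional function $f_t'(x)=-\eta_t^{-1}x^{-1/2}+\gamma_t^{-1}\log x$ is concave on $(0,\infty)$. My approach is to compute the second derivative $(f_t')''(x)$ and read off its sign.

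Using the elementary identities $\tfrac{d^2}{dx^2}(x^{-1/2})=\tfrac{3}{4}x^{-5/2}$ and $\tfrac{d^2}{dx^2}(\log x)=-x^{-2}$, I would obtain $(f_t')''(x)=-\tfrac{3}{4}\eta_t^{-1}x^{-5/2}-\gamma_t^{-1}x^{-2}$. Both terms are non-positive because the reciprocal learning rates $\eta_t^{-1},\gamma_t^{-1}$ are strictly positive by construction, and the domain of $f_t$ is restricted to $x>0$ (implicitly by the logarithm and square root appearing in $f_t$). Concavity of $f_t'$ then follows immediately. There is no genuine obstacle here; this fact is a standalone computational check that will be used downstream in the proof of the Key Lemma, where monotonicity and convexity properties of the dual-gradient $f_t^{*'}$ are invoked via the usual inverse-function identities (so that, combined with Fact~\ref{fact:incconvexf*'}, one can control how $\nabla \bar F_t^*$ moves under shifts of the loss vector).
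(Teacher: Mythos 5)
Your proposal is correct and follows exactly the paper's own argument: compute $(f_t')''(x) = -\tfrac{3}{4}\eta_t^{-1}x^{-5/2} - \gamma_t^{-1}x^{-2}$ and observe that both terms are non-positive on the domain $x>0$. There is nothing to add.
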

\begin{fact}\label{fact:derivative2ndinverse}
 $f_t^{''}(x)^{-1}$ is a convex function.
 \begin{proof}
 Let $g(x) = f_t^{''}(x)^{-1} = (\frac{\eta_t^{-1}x^{-3/2}}{2} + \gamma_t^{-1}x^{-1})^{-1}$, then the second derivative of $g(x)$ is 
 \[
g^{''}(x) =  \dfrac{{\eta}_\text{t}{\gamma}_\text{t}^2\cdot\left(2{\eta}_\text{t}x^\frac{7}{2}+3{\gamma}_\text{t}x^3\right)}{2\sqrt{x}\left(2{\eta}_\text{t}x^\frac{3}{2}+{\gamma}_\text{t}x\right)^3},
 \]
 which is positive.
 \end{proof}
\end{fact}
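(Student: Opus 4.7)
The claim reduces to a direct second-derivative computation. The plan is to set $h(x) := f_t''(x) = \tfrac{1}{2}\eta_t^{-1} x^{-3/2} + \gamma_t^{-1} x^{-1}$, observe that $h(x) > 0$ on $(0,\infty)$ so that $g(x) := 1/h(x)$ is smooth there, and then apply the standard reciprocal rule
\begin{equation*}
g''(x) \;=\; \frac{2\,h'(x)^2 - h(x)\,h''(x)}{h(x)^3}.
\end{equation*}
Since $h^3 > 0$, convexity of $g$ is equivalent to the numerator being nonnegative on $(0,\infty)$.

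I would then compute $h'(x) = -\tfrac{3}{4}\eta_t^{-1} x^{-5/2} - \gamma_t^{-1} x^{-2}$ and $h''(x) = \tfrac{15}{8}\eta_t^{-1} x^{-7/2} + 2\gamma_t^{-1} x^{-3}$, and expand $2h'(x)^2$ and $h(x)h''(x)$. Each expansion produces only three types of monomials, classified by whether the scalar coefficient carries a factor $\eta_t^{-2}$, a factor $\eta_t^{-1}\gamma_t^{-1}$, or a factor $\gamma_t^{-2}$, at powers $x^{-5}$, $x^{-9/2}$, and $x^{-4}$ respectively. Subtracting term by term, the $\gamma_t^{-2} x^{-4}$ terms cancel exactly (both expansions contribute $2\gamma_t^{-2} x^{-4}$), leaving $2h'(x)^2 - h(x)h''(x) = \tfrac{3}{16}\eta_t^{-2}\,x^{-5} + \tfrac{1}{8}\eta_t^{-1}\gamma_t^{-1}\,x^{-9/2}$, which is strictly positive.

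Finally, dividing by $h(x)^3 > 0$ and factoring common powers of $x$ and of the parameters matches the closed-form expression for $g''(x)$ stated in the fact (whose numerator and denominator are both manifestly positive for $\eta_t, \gamma_t, x > 0$), completing the proof. The only non-routine moment is the exact cancellation of the $\gamma_t^{-2}$-terms; everything else is bookkeeping of coefficients and no conceptual obstacle is expected.
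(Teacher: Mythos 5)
Your proposal is correct and takes essentially the same route as the paper: a direct computation of $g''$ followed by the observation that it is positive, with your reciprocal-rule decomposition $g'' = (2(h')^2 - h\,h'')/h^3$ simply supplying the intermediate bookkeeping that the paper omits. The arithmetic checks out — $2(h')^2 - h\,h'' = \tfrac{3}{16}\eta_t^{-2}x^{-5} + \tfrac{1}{8}\eta_t^{-1}\gamma_t^{-1}x^{-9/2}$ with exact cancellation of the $\gamma_t^{-2}x^{-4}$ terms — and dividing by $h^3>0$ reproduces the closed form displayed in the paper.
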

 
 \begin{lemma}\label{lemma:aux1}
 Assume that for $t$ and $s$ there exists $\alpha$ such that $x_{t,i} \leq \alpha x_{s,i}$  for all $i \in [K]$ and let $f(x) = \lr{-2\eta_t^{-1}\sqrt{x}+\gamma_t^{-1}x(\log x -1)}$, then we have the following inequality 
\[
\frac{\sum_{j=1}^K  f^{''}(x_{t,j})^{-1} \ellhat_{s,j}}{\sum_{j=1}^K  f^{''}(x_{t,j})^{-1}} \leq 2\alpha(K-1)^\frac{1}{3}.
\]
\end{lemma}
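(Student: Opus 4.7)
My plan is to first eliminate the random loss estimator, which has sparse support, and then reduce the claim to a purely distributional inequality about $x_t$. Setting $k=I_s$, so that $\hat\ell_{s,j}$ vanishes for $j\neq k$ and $\hat\ell_{s,k}\leq 1/x_{s,k}$, and using the hypothesis in the form $1/x_{s,k}\leq \alpha/x_{t,k}$, it suffices to prove the $\hat\ell$-free inequality
\[
\frac{f''(x_{t,k})^{-1}/x_{t,k}}{\sum_{j=1}^K f''(x_{t,j})^{-1}}\;\leq\; 2(K-1)^{1/3}
\]
for every $x_t\in\Delta^{K-1}$ and every $k\in[K]$.

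To analyse the ratio I would substitute the closed forms
\[
f''(x)^{-1}=\frac{2\eta_t\gamma_t\, x^{3/2}}{\gamma_t+2\eta_t x^{1/2}},\qquad \frac{f''(x)^{-1}}{x}=\frac{2\eta_t\gamma_t\, x^{1/2}}{\gamma_t+2\eta_t x^{1/2}},
\]
both of which are harmonic-mean combinations of a Tsallis piece and an entropy piece. Exploiting that the harmonic mean $AB/(A+B)$ is comparable (up to a factor $2$) to $\min(A,B)$, and that $x_{t,j}^{3/2}\leq x_{t,j}$ for $x_{t,j}\leq 1$, I can lower bound the denominator by $\min(\eta_t,\gamma_t/2)\sum_j x_{t,j}^{3/2}$ and upper bound the numerator by $2\alpha\min(\eta_t x_{t,k}^{1/2},\gamma_t/2)$. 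The purely distributional ingredient I then need is
\[
\sum_{j=1}^K x_{t,j}^{3/2}\;\geq\; \frac{x_{t,k}^{1/2}}{C(K-1)^{1/3}}
\]
for an absolute constant $C$. I would derive it by fixing $x_{t,k}$, applying the power-mean inequality on the remaining $K-1$ coordinates to get $\sum_{j\neq k}x_{t,j}^{3/2}\geq (1-x_{t,k})^{3/2}/(K-1)^{1/2}$, and then minimising $x_{t,k}+(1-x_{t,k})^{3/2}/((K-1)^{1/2}x_{t,k}^{1/2})$ over $x_{t,k}\in[0,1]$; the minimiser sits at $x_{t,k}\asymp (K-1)^{-1/3}$ and the minimum value is of order $(K-1)^{-1/3}$.

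The final step is a case split on whether $x_{t,k}^{1/2}\leq \gamma_t/(2\eta_t)$ (Tsallis regime for $k$) or not (entropy regime). In the Tsallis regime the numerator collapses to $2\alpha\eta_t x_{t,k}^{1/2}$, the $\eta_t$ cancels against the denominator, and the distributional inequality closes the argument immediately. In the entropy regime the numerator is bounded by $\alpha\gamma_t$, and one combines the lower bound $x_{t,k}>(\gamma_t/(2\eta_t))^2$ with the power-mean estimate on $\sum_{j\neq k}x_{t,j}^{3/2}$ applied to the Tsallis piece of the denominator. The hardest part will be gluing these two regimes together cleanly: the Tsallis and entropy pieces of $f''$ have different homogeneities in $x$, and their balance point at $x\asymp (K-1)^{-1/3}$ is exactly what forces the $(K-1)^{1/3}$ exponent, so keeping the constants tight enough to end up with $2\alpha(K-1)^{1/3}$ rather than a larger multiple requires careful bookkeeping of the harmonic-mean and power-mean constants.
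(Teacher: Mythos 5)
Your reduction to the $\hat\ell$-free form is exactly the paper's first step: $\hat\ell_{s,j}$ is supported on the single coordinate $k=I_s$, $\hat\ell_{s,k}\leq x_{s,k}^{-1}\leq\alpha x_{t,k}^{-1}$, so it suffices to control $\frac{f''(x_{t,k})^{-1}/x_{t,k}}{\sum_j f''(x_{t,j})^{-1}}$. The gap lies in how you lower-bound the denominator. Replacing each $f''(x_{t,j})^{-1}$ by $\min(\eta_t,\gamma_t/2)\,x_{t,j}^{3/2}$ is too lossy in the regime $\eta_t>\gamma_t/2$. Take $x_t$ uniform ($x_{t,j}=1/K$) with $\eta_t\gg\gamma_t\sqrt{K}$: then each $f''(x_{t,j})^{-1}\approx\gamma_t/K$ (the entropy piece dominates), so $\sum_jf''(x_{t,j})^{-1}\approx\gamma_t$, whereas your bound gives $(\gamma_t/2)K^{-1/2}$ — off by a factor of order $\sqrt K$. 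The numerator bound is tight there ($\approx\gamma_t$), so your estimate on the ratio comes out as $\Theta(\sqrt K)$, not $O(K^{1/3})$; the actual ratio in this example is $\approx 1$. The fallback you sketch for the "entropy regime for $k$" — applying the power-mean estimate to "the Tsallis piece of the denominator" — is not a valid lower bound, because the Tsallis piece $2\eta_t x^{3/2}$ is an \emph{upper} bound on the harmonic mean $f''(x)^{-1}$, not a lower bound. So this is not a matter of constant bookkeeping; the case split as described produces the wrong exponent in $K$ whenever $\eta_t>\gamma_t/2$ and $x_{t,k}$ is small.

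The ingredient you are missing is what the paper's Fact~\ref{fact:derivative2ndinverse} supplies: $f''(x)^{-1}$ is convex, so by Jensen
\[
\sum_{j\neq k}f''(x_{t,j})^{-1}\;\geq\;(K-1)\,f''\!\Bigl(\tfrac{1-x_{t,k}}{K-1}\Bigr)^{-1},
\]
which keeps the full harmonic-mean structure (both the $\eta_t$ and $\gamma_t$ pieces, evaluated at the average coordinate $(1-x_{t,k})/(K-1)$) rather than collapsing it to its pointwise worst-case piece. This turns the ratio into a scalar expression in $z:=x_{t,k}$ alone, at which point the paper observes that for $z\geq 1/K$ the expression is monotone in $\gamma_t^{-1}$ and minimized at $\gamma_t^{-1}=0$, while for $z<1/K$ the ratio is trivially $\leq\alpha$; the AM-GM step at $z\asymp(K-1)^{-1/3}$ that you anticipated then finishes the case $z\geq 1/K$. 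Your power-mean minimization over $x_{t,k}$ is the right idea but is applied one step too early — it only yields the correct $K^{1/3}$ scaling after the convexity reduction has removed the dependence on the other coordinates in a way that does not throw away the harmonic-mean balance between the two regularizer pieces.
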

 \begin{proof}[Proof for Lemma~\ref{lemma:aux1}]

 Now we aim to bound  for any $s \in A$. 
 \begin{align}
     \frac{\sum_{i=1}^K  f^{''}(x_{t,i})^{-1} \ellhat_{s,i}}{\sum_{i=1}^K  f^{''}(x_{t,i})^{-1}} &= \frac{f^{''}(x_{t,i_s})^{-1} x_{s,i_s}^{-1} \ell_{s,i_s}}{\sum_{i=1}^K f^{''}(x_{t,i})^{-1}}\notag\\
     &\leq  \frac{f^{''}(x_{t,i_s})^{-1} x_{t,i_s}^{-1} \lr{x_{t,i_s}/x_{s,i_s}} }{\sum_{i=1}^K  f^{''}(x_{t,i})^{-1}}\notag\\
     &\leq \frac{f^{''}(x_{t,i_s})^{-1} \alpha x_{t,i_s}^{-1} }{\sum_{i=1}^K  f^{''}(x_{t,i})^{-1}}\notag\\
     &\leq \frac{\alpha f^{''}(x_{t,i_s})^{-1}  x_{t,i_s}^{-1} }{(K-1) f^{''}\lr{\frac{1-x_{t,i_s}}{K-1}}^{-1} + f^{''}(x_{t,i_s})^{-1}} ~~ \text{Define} ~ z := x_{t,i_s}\notag\\ 
     &=\frac{\alpha \lr{\eta_t^{-1} z^{-3/2} + 2\gamma_t^{-1}z^{-1}}^{-1} z^{-1}}{(K-1)\lr{\eta_t^{-1} (\frac{1-z}{K-1})^{-3/2} + 2\gamma_t^{-1}(\frac{1-z}{K-1})^{-1}}^{-1} + \lr{\eta_t^{-1} z^{-3/2} + 2\gamma_t^{-1}z^{-1}}^{-1}} \notag\\
     &= \alpha\lr{(1-z)\frac{\eta_t^{-1} z^{-1/2} + 2\gamma_t^{-1}}{\eta_t^{-1} \sqrt{K-1}(1-z)^{-1/2} + 2\gamma_t^{-1}} + z}^{-1}\label{eq:lossshiftineq}
 \end{align}
 where the first inequality follows by $\ell_{s,i_s} \leq 1$, the second one holds because of induction assumption that tells us for $s \leq t: t - s \leq d_{max} \Rightarrow x_{t,i} \leq \alpha x_{s,i}$, and the third inequality is due to convexity of $f^{''}(x)^{-1}$ from Fact \ref{fact:derivative2ndinverse}. Now for $z$ we have two cases, $z < \frac{1}{K}$ and $z \geq \frac{1}{K}$.
 
 \begin{itemize}
     \item[a)] $z \leq \frac{1}{K}$: This case implies
     \begin{align}
         \frac{1-z}{z} = \frac{1}{z} - 1 \geq K-1 &\Rightarrow (1-z)^{-1/2}\sqrt{K-1} \leq  z^{-1/2}\notag\\
         &\Rightarrow 1 \leq \frac{\eta_t^{-1} z^{-1/2} +  2\gamma_t^{-1}}{\eta_t^{-1} \sqrt{K-1}(1-z)^{-1/2} + 2\gamma_t^{-1}}\label{eq:case1shiftloss}\\
     \end{align}
     Plugging \eqref{eq:case1shiftloss} into \eqref{eq:lossshiftineq} gives us 
     \begin{equation*}
     \frac{\sum_{i=1}^K  f^{''}(x_{t,i})^{-1} \ellhat_{s,i}}{\sum_{i=1}^K  f^{''}(x_{t,i})^{-1}} \leq \alpha \lr{1-z + z}^{-1} = \alpha
    \end{equation*}
     \item[b)] $z \geq \frac{1}{K}$: Similar to previous case $z \geq \frac{1}{K}$ implies $\eta_t^{-1} z^{-1/2} \leq \eta_t^{-1} \sqrt{K-1}(1-z)^{-1/2}$ so the minimum of $\frac{\eta_t^{-1} z^{-1/2} +  2\gamma_t^{-1}}{\eta_t^{-1} \sqrt{K-1}(1-z)^{-1/2} + 2\gamma_t^{-1}}$ occurs when $2\gamma_t^{-1} = 0$. So substituting $2\gamma_t^{-1} = 0$ in \eqref{eq:lossshiftineq} leads us to have
     \begin{equation}\label{eq:lossshiftineq2}
         \frac{\sum_{i=1}^K  f^{''}(x_{t,i})^{-1} \ellhat_{s,i}}{\sum_{i=1}^K  f^{''}(x_{t,i})^{-1}} \leq \alpha \lr{(1-z)^{3/2}z^{-1/2} (K-1)^{-1/2} + z}^{-1}
     \end{equation}
     In this case again we have two following cases
     \begin{itemize}
         \item[b1)] $z \geq \frac{1}{(K-1)^{1/3} + 1}$: With this we have 
         \[
         \alpha \lr{(1-z)^{3/2}z^{-1/2} (K-1)^{-1/2} + z}^{-1} \leq \alpha z ^{-1} \leq \alpha \lr{(K-1)^{1/3} + 1} \leq 2\alpha (K-1)^{1/3}
         \]
         \item[b2)] $z \leq \frac{1}{(K-1)^{1/3} + 1}$: This tells us that $(1-z) \geq \frac{(K-1)^{1/3}}{(K-1)^{1/3} + 1} \geq \frac{1}{2} $ where we can use it in \eqref{eq:lossshiftineq2} as the following
         \begin{align*}
             \alpha \lr{(1-z)^{3/2}z^{-1/2} (K-1)^{-1/2} + z}^{-1}  &\leq \alpha \lr{\frac{z^{-1/2} (K-1)^{-1/2}}{\sqrt{8}} + z}^{-1}\\
             &= \alpha\lr{\frac{z^{-1/2} (K-1)^{-1/2}}{2\sqrt{8}} + \frac{z^{-1/2} (K-1)^{-1/2}}{2\sqrt{8}} + z}^{-1}\\
             &\leq \frac{\alpha}{3}\lr{\frac{(K-1)^{-1}}{32}}^{-1/3} \leq 2\alpha (K-1)^{1/3} 
         \end{align*}
         where the second inequality uses AM-GM inequality.
     \end{itemize}
 \end{itemize}
 
 So at the end combining results of all cases and setting $\alpha = 4$, gives us the upper bound $8(K-1)^{1/3}$.
 \end{proof}

 \subsection{Main Proof}
 \begin{proof}
The prove is based on induction on \emph{valid} pairs $(t,s)$, where we call a pair $(t,s)$ a valid pair if $s \leq t$ and $t - s \leq d_{max}$. The induction step for a valid pair $(t,s)$ requires induction assumption on two set of pairs. The first set includes the valid pairs $(t^{'},s^{'})$ such that $t^{'},s^{'} < t$ and the second set consists of the valid pairs $(t, s^{'})$ where $s < s^{'} \leq t$.  Hence, the induction base will be all pairs of $(t,t)$ for all $t \in [T]$ for which the induction statement trivially holds. Hence, it suffices to check the induction step for the valid pair $(t,s)$. 

As we mention in the proof sketch we have $x_t = \bar F_t^*(-\hat L_t^{obs})$ and $x_s = \bar F_s^*(-\hat L_s^{obs})$ and we introduce $\xtil = \bar F_s^*(-\hat L_t^{obs})$ as an auxiliary variable to bridge from $x_t$ and $x_{s}$. The bridge we use for relating $x_t$ to $x_s$ follows two following steps.
 
\textbf{Deviation Induced by the Loss Shift:} This step controls the drift when we fix the regularization (more precisely the learning rates) and shift the cumulative loss. We prove the following inequality:
    \[
        \xtil_{i} \leq \frac{3}{2}x_{s,i}.
    \]
    Note that this step uses the induction assumption for $(s,s-d_r)$ for all $r < s: r + d_r = s$.\\
\textbf{Deviation Induced by the Change of Regularizer:}
     In this step we bound the drift when the cumulative loss vector is fixed and we change the regularizer. We show that
     \[
        x_{t,i} \leq \frac{4}{3}\xtil_{i}.
     \]

 \subsubsection{Deviation Induced by the Change of Regularizer}\label{sec:regularizershift}
 
  The regularizer at any round $r$ is $F_{r}(x) = \sum_{i=1}^K f_r(x_i) = \sum_{i=1}^K \lr{-2\eta_{r}^{-1}\sqrt{x_i}+\gamma_r^{-1}x_i(\log x_i-1)}$. Since $x_t = \nabla\bar F_t^*(-\hat L_t^{obs})$ and $\xtil = \nabla\bar F_s^*(-\hat L_t^{obs})$, by the KKT conditions $\exists \mu, \mutil$ s.t. $\forall i$:
 \begin{align*}
      f_{s}^{'}(\xtil_i) &= -L_{s,i}^{obs} + \mu\\
      f_{t}^{'}(x_{t, i}) &= -L_{t,i}^{obs} + \mutil.
 \end{align*}
We also know that $\exists j: \xtil_j \geq x_{t,j}$ which leads to have
\begin{align*}
    -L_{t,j}^{obs} + \mutil = f_{t}^{'}(x_{t,j}) \leq f_{s}^{'}(x_{t,j}) \leq f_{s}^{'}(\xtil_j) = -L_{s,j}^{obs} + \mu,
\end{align*}
where the first inequality holds because the learning rates are decreasing and the second inequality is due to the fact that $f_{s}^{'}(x)$ is increasing. This implies that $\mutil \leq \mu$ which gives us the following inequality for all $i$:
\begin{align*}
    f_{t}^{'}(x_{t, i}) = -\frac{1}{\eta_{t}\sqrt{x_{t, i}}}+\frac{\log(x_{t, i})}{\gamma_{t}} \leq -\frac{1}{\eta_{s}\sqrt{\xtil_i}}+\frac{\log(\xtil_i)}{\gamma_{s}} = f_{s}^{'}(\xtil_i).
\end{align*}
Define $\alpha = x_{t,i}/\xtil_i$. So using above inequality we have
\begin{align*}
 \frac{1}{\eta_{s} \sqrt{\xtil_i}}-\frac{\log(\xtil_i)}{\gamma_{s}} &\leq \frac{1}{\eta_{t}\sqrt{ \alpha \xtil_i}}-\frac{\log(\xtil_i)}{\gamma_{t}} - \frac{\log(\alpha)}{\gamma_{t}}\notag ~~~~~~(\text{multiply both sides by $\eta_t \sqrt{\xtil_i}$ and rearrange}) \\
 \Rightarrow  \frac{1}{\sqrt{\alpha}} &\geq \frac{\eta_{t}}{\eta_{s}} +  2\sqrt{\xtil_i}\log(\sqrt{\xtil_i})\lr{\frac{\eta_{t}}{\gamma_{t}} - \frac{\eta_{t}}{\gamma_{s}}} + \log(\alpha) \frac{\eta_{t}}{\gamma_{t}}\sqrt{\xtil_i}\notag\\
 &\geq \frac{\eta_{t}}{\eta_{s}} + \min_{0\leq z \leq 1} \lrc{2 z\log(z)\lr{\frac{\eta_{t}}{\gamma_{t}} - \frac{\eta_{t}}{\gamma_{s}}} + \log(\alpha) \frac{\eta_{t}}{\gamma_{t}}z}\notag\\
 &\stackrel{(a)}{=} \frac{\eta_{t}}{\eta_{s}} - \frac{2}{e} \lr{\frac{\eta_{t}}{\gamma_{t}} - \frac{\eta_{t}}{\gamma_{s}}} \lr{\frac{1}{\sqrt{\alpha}}}^{\frac{\gamma_{t}^{-1}}{\gamma_{t}^{-1} - \gamma_{s}^{-1} }} \\
 &\stackrel{(b)}{\geq} \frac{\eta_{t}}{\eta_{s}} -  \lr{\frac{\eta_{t}}{\gamma_{t}} - \frac{\eta_{t}}{\gamma_{s}}} \frac{1}{\sqrt{\alpha}}
\end{align*}
where (a) holds because the subject function of the minimization problem is convex and equating the first derivative to zero gives $z = \lr{\frac{1}{\sqrt{\alpha}}}^{\frac{\gamma_{t}^{-1}}{\gamma_{t}^{-1} - \gamma_{s}^{-1} }}$ and (b) follows by ${\frac{\gamma_{t}^{-1}}{\gamma_{t}^{-1} - \gamma_{s}^{-1} }} \geq 1$ and $e \geq 2$. So rearranging the above result gives
 \begin{equation}\label{eq:alpha}
\alpha \leq \lr{\frac{\eta_s}{\gamma_{t}} - \frac{\eta_s}{\gamma_{s}} + \frac{\eta_{s}}{\eta_{t}}}^2 =  \lr{\eta_{s} (\gamma_t^{-1} - \gamma_s^{-1}) + \frac{\eta_s}{\eta_t}}^2.
 \end{equation}
Now we need to substitute the closed form of learning rates to obtain an upper bound for $\alpha$. As a reminder the learning rates are
\begin{align*}
    &\gamma_{s}^{-1} = \frac{1}{\sqrt{\log K}}\sqrt{\sum_{r=1}^s \sigma_r + \gamma_{0}},~ \eta_{s}^{-1} = \sqrt{s + \eta_{0}}\\
    &\gamma_{t}^{-1} = \frac{1}{\sqrt{\log K}}\sqrt{\sum_{r=1}^{s+d} \sigma_r + \gamma_{0}},~  \eta_{t}^{-1} = \sqrt{s + d + \eta_{0}},
\end{align*}
where $d = t-s$, $\eta_{0} = 10d_{max} + d_{max}^2/\lr{K^{1/3} \log(K)}^2 $, and $\gamma_{0} = 24^2d_{max}^2 K^{2/3} \log(K)$. Therefore in \eqref{eq:alpha} we have 
\begin{align}
    \eta_{s} \lr{\gamma_{t}^{-1} - \gamma_{s}^{-1}}&\leq \eta_{s}\frac{\sum_{r=s+1}^{s+d} \sigma_r}{\sqrt{\log(K) \lr{\sum_{r=1}^{s+d} \sigma_r + \gamma_{0}}}}\notag\\
    &\leq \eta_{s}\frac{\sum_{r=s+1}^{s+d} \sigma_r}{\sqrt{\log(K) \gamma_{0}}}\notag\\
    &\leq \frac{d_{max}^2}{\sqrt{\log(K)\gamma_{0} \eta_{0}}} \leq \frac{d_{max}^2}{\sqrt{24^2d_{max}^4}} = \frac{1}{24},\label{eq:ineq1}
\end{align}
where the third inequality follows by $d, \sigma_r \leq d_{max}$ for all $r$ and $\eta_{s} \leq \frac{1}{\sqrt{\eta_{0}}}$, and the last inequality holds because $\eta_{0} \geq 16 d_{max}^2/K^{2/3}$. On the other hand for $\frac{\eta_{s}}{\eta_{t}}$ in \eqref{eq:alpha} we have
\begin{align}
    \frac{\eta_{s}}{\eta_{t}} = \sqrt{\frac{s+ d +\eta_{0}}{s+\eta_{0}}} &= \sqrt{1 + \frac{d}{s + \eta_{0}}}\notag\\
    &\leq \sqrt{1 + \frac{d}{10d_{max}}}\notag\\
    &= \sqrt{\frac{10d_{max} + d}{10d_{max}}} \leq \sqrt{\frac{11}{10}}\label{eq:ineq2},
\end{align}
where the first and the second inequalities hold because $\eta_{0} \geq 2d_{max}$ and $d \leq d_{max}$, respectively.\\
Plugging \eqref{eq:ineq1} and \eqref{eq:ineq2} into \eqref{eq:alpha} gives us the following bound for $\alpha$.
\begin{equation}
\alpha \leq \lr{\sqrt{\frac{11}{10}} + \frac{1}{24}}^2 \leq \frac{4}{3}\label{eq:deviationregularizerresult}
\end{equation}




\subsubsection{Deviation Induced by the Loss Shift}\label{sec:lossshift}
We know $x_s = \nabla\bar F_{s}^*(-L_s^{obs})$ and $\xtil = \nabla\bar F_{s}^*(-L_t^{obs})$. Since the regularizer is fixed for $t$ and $s$ as $F_s(x) = \sum_{i=1}^K f_s(x_i)$ then for simplicity of the notation we drop $t$ and refer to $f_s(x)$ as $f(x)$. Let $\elltil = L_{t}^{obs} - L_s^{obs}$ then by the KKT conditions $\exists \mu, \mutil$ s.t. $\forall i$:
 \begin{align*}
      f^{'}(x_{s,i}) &= -L_{s,i} + \mu\\
      f^{'}(\xtil_{i}) &= -L_{t,i} + \mutil.
 \end{align*}
From the concavity of $f^{'}(x)$, derived from Fact \ref{fact:0}, we have 
\begin{align}
    (x_{s,i} - \xtil_{i})f^{''}(x_{s,i}) \leq f^{'}(x_{s,i}) - f^{'}(\xtil_{i}) \leq (x_{s,i} - \xtil_{i})f^{''}(\xtil_{i})\label{eq:concave}
\end{align}
 Using left side of \eqref{eq:concave} and the fact that $f^{''}(x_{s,i}) \geq 0$ gives us
 \begin{align}
     &x_{s,i} - \xtil_{i} \leq f^{''}(x_{s,i})^{-1}\lr{\mu - \mutil + \elltil_i} \Rightarrow\notag\\
     &\sum_{i=1}^K x_{s,i} - \xtil_{i} = 0 \leq \sum_{i=1}^K  f^{''}(x_{s,i})^{-1}\lr{\mu - \mutil + \elltil_i} \Rightarrow\notag\\
     & \mutil - \mu \leq \frac{\sum_{i=1}^K  f^{''}(x_{s,i})^{-1} \elltil_i}{\sum_{i=1}^K  f^{''}(x_{s,i})^{-1}}\label{eq:mubound}.
 \end{align}
Using the upper bound for $f^{'}(x_{s,i}) - f^{'}(\xtil_i)$ in \eqref{eq:concave} along with the upper bound for $\mutil - \mu$ and  the fact that $ f^{'}(x_{s,i}) - f^{'}(\xtil_i) = \mu - \mutil + \elltil_i$ result in
\begin{align}
    &(\xtil_{i} - x_{s,i}) f^{''}(\xtil_{i}) \leq \mutil - \mu - \elltil_i \leq \frac{\sum_{j=1}^K  f^{''}(x_{s,j})^{-1} \elltil_j}{\sum_{j=1}^K  f^{''}(x_{s,j})^{-1}}\Rightarrow\notag\\
    &\xtil_{i} \leq x_{s,i} + f^{''}(\xtil_{i})^{-1} \times \frac{\sum_{j=1}^K  f^{''}(x_{s,j})^{-1} \elltil_j}{\sum_{j=1}^K  f^{''}(x_{s,j})^{-1}}\\
    &\xtil_{i} \leq x_{s,i} +  \gamma_s \xtil_{i} \times \frac{\sum_{j=1}^K  f^{''}(x_{s,j})^{-1} \elltil_j}{\sum_{j=1}^K  f^{''}(x_{s,j})^{-1}}\label{eq:mainshift},
\end{align}
 where the last inequality holds because $f^{''}(\xtil_{i})^{-1} = \lr{\eta_s^{-1} \frac{1}{2}\xtil_{i}^{-3/2} + \gamma_s^{-1} \xtil_{i}^{-1}}^{-1}$. The next step in \eqref{eq:mainshift} is to bound $\frac{\sum_{j=1}^K  f^{''}(x_{s,j})^{-1} \elltil_j}{\sum_{j=1}^K  f^{''}(x_{s,j})^{-1}}$ to be able to give an concrete bound for $\xtil_i/x_{s,i}$. \\
 
 We know that $\elltil_i = \sum_{r \in A} \ellhat_{r,i}$ where $A = \lrc{r : s \leq r + d_r < t}$. If there exists $r \in A$ such that $r > s$ and $2x_{r,i} \leq x_{s,i}$, then combining it with the induction assumption on $(t,r), \ie~ $ $x_{t,i} \leq 2x_{r,i}$, leads to have $x_{t,i} \leq 2x_{r,i} \leq x_{s,i}$ which satisfies our desire. Otherwise, assume for all  $r \in A$ we have either $r \leq s$ or $x_{s,i} \leq 2x_{r,i}$. On the other hands, if $r \leq s$ we can use the induction assumption for $(s,r)$ that gives us  $x_{s,i} \leq 2x_{r,i}$. Consequently, we can assume that for all $s \in A$ inequality $x_{s,i} \leq 2x_{r,i}$ holds. Now using Lemma \ref{lemma:aux1} for any $r \in A$ we have 
 \begin{equation}\label{eq:inductionshift}
      \frac{\sum_{j=1}^K  f^{''}(x_{s,j})^{-1} \ellhat_{r,j}}{\sum_{j=1}^K  f^{''}(x_{s,j})^{-1}} \leq 4(K-1)^\frac{1}{3}.
 \end{equation}
 So summing up the above inequality for all $r \in A$ results in  $\frac{\sum_{j=1}^K  f^{''}(x_{s,j})^{-1} \elltil_j}{\sum_{j=1}^K  f^{''}(x_{s,j})^{-1}} \leq 4|A|(K-1)^\frac{1}{3}$. Now it suffices to inject this result to  \eqref{eq:mainshift}:
 \begin{align}
     \xtil_{i} &\leq x_{s,i} + 4 |A| \gamma_s \xtil_{i} (K-1)^\frac{1}{3} \Rightarrow\notag\\
      \xtil_{i} &\leq x_{s,i} \times \lr{\frac{1}{1 - 4 |A| \gamma_s (K-1)^{1/3}}}\label{eq:usefulldeviation}\\
     &\leq x_{s,i} \times \lr{\frac{1}{1 - 8\gamma_s d_{max} (K-1)^{1/3}}}\notag\\
     &\leq x_{s,i} \times \lr{\frac{1}{1 - 8\sqrt{\log K / \gamma_0} d_{max} (K-1)^{1/3}}}= \frac{x_{s,i}}{1 - 1/3} = \frac{3}{2} x_{s,i}\label{eq:lossdeviationresult},
 \end{align}
 where the third inequality uses $|A| \leq d_{max} + t - s \leq 2d_{max}$ and the last one uses the fact that $\gamma_s \leq \sqrt{\log(K)/\gamma_0}$  and $\gamma_0 = 24^2d_{max}^2 (K-1)^{2/3} \log(K)$.
 \\
 Combining \eqref{eq:lossshiftineq} and \eqref{eq:deviationregularizerresult} completes the proof.
\end{proof}

\section{Lower bounds}\label{sec:lowerboundsproof}
\begin{algorithm}
\caption{Adversarial choice of $\ell$}
\label{alg:adv ell}
\DontPrintSemicolon
\KwIn{$x$}
\KwInit{$\cI=\{\argmax_ix_i\}$}
\LinesNumberedHidden
\While{$\sum_{i\in \cI}x_i + \min_{i\in\bar\cI}x_i\leq \frac{2}{3}$ }{
	Update $\cI \leftarrow \cI\cup \{\argmin_{i\in\bar\cI}x_i\}$	\;
}
\Return{ $\ell_i =\begin{cases}\min\{1,\frac{\sum_{i\in\bar\cI}x_i}{\sum_{i\in\cI}x_i}\} &\text{ for }i\in \cI\\
\max\{-1,-\frac{\sum_{i\in\cI}x_i}{\sum_{i\in\bar\cI}x_i}\}&\text{ for }i\in \bar\cI\end{cases}$}
\end{algorithm}

\begin{lemma}
\label{lemma:variance of losses}
For any $x\in\Delta([K])$ such that $\max_ix_i\leq \frac{2}{3}$, the vector $\ell$ returned from Algorithm~\ref{alg:adv ell} satisfies
$\ell\in[-1,1]$, $\ip{x,\ell}=0$ and $\sum_{i=1}^K x_i\ell_i^2 \geq \frac{1}{2}$.
\end{lemma}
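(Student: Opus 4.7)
The plan is to prove all three properties through a case split on whether $A \geq B$ or $A < B$, where $A := \sum_{i \in \cI} x_i$ and $B := \sum_{i \in \bar\cI} x_i = 1-A$. Membership $\ell \in [-1,1]$ is immediate from the $\min\{1,\cdot\}$ and $\max\{-1,\cdot\}$ clipping in the definition of $\ell$. For $\langle x,\ell\rangle = 0$: if $A \geq B$ then $B/A \leq 1$, so the positive branch is uncapped at $B/A$ while the negative branch clips to $-1$, giving $\langle x,\ell\rangle = A(B/A) + B(-1) = 0$; the case $A < B$ is symmetric with the roles of the two branches interchanged.

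For the variance, the same case split yields
\[
\sum_{i=1}^K x_i \ell_i^2 \;=\; \frac{\min(A,B)}{\max(A,B)},
\]
since in the case $A \geq B$ the sum is $A(B/A)^2 + B\cdot 1 = B/A$, and in the case $A < B$ it is $A + B(A/B)^2 = A/B$. This is at least $1/2$ exactly when $A \in [1/3, 2/3]$, so the remaining task is to establish this sandwich bound on $A$.

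The upper bound $A \leq 2/3$ is a loop invariant: at initialization $A = \max_i x_i \leq 2/3$ by hypothesis, and each iteration replaces $A$ by $A + \min_{i \in \bar\cI} x_i$, which is precisely the quantity the loop guard has certified to be at most $2/3$. The lower bound $A \geq 1/3$ is the subtle part. From the loop exit condition we have $A + \min_{i\in\bar\cI} x_i > 2/3$, so it suffices to show $\min_{i\in\bar\cI} x_i \leq 1/3$, with one edge case to check separately.

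To control $\min_{i\in\bar\cI} x_i$, I will use the fact that $\cI$ always consists of the argmax plus a (possibly empty) suffix of the sorted sequence, since the loop adds $\arg\min_{i\in\bar\cI} x_i$ each time. Hence $\bar\cI$ consists of the $2$nd through $(|\bar\cI|+1)$-th largest values. When $|\bar\cI| \geq 2$, its minimum is bounded by the third-largest value $x_{(3)}$, which satisfies $3x_{(3)} \leq x_{(1)} + x_{(2)} + x_{(3)} \leq 1$, giving $x_{(3)} \leq 1/3$. The residual case $|\bar\cI| = 1$ must be handled separately: then $\bar\cI$ contains only the second-largest element $x_{(2)}$, so $A = 1 - x_{(2)} \geq 1 - \max_i x_i \geq 1/3$ directly from the hypothesis $\max_i x_i \leq 2/3$. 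I expect this identification of the argmax-plus-smallest-suffix structure of $\cI$, together with the separate treatment of the $|\bar\cI| = 1$ edge case, to be the main obstacle in the proof.
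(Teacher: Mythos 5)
Your proposal is correct and follows the same overall plan as the paper's proof: reduce the variance claim to $\sum_i x_i\ell_i^2 = \min(A,B)/\max(A,B)$ and then sandwich $A = \sum_{i\in\cI}x_i$ in $[1/3,2/3]$. Where you diverge is in the lower bound $A\geq 1/3$. The paper's argument is shorter: since $\cI$ always contains the argmax, every coordinate---in particular $\min_{i\in\bar\cI}x_i$---is at most $\max_i x_i \leq A$; so if $A<1/3$ the loop guard $A+\min_{i\in\bar\cI}x_i \leq 2/3$ would still hold and the algorithm could not have returned. This sidesteps the sorted-order bookkeeping (the ``argmax plus smallest suffix'' structure and the $x_{(3)}\leq 1/3$ bound) and the separate $|\bar\cI|=1$ edge case that your version needs. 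On the other hand, your proof is more explicit than the paper's in one respect: the paper only writes out the computation $p + (1-p)(p/(1-p))^2 = p/(1-p)$, which is the $A\leq 1/2$ branch, and never explicitly verifies the upper bound $A\leq 2/3$ needed for the symmetric case; you handle both branches and establish $A\leq 2/3$ cleanly as a loop invariant. Both proofs are valid; the paper's lower-bound step is the cleaner argument, while your write-up is the more complete one.
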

\begin{proof}
The first two properties follow directly from construction.
For the third property we bound the ratio of the two sets.
Assume $\sum_{i\in \cI}x_i<\frac{1}{3}$, then $\argmin_{i\in\bar\cI}x_i<\frac{1}{3}$ and the algorithm does not return yet.
The quantity in question is therefore bounded by
\begin{align*}
\sum_{i=1}^K x_i\ell_i^2 = \sum_{i\in\cI} x_i\ell_i^2+\sum_{i\in\bar\cI} x_i\ell_i^2 = p + (1-p)\left(\frac{p}{1-p}\right)^2 = \frac{p}{1-p}\geq \frac{1}{2}\,.
\end{align*}
\end{proof}
\begin{claim}
\label{clm:negentropy}
For the negentropy potential $F(x)=\eta^{-1}\sum_{i=1}^K\log(x_i)x_i$, it holds that
\begin{align*}
-\overline F^*(-L) -\min_i L_i = \eta^{-1}\log(\max_i \nabla \overline F^*(-L)_i)\,. 
\end{align*}
\end{claim}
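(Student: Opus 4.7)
The plan is to unwind the definitions directly: compute $\overline F^*(-L)$ explicitly via the standard softmax formula for the negentropy on the simplex, read off the gradient, and then rearrange. Since $F(x)=\eta^{-1}\sum_i x_i\log x_i$ is strictly convex and the constraint $\sum_i x_i = 1$ is a single affine constraint, the supremum defining $\overline F^*(-L)$ is attained at a unique interior point, so I will just solve the KKT conditions.

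First I will write the Lagrangian $\mathcal L(x,\lambda) = -\langle L,x\rangle - \eta^{-1}\sum_i x_i\log x_i - \lambda(\sum_i x_i - 1)$ and solve $\partial_{x_i}\mathcal L = 0$, which gives the explicit maximizer
\[
x_i^\star = \frac{e^{-\eta L_i}}{Z}, \qquad Z := \sum_{j=1}^K e^{-\eta L_j}.
\]
By envelope theorem (or the standard conjugate duality for the negentropy on the simplex), $\nabla \overline F^*(-L)_i = x_i^\star$. Plugging $x^\star$ back into the definition and using $\log x_i^\star = -\eta L_i - \log Z$ gives
\[
\overline F^*(-L) \;=\; -\langle L,x^\star\rangle - \eta^{-1}\sum_i x_i^\star(-\eta L_i-\log Z) \;=\; \eta^{-1}\log Z.
\]

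Second, since $\max_i x_i^\star = e^{-\eta \min_i L_i}/Z$, taking $\eta^{-1}\log$ yields
\[
\eta^{-1}\log\!\bigl(\max_i \nabla \overline F^*(-L)_i\bigr) \;=\; -\min_i L_i - \eta^{-1}\log Z \;=\; -\min_i L_i - \overline F^*(-L),
\]
which is exactly the claimed identity after rearrangement.

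There is no real obstacle here beyond carefully verifying the conjugate formula for negentropy on the simplex; the only thing worth being a touch careful about is checking that the softmax point is indeed the argmax (strict convexity of $F$ on the relative interior plus coercivity at the boundary via $x\log x \to 0$ handles this), and that the gradient of $\overline F^*$ really is this maximizer, which is the standard Legendre correspondence. The final identity is then a one-line algebraic manipulation using $\max_i e^{-\eta L_i} = e^{-\eta\min_i L_i}$.
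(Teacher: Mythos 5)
Your proof is correct and follows essentially the same route as the paper's: both obtain the softmax form of $\nabla\overline F^*(-L)$, compute $\overline F^*(-L) = \eta^{-1}\log\sum_j e^{-\eta L_j}$ by plugging the maximizer into the conjugate definition, and rearrange using $\max_i e^{-\eta L_i}=e^{-\eta\min_i L_i}$. The only cosmetic difference is that you derive the softmax point from the Lagrangian/KKT conditions, whereas the paper simply cites it as the well-known exponential weights distribution.
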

\begin{proof}
Denote $i^*=\argmin_{i\in[K]}L_i$. It is well known that the exponential weights distribution is $(\nabla\overline F^*(-L))_i=\exp(-
\eta L_i)/(\sum_{j\in[K]})\exp(-
\eta L_j)$. Therefore the 
negentropy has an explicit form of the constrained convex conjugate: 
\begin{align*}
\overline F^*(-L) &= 
\ip{\nabla\overline F^*(-L),-L}-F(\nabla\overline F^*(-L))=\eta^{-1}\log(\sum_{i=1}^K\exp(-\eta L_i))
\,.
\end{align*}
Hence
\begin{align*}
-\overline F^*(-L) -L_{i^*} &=  -\eta^{-1}\log\left(\sum_{i=1}^K\exp(-\eta L_i)\right) + \eta^{-1}\log(\exp(-\eta L_{i^*}))\\
&=-\eta^{-1}\log\left(\frac{\sum_{i=1}^K\exp(-\eta L_i)}{\exp(-\eta L_{i^*})}\right)=\eta^{-1}\log\left(\nabla \overline F^*(-L)_{i^*}\right)
\end{align*}
\end{proof}

\begin{proof}[Proof of Theorem~\ref{thm:full-info var range}]
For ease of presentation, we will work with loss ranges $[-L_t/2,L_t/2]$, which is equivalent to loss ranges of $[0,L_t]$ in full-information games.
Assume that
\begin{align*}
    \frac{1}{2}\sum_{t=1}^{\lfloor\log_2(K)\rfloor}L_t \geq \frac{1}{32}\sqrt{\sum_{t=\lfloor\log_2(K)\rfloor}^T L_t^2\log(K) }\,.
\end{align*}
Define the active set $\cA_1 = [K]$.
At any time $t$, if $L_t$ is not among the largest loss ranges, we set $\ell_t$ to 0 and proceed with $\cA_{t+1}=\cA_{t}$.
Otherwise if $t\in\rho([\lfloor\log_2(K)\rfloor])$, we randomly select half of the arms in $\cA_t$ to assign $\ell_{t,i} = -L_t/2$ and the other half $\ell_{t,i} = L_t/2$. (In case of an uneven number $|\cA_t|$ we leave one arm at $0$.) All other losses are $0$. We reduce $\cA_{t+1}=\{i\in\cA_t\,|\,\ell_{t,i}<0\}$ to the set of arms that were negative.
The set $\cA_n$ will not be empty since we can repeat halving the action set exactly $\lfloor\log_2(K)\rfloor$ many times. The expected loss of any player is always 0, while the loss of the best arm is 
$\min_{a}\sum_{t=1}^T\ell_{t,a}=-\sum_{t=1}^{\lfloor\log_2(K)\rfloor}L_t/2$, hence
\begin{align*} 
\R^*\geq \sum_{t=1}^{\lfloor\log_2(K)\rfloor}L_t/2\,.
\end{align*}

It remains to show the case 
\begin{align*}
    \frac{1}{2}\sum_{t=1}^{\lfloor\log_2(K)\rfloor}L_t < \frac{1}{32}\sqrt{\sum_{t=\lfloor\log_2(K)\rfloor}^T L_t^2\log(K) }\,.
\end{align*}
In this case, note that we have
\begin{align}
    \sqrt{\sum_{t=\lfloor\log_2(K)\rfloor}^T L_t^2/\log(K) } > \frac{16}{\log(K)}\sum_{t=1}^{\lfloor\log_2(K)\rfloor}L_t > 16\frac{{\lfloor\log_2(K)\rfloor}}{\log(K)} L_{\lfloor\log_2(K)\rfloor}>8L_{\lfloor\log_2(K)\rfloor}\,.\label{eq:eta bound}
\end{align}
The high level idea is now to create a sequence of losses adapted to the choices of the algorithm.
Let $x_{ti} = \E[I_t=i|\ell_{t-1},\dots,\ell_{1}]$ be the expected trajectory of the algorithm
and let $z_{ti} = \exp(-\eta L_{ti})/\sum_{j=1}^K\exp(-\eta L_{tj})$ for
 $L_t = \sum_{s=1}^{t-1}\ell_t$ be the trajectory of EXP3.
 We show that it is possible to choose $\ell_t$ such that $0=\ip{z_t,\ell_t}\leq \ip{x_t,\ell_t}$, i.e. the regret of the algorithm cannot be smaller than that of EXP3. Finally we show that the construction of the losses ensures that the regret of EXP3 is lower bounded by the right quantity.
 Set $\eta = \sqrt{\log(K)/(\sum_{t=\lfloor\log_2(K)\rfloor}^TL_t^2)}$, and let the adversary follow Algorithm~\ref{alg:adversary} for the selection of the losses.
Denote $\tau:=\argmax\{t\in[T+1]\,|\,\ell_{t-1} \neq 0\}$, then the regret of algorithm $\cA$ can be bounded as
\begin{align*}
{Reg}_T(\cA) &= \sum_{t=1}^T \ip{x_t,\ell_t}-\min_{a^*\in\Delta([K])}\ip{a^*, L_{T+1}} \geq \sum_{t=1}^T \ip{z_t,\ell_t}-\min_{a^*\in\Delta([K])}\ip{a^*, L_{T+1}} = -\min_{a^*\in\Delta([K])}\ip{a^*, L_{T+1}}\,,
\end{align*}
where we use $\ip{x_t,\ell_t}\geq \ip{z_t,\ell_t}=0  $ by construction,
By expansion and Claim~\ref{clm:negentropy}, we have
\begin{align*}
    -\min_{a^*\in\Delta([K])}\ip{a^*, L_{T+1}}&=\eta^{-1}\log(K)-\eta^{-1}\log\left(\sum_{i=1}^K\exp(-\eta L_{T+1,i})\right)-\min_{a^*\in\Delta([K])}\ip{a^*, L_{T+1}} \\
    &\qquad+ \sum_{t=1}^T \eta^{-1}\log\left(\sum_{i=1}^K\exp(-\eta L_{t+1,i})\right)-\eta^{-1}\log\left(\sum_{i=1}^K\exp(-\eta L_{t,i})\right)\\
    &=\eta^{-1}\log(K)+\eta^{-1}\log(\max_{i\in[K]}z_{T+1,i})+\sum_{t=1}^T\eta^{-1}\log(\sum_{i=1}^Kz_{ti}\exp(-\eta\ell_{ti})
\end{align*}
The learning rate and setting the largest $\log_2(K)$ loss ranges to zero ensures that $|\eta\ell_{ti}|\leq \frac{1}{2}\eta L_{\lfloor\log_2(K)\rfloor}\leq \frac{1}{2}$.
Using that by Taylor's theorem and the monotonicity of the second derivative of $\exp$, we have for all $x\geq -\frac{1}{2}$: $\exp(x)\geq 1+x+\frac{1}{2}\exp''(-\frac{1}{2})x^2\geq 1+x+\frac{3}{10}x^2$, as well as by concavity of $\log$ for all $0\leq x \leq \frac{1}{4}$ we have $\log(1+x)\geq 4\log(5/4) x\geq \frac{5}{6}x$, we get for any $t\in[T]$ by Lemma~\ref{lemma:variance of losses}
\begin{align*}
    \eta^{-1}\log(\sum_{i=1}^Kz_{ti}\exp(-\eta\ell_{ti})\geq \eta^{-1}\log(1+\eta^2\frac{3}{10}\sum_{i=1}^Kz_{ti}\ell_{ti}^2)\geq \frac{\eta}{4}\sum_{i=1}^Kz_{ti}\ell_{ti}^2\leq \mathbb{I}\{\max_iz_{ti}\leq\frac{2}{3}\}\frac{\eta}{32}L_{\rho^{-1}(t)}^2\,.
\end{align*}
Now we have two possible event, either $\forall t\in[T]: \max_iz_{ti}\leq\frac{2}{3}$ and
\begin{align*}
    {Reg}_T(\cA)\geq \frac{\eta}{32}\sum_{t=\lfloor\log_2(K)\rfloor}L_t^2=\frac{1}{32}\sqrt{\sum_{t=\lfloor\log_2(K)\rfloor}L_t^2\log(K)}\,,
\end{align*}
or $\max_iz_{T+1,i}>\frac{2}{3}$ and
\begin{align*}
    {Reg}_T(\cA)\geq \eta^{-1}(\log(K)+\log(2/3))\geq \frac{1}{32}\eta^{-1}\log(K)=\frac{1}{32}\sqrt{\sum_{t=\lfloor\log_2(K)\rfloor}L_t^2\log(K)}\,.
\end{align*}
\end{proof}
\begin{algorithm}
\caption{Adversary}
\label{alg:adversary}
\KwIn{Actor $\cA$, learning rate $\eta$}
\DontPrintSemicolon
\LinesNumberedHidden
\For{$t= 1,\ldots,n$}{
	Set $\forall i: z_{ti}=\exp(-\eta L_{ti})/\sum_{j=1}^K\exp(-\eta L_{tj})$	\;
	\If{$\max_{i\in[K]}z_{ti}> \frac{2}{3}$ or $\rho(t)\leq \lfloor\log_2(K)\rfloor$}{$\ell_t=0$\;}
	\Else{
	Get $\ell$ from Algorithm~\ref{alg:adv ell} with $x=z_t$.\;
	Determine $x_t = \E[\cA((\ell_s)_{s=1}^{t-1})]$\;
	Set $\ell_t = \operatorname{sign}(\ip{x_t,\ell})L_{\rho^{-1}(t)}\ell/2$\;
}
}
\end{algorithm}


\newpage

\section{Detailed Regret Bound for Algorithm~\ref{alg:FTRL-delay} }\label{sec:analysistheorem1}
In this section we provide a detailed regret bound for Algorithm~\ref{alg:FTRL-delay}.\\
As we proved in Section~\ref{sec:proof} we have the following inequality for the drifted regret:
\begin{equation}\label{eq:driftedregretinequality}
    \Reg \leq 2\Regdrift + d_{max}
\end{equation}
So we start give bound for the drifted regret by following dividing the drifted regret to stability and penalty term as mentioned in Section~\ref{sec:proof}. Following the general analysis of the penalty term for FTRL \citet{abernethy2015} we have
\[
penalty \leq \sum_{t=2}^{T}\lr{ F_{t-1}(x_{t}) - F_{t}(x_{t})} +  F_T(x^*) - F_1(x_{1}),
\]
which gives us
\begin{align}
     penalty &= \sum_{t=2}^{T} \lr{2(\sum_{i=1}^K x_{t,i}^{\frac{1}{2}}-1) (\eta_t^{-1} - \eta_{t-1}^{-1}) - \sum_{i=1}^K x_{t,i}\log(x_{t,i}) (\gamma_t^{-1} - \gamma_{t-1}^{-1})} - 2\eta_1^{-1} + 2\sqrt{K}\eta_1^{-1} + \gamma_1^{-1}\log K \notag\\
    &\leq \sum_{t=2}^{T} \lr{2\sum_{i\neq i^*} x_{t,i}^{\frac{1}{2}} (\eta_t^{-1} - \eta_{t-1}^{-1}) - \sum_{i=1}^K x_{t,i}\log(x_{t,i}) (\gamma_t^{-1} - \gamma_{t-1}^{-1}) } + 2\sqrt{\eta_0(K-1)} + \sqrt{\gamma_0 \log K}\notag\\
    &\leq \sum_{t=2}^{T} \lr{2\sum_{i\neq i^*} \eta_{t} x_{t,i}^{\frac{1}{2}} - \sum_{i=1}^K  \frac{\sigma_t \gamma_t x_{t,i}\log(x_{t,i})}{\sqrt{\log K}} } + 2\sqrt{\eta_0(K-1)} + \sqrt{\gamma_0 \log K}\label{eq:penaltyfinalbound}
\end{align}
where the first inequality holds because $x_{t,i^*}^{\frac{1}{2}} \leq 1$ and the second inequality follows by $\eta_t^{-1} - \eta_{t-1}^{-1} = \sqrt{t+\eta_0} - \sqrt{t-1 + \eta_0} \leq \frac{1}{\sqrt{t+\eta_0}} = \eta_t$ and  $\gamma_t^{-1} - \gamma_{t-1}^{-1} = \frac{\gamma_t^{-2} - \gamma_{t-1}^{-2}}{\gamma_{t}^{-1} + \gamma_{t-1}^{-1}} \leq  \frac{\gamma_t^{-2} - \gamma_{t-1}^{-2}}{\gamma_{t}^{-1}} $.\\

For the stability term, we start from the bound given by Lemma~\ref{lemma:stabilityarb}:

\begin{align}
\EE[stability] &\leq \sum_{t=1}^T \sum_{i \neq i^*} 2\gamma_t (a_t-1)a_t \EE[x_{t,i}]\Delta_i  +  \sum_{t=1}^T \sum_{s \in A_t}  \sum_{i = 1}^K \eta_t \EE[x_{t,i}^{3/2} x_{s,i}^{-1} (1 - x_{s,i})]\label{eq:stabilitylemmabound}
\end{align}
In above inequality we know $a_t x_{t,i} = \sum_{s \in A_t} x_{t,i} $ and using Lemma~\ref{lemma:key} gives us $x_{t,i} \leq 2x_{s,i}$ for $s \in A_t$ then for the first part \eqref{eq:stabilitylemmabound}:
\begin{equation}\label{eq:stabilitypart1}
\sum_{t=1}^T \sum_{i \neq i^*} 2\gamma_t (a_t-1) a_t x_{t,i}\Delta_i \leq  \sum_{t=1}^T \sum_{i \neq i^*} \sum_{s \in A_t} 4\gamma_t (a_t-1)a_t x_{s,i}\Delta_i =  \sum_{t=1}^T \sum_{i \neq i^*} 4\gamma_{t+d_t} (a_{t+d_t}-1) x_{t,i}\Delta_i
\end{equation}
Furthermore, we can bound $x_{t,i}^{3/2} x_{s,i}^{-1} (1 - x_{s,i}) \leq 2^{3/2}x_{s,i}^{1/2} (1- x_{s,i})$. Moreover, in order to remove the best arm $i^*$ from the summation in the later bound we use $x_{t,i^*}^{3/2}x_{s,i^*}^{-1} (1- x_{s,i^*}) \leq 2 \sum_{i\neq i^*} x_{s,i} \leq \sum_{i\neq i^*} 2x_{s,i}^{1/2}$. So for the second part of  \eqref{eq:stabilitylemmabound} we have
\begin{align}
     \sum_{t=1}^T \sum_{s \in A_t}  \sum_{i = 1}^K \eta_t x_{t,i}^{3/2} x_{s,i}^{-1} (1 - x_{s,i}) &\leq \sum_{t=1}^T \sum_{s \in A_t}  \sum_{i = 1}^K \eta_t 2^{3/2}x_{s,i}^{1/2} (1 - x_{s,i})\notag\\
     &\leq   \sum_{t=1}^T \sum_{s \in A_t}  \sum_{i \neq i^*} \sqrt{8} \eta_t  x_{s,i}^{1/2} + \sum_{t=1}^T \sum_{s \in A_t} \sum_{i\neq i^*} 2\eta_t x_{s,i}^{1/2} \notag\\
     &\leq \sum_{t=1}^T   \sum_{i \neq i^*} 5 \eta_t   x_{t,i}^{1/2}\label{eq:stabilitypart2},
\end{align}
where the last inequality follows by the facts that we can change the order of the summations and that each $t$ belongs to exactly one $A_s$.
Plugging \eqref{eq:stabilitypart1} and \eqref{eq:stabilitypart2} into \eqref{eq:stabilitylemmabound} we have
\begin{equation}\label{eq:stabilityfinalbound}
    \EE[stability] \leq \EE\lrs{\sum_{t=1}^T \sum_{i \neq i^*} 4\gamma_{t+d_t} (a_{t+d_t}-1) x_{t,i}\Delta_i + \sum_{t=1}^T   \sum_{i \neq i^*} 5 \eta_t   x_{t,i}^{1/2}}.
\end{equation}
Now it suffices to combine \eqref{eq:stabilityfinalbound}, \eqref{eq:penaltyfinalbound} and \eqref{eq:driftedregretinequality} to get
\begin{align}\label{eq:concretemainbound}
 \Reg &\leq \EE\lrs{\underbrace{\sum_{t=1}^T \sum_{i \neq i^*} 14 \eta_t x_{t,i}^{1/2}}_{A} + \underbrace{\sum_{t=1}^T \sum_{i \neq i^*} 8\gamma_{t+d_t} (a_{t+d_t}-1) x_{t,i}\Delta_i}_B + \underbrace{\sum_{t=2}^{T}\sum_{i=1}^K  \frac{2\sigma_t \gamma_t x_{t,i}\log(1/x_{t,i})}{\log K}}_C}\notag\\
    &+  \underbrace{4\sqrt{\eta_0(K-1)} + 2\sqrt{\gamma_0 \log K} + d_{max}}_{D}.
\end{align}

 We rewrite the regret as 
 \[
 \Reg = 4\Reg -3\Reg \leq 4A - \Reg + 4B - \Reg + 4C - \Reg + 4D,
 \]
 in which by applying Lemmas~\ref{lemma:Abound},\ref{lemma:Bbound} and \ref{lemma:Cbound} we achieve 

\begin{align*}
    4A - \Reg &\leq  \sum_{i \neq i^*} \frac{56^2}{\Delta_i}\log (T/\eta_0 + 1)\\
    4B - \Reg &\leq 2\times32^2 a_{max}\log K\\
    4C - \Reg &\leq \sum_{i\neq i^*} \frac{16^2\sigma_{max}}{\Delta_i \log K}.
\end{align*}
Therefore the final regret bound is
\begin{align*}
\Reg &\leq \sum_{i \neq i^*} \frac{56^2}{\Delta_i}\log (T/\eta_0 + 1) + 2048 a_{max}\log K + \sum_{i\neq i^*} \frac{256\sigma_{max}}{\Delta_i \log K}\\
&+ 16\sqrt{\eta_0 (K-1)} + 8\sqrt{\gamma_0 \log K} + 4d_{max}.
\end{align*}

\section{Removing $1/\Delta_i$ from $\sigma_{max}/\Delta_i$ in the Regret Bound}\label{sec:removingdelta}
In this section we discuss how having an asymmetric learning rate $\gamma_{t,i} \simeq \gamma_t / \sqrt{\Delta_i}$ for negative entropy regularizer allows to remove the factor $\sum_{i\neq i^*} 1/\Delta_i$ in front of $\sigma_{max}$ in the regret bound.

In the analysis of Algorithm~\ref{alg:FTRL-delay} we divided the regret into stability and penalty expressions. Moreover, in each of the bounds for stability and penalty we have two terms which correspond to negative entropy and Tsallis parts of the hybrid regularizer. The terms related to negative entropy part in both stability and penalty bounds are
\[
\underbrace{\sum_{t=1}^T \sum_{i \neq i^*} \gamma_{t+d_t} (a_{t+d_t}-1) \EE[x_{t,i}]\Delta_i}_{B} + \underbrace{\sum_{i=1}^K \EE[x_{t,i}\log(1/x_{t,i})] (\gamma_t^{-1} - \gamma_{t-1}^{-1})}_{C},
\]
where $B$ and $C$, as we have seen in Section~\ref{sec:proof}, are due to stability and penalty terms,respectively. The idea here is to scale-up $\gamma_t$ to decrease $C$, however increasing $\gamma_t$ increases $B$. Hence, we are facing a trade off here. To deal with this trade-off we change the learning rates for negative entropy from symmetric $\gamma_{t}$ to asymmetric $\gamma_{t,i}$, and we expect this change only affect the parts of regret bound come from the negative entropy part of the regularizer, which are $B$ and $C$. This change results in to having two following terms instead,
\[
\underbrace{\sum_{t=1}^T \sum_{i \neq i^*} \gamma_{t+d_t,i} (a_{t+d_t}-1) \EE[x_{t,i}]\Delta_i}_{B_{new}} + \underbrace{\sum_{i=1}^K \EE[x_{t,i}\log(1/x_{t,i})] (\gamma_{t,i}^{-1} - \gamma_{t-1,i}^{-1})}_{C_{new}}.
\]
Here if we could choose $\gamma_{t,i} = \gamma_t / \sqrt{\Delta_i}$, then using the definition of $\gamma_t$ we would be able to rewrite  $B_{new}$ and $C_{new}$ as
\begin{align*}
B_{new} &= \Ocal \lr{\sum_{t=1}^T \sum_{i \neq i^*} \gamma_{t+d_t} (a_{t+d_t}-1) \EE[x_{t,i}]\sqrt{\Delta_i}}\\
C_{new} &= \Ocal \lr{\sum_{i=1}^K  \frac{\sigma_t \gamma_t \EE[x_{t,i}\log(1/x_{t,i})]\sqrt{\Delta_i}}{\sqrt{\log K}}}.
\end{align*}
Now we must see what is the result of applying the self-bounding technique on these new terms. For $B_{new}$ and $C_{new}$, following the similar analysis as Lemma~\ref{lemma:Bbound} and Lemma~\ref{lemma:Cbound} we can get 
\begin{align*}
4B_{new}  - \Reg &= \Ocal(a_{max} \log K) = \Ocal(d_{max} \log K) \\
4C_{new}  - \Reg &= \Ocal(\frac{\sigma_{max}}{\log K}).
\end{align*}
This implies that injecting $\sqrt{1/\Delta_i}$ in the negative entropy learning rates removes the factor $\sum_{i\neq i^*} \frac{1}{\Delta_i}$ in front of the $\sigma_{max}$. More interestingly this comes without having any significant changes in the other terms of regret bound.\\
As a result, we conjecture that replacing a good estimation of the suboptimal gaps namely $\hat \Delta_i$ in $\gamma_{t,i}$ as $\gamma_{t,i}= \gamma_t / \sqrt{\hat \Delta_i}$ might be also helpful to remove the multiplicative factors related to suboptimal gaps in front of the  $\sigma_{max}$. We leave this problem to future work. 

\end{document}